\documentclass{article}

\PassOptionsToPackage{numbers, compress}{natbib}
 \usepackage[final]{neurips2023}

\usepackage[utf8]{inputenc} % allow utf-8 input
\usepackage[T1]{fontenc}    % use 8-bit T1 fonts
\usepackage{url}            % simple URL typesetting
\usepackage{booktabs}       % professional-quality tables
\usepackage{amsfonts}       % blackboard math symbols
\usepackage{nicefrac}       % compact symbols for 1/2, etc.
\usepackage{microtype}      % microtypography
 \usepackage{afterpage}      % microtypography
\usepackage{etoc}

\usepackage{amsfonts,epsfig}
 \usepackage{xcolor}
\definecolor{red}{RGB}{105, 0, 0}
\definecolor{blue}{RGB}{120,81,169}
\definecolor{green}{RGB}{34, 139, 34}

\definecolor{darkblue}{rgb}{0,0.09,0.45}
\usepackage{hyperref}
\hypersetup{
	colorlinks=true,
	linkcolor=darkblue,
	citecolor=darkblue,
	filecolor=darkblue,
	urlcolor=darkblue,
}

\usepackage{blindtext}

\usepackage{relsize}
\usepackage{fancyvrb}
\usepackage{amssymb}
\usepackage{subcaption}
\usepackage{amsmath}
\DeclareMathOperator{\E}{\mathbb{E}}

\DeclareMathOperator{\R}{\mathbb{R}}
\DeclareMathOperator{\n}{\mbox{normal}}
\DeclareMathOperator{\KL}{\mathrm{KL}}
\usepackage[titlenumbered,ruled]{algorithm2e}

\usepackage[normalem]{ulem}
\usepackage{bbm}

\usepackage{tikz}
\usetikzlibrary{fit}
\usetikzlibrary{decorations.pathreplacing, calc}
\usepackage{wrapfig}

 \newcommand{\tikzmark}[1]{\tikz[overlay,remember picture] \node (#1) {};}

\usepackage{algpseudocode}

\usepackage{mathtools}
\usepackage{amsthm}
\newtheorem{theorem}{Theorem}
\newtheorem{lemma}[theorem]{Lemma}

\title{Discriminative Calibration: Check Bayesian Computation from Simulations and Flexible Classifier}
\author{
  	Yuling Yao\\
  Flatiron Institute, \\
  New York, NY 10010. \\
  \texttt{yyao@yyao.dev}
	\And
    Justin Domke\\
    University of Massachusetts,\\
    Amherst, MA 01002.\\
    \texttt{domke@cs.umass.edu}
}
\begin{document}
\maketitle
\begin{abstract}
To check the accuracy of Bayesian computations, it is common to use rank-based simulation-based calibration (SBC). However, SBC has drawbacks: The test statistic is somewhat ad-hoc, interactions are difficult to examine, multiple testing is a challenge, and the resulting p-value is not a divergence metric. We propose to replace the marginal rank test with a flexible classification approach that learns test statistics from data. This measure typically has a higher statistical power than the SBC test and returns an interpretable divergence measure of miscalibration, computed from classification accuracy. This approach can be used with different data generating processes to address simulation-based inference or traditional inference methods like Markov chain Monte Carlo or variational inference. We illustrate an automated implementation using neural networks and statistically-inspired features, and validate the method with numerical and real data experiments.
\end{abstract} 	
\section{Introduction}
Simulation based calibration (SBC) is a default approach to diagnose Bayesian computation.
SBC was originally designed to validate if computer software accurately draws samples from the exact posterior inference, such as  Markov chain Monte Carlo (MCMC, \cite{cook2006validation, talts2018validating, modrak2022simulation})  and variational inference \citep{yao2018yes}. With recent advances in amortized and  simulation-based inferences \citep{cranmer2020frontier} and growing doubt on the sampling quality \citep{lueckmann2021benchmarking, hermans2022trust},
there has also been an increasing trend to apply SBC to likelihood-free inference such as approximate Bayesian computation \citep{yu2021assessment} and normalizing-flow-based \citep{papamakarios2021normalizing, agrawal2020advances} neural posterior sampling \citep{linhart2022validation, lemos2023sampling},
with a wide range of applications in  science \citep{gonccalves2020training,  dax2021real, regaldo2023simbig}.

Bayesian computation tries to sample from the posterior distribution $p(\theta|y)$ given data $y$. We work with the general setting where it may or may not be possible to evaluate the likelihood $p(y|\theta)$. Suppose we have an inference algorithm or software $q(\theta|y)$ that attempts to approximate $p(\theta|y)$, and we would like to assess if this $q$ is calibrated, meaning if  $q(\theta|y)= p(\theta|y)$ for all possible $\theta, y$.
Simulation based calibration involves three steps: 
First, we  draw a $\textcolor{red}\theta$ from the prior distribution $p(\theta)$.
Second, we simulate a synthetic observation $y$ from the data model $p(y|\textcolor{red}\theta)$.
Third,  given $y$ we draw a size-$M$ posterior sample $\textcolor{blue}{\tilde \theta_{1}, \dots, \tilde \theta_{M}}$ from the inference engine $q(\theta|y)$ that we need to diagnose. 
SBC traditionally computes the rank statistic of the prior draw $\theta$ among the $q$ samples, i.e. $r= \sum_{m=1}^M \mathbbm{1}(\textcolor{red}\theta \leq  \textcolor{blue}{\tilde \theta_{m}})$. If the inference $q$ is calibrated,  then given $y$ both $\theta$ and $\tilde \theta_{m}$ are from the same distribution $p(\theta|y)$ , hence with repeated simulations  of $(\theta, y)$, we should expect such rank statistics $r$ to appear uniform, which can be checked by a histogram visualization or a formal  uniformity test.  

Despite its popularity, rank-based SBC has limitations: 
(i) We only compute the \emph{rank} of univariate parameters. In practice, $\theta$ and $y$ are high dimensional. We typically run SBC on each component of $\theta$ separately, this creates many marginal histograms and does not diagnose the joint distribution or interactions.  We may compare ranks of  some one-dimensional test statistics, but there is no method to find the best summary test statistic.
(ii) As long as we test multiple components of $\theta$ or test statistics,  directly computing the uniformity $p$-values is invalid (subject to false discovery)  unless we make a \emph{multiple-testing} adjustment, which drops  the test power (subject to false negative) in high dimensions.
(iii) Often we know inference is approximate, so the final goal of  diagnostic is not to reject the null hypothesis of perfect calibration but to measure the \emph{degree of miscalibration}.  The $p$-value is not such a measure: neither can we conclude an inference with $p=.02$ is better than $p=.01$, nor connect the $p$-value with the posterior inference error.  The evidence lower bound, a quantity common in variational inference, also does not directly measure the divergence due to the unknown entropy.

\begin{figure}
\vspace{-0.6em}
 \begin{tikzpicture} 
  \node (theta) at (0,0) {$\theta_i$};
  \node (y) at (1,0) {$y_i$};
  \node (theta1) at (3,0.8) {$\tilde{\theta}_{i1}$};
  \node (theta2) at (3,0.4) {$\tilde{\theta}_{i2}$};
  \node (theta3) at (3,0) {$\cdots$};
  \node (theta4) at (3,-0.4) {$\tilde{\theta}_{iM}$};
  \draw[->] (theta) -- (y);
  \draw[->] (y) -- (theta1);
  \draw[->] (y) -- (theta2);
  \draw[->] (y) -- (theta3);
  \draw[->] (y) -- (theta4);
   \node[draw,fit=(theta) (y) (theta1) (theta2) (theta3) (theta4),inner sep=0mm] {};

   \node (phi) at (0.58,0.94) {\scriptsize \color{darkblue} simulation table}; 
\node (phi) at (7,1.2) {\scriptsize  \color{darkblue} classification examples}; 

 \node (phi) at (4.6,0.5) {label mapping};
  \node (phi) at (4.6,0.2) {$\Phi$};
  \node (k1) at (6.6,0.8) {$t_1,  \phi_1$};
  \node (k2) at (6.6,0.4) {$t_2, \phi_2$};
  \node (k3) at (6.6,-0.08) {$\cdots$};
  \node (k4) at (6.6,-0.7) {$t_K, \phi_K$};
     \node[draw,fit= (k1) (k4),inner sep=0mm ] {};
 \node (midpoint1) at (3.4,0) {};
  \node (midpoint2) at (6.1,0) {};
  \draw[thick, double, ->]  (midpoint1) -- (midpoint2);
 \node (dots1) at (0,-0.8) {\vdots};
 \node (dots2) at (1,-0.8) {\vdots};
 \node (dots3) at (3,-0.8) {\vdots};
 \node (dots4) at (6.5,-1.2) {\vdots};
 \node (index) at (0.5,-1.4) {\footnotesize $i=1, \dots, S$};
   \node[draw,fit=(theta) (y) (theta1) (theta2) (k1) (k4)(index)(dots4),inner sep=0.8mm, yshift=1mm, xshift=1.7mm] {};
\draw [thick, decorate, decoration = {brace}] (7.35,1) --  (7.35,-1.6);
 \node (phi) at (8.35,0.2) {classifier};

  \node (phi) at (9,-1.1) {feature engineering};
    \node (phi) at (9.3,-1.45) {log $p$, log $q$, rank, etc.};
 \node (midpoint1) at (8.4,-1.1) {};
  \node (midpoint2) at (8.4,-0.2) {};
   \draw[thick, double, ->] (midpoint1) -- (midpoint2);

 \node (midpoint1) at (7.35,-0.3) {};
  \node (midpoint2) at (9.18,-0.3) {};
   \draw[thick, double, ->] (midpoint1) -- (midpoint2);
 \node (elpd) at (10.1,-0.2) {
 \begin{tabular}{c}
      prediction \\
      loss
\end{tabular}};
\node[draw,fit= (elpd),inner sep=-1mm] {};
 \node (test) at (12.8,-0.86) { \begin{tabular}{c}
 divergence $D$\\
 and its C.I.
 \end{tabular}};

 %\node (cal) at (13.5,-1.5) { \begin{tabular}{c}
%importance \\
%weighting
% \end{tabular}};

 \node (div) at (12.8,0.5)  { \begin{tabular}{c}
      permutation \\
      test ($p$ value)
    \end{tabular}};
 \node (midpoint) at (11,-0.2) {};
 % \node (midpoint2) at (12,-0.4) {};
  \node (midpoint3) at (11.8,0.5) {};
  \node (midpoint4) at (11.8,-0.8) {};

  %\draw[thick, double, ->] (midpoint) -- (midpoint2);
  \draw[thick, double, ->]  (midpoint) -- (midpoint3);
  \draw[thick, double, ->] (midpoint) -- (midpoint4);

\end{tikzpicture}
 
\caption{\small \em Our discriminate calibration framework has three modules (a) generate simulation table $(\theta, y, \tilde \theta)$, and map it into classification examples with some label $t$ and feature $\phi$, (b) train a classifier to predict labels, (c) from the learned classifier, perform hypothesis testing and estimate a divergence.
}\label{fig_workflow}
\end{figure}
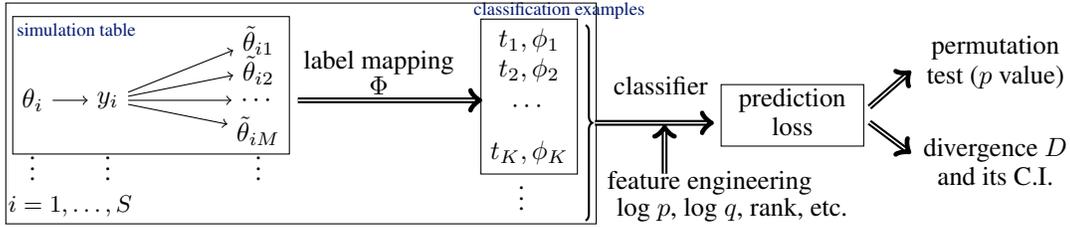

\textbf{Heuristic: calibration  via classification.}
 To address all these drawbacks, while maintaining versatility in various computing tasks, we propose \emph{discriminative calibration}, a pragmatic and unified framework for Bayesian computing calibration and divergence estimation.
 The intuitive heuristic behind discriminative calibration is to use a classifier to perform similarity tests---when two or several distributions are similar, we cannot distinguish samples from them so the classification error is large. In Bayesian computation, we compare  conditional distributions, the true $p(\theta|y)$ and inferred $q(\theta|y)$, to which we have access via simulations, and in some occasions explicit densities. It is natural to try to classify the samples drawn from $p$ v.s. from $q$, and the ability to distinguish them would suggest a miscalibration between $p$ and $q$.

To formulate such heuristics into an algorithm, we design a family of ``label mapping'' that prepares the simulations into classification examples that contain the label  and feature. In Sec.~\ref{sec:exampleMapping}, we first give four concrete label mapping examples, where the rank-based SBC becomes essentially a special case. Sec.~\ref{sec:theory_div} states the general theory:  if we train a classifier to predict the labels,  then the prediction ability yields a computable divergence from $p(\theta|y)$ to $q(\theta |y)$. 
In Sec.~\ref{sec:divergence}, we illustrate the practical implementation to get the divergence estimate, its confidence interval, and a valid hypothesis testing $p$-value. We explain why the learned classifier helps statistical power. 
In Sec.~\ref{sec:featureEng}, we discuss the classifier design  and  incorporate extra known information such as  ranks, likelihood, and log densities, whenever available,  as   features.
We also show our method is  applicable to MCMC  without waste from thinning.
We illustrate numerical and cosmology data examples in Sec.~\ref{sec:exp}.
We review other related posterior validation approaches and discuss the limitation  and future direction in Sec.~\ref{sec:discussion}.

\section{Generate labels, run a classifier, and obtain a divergence}\label{sec:exampleMapping} 

%The output of a simulation based inference, and thereby the input of the computing diagnostics,  is a simulation table with a  size-$S$ $\theta$ prior sample,  a size-$S$ $y$ data sample, and a size-$SM$ $\tilde \theta$ inference sample:  $({y_i}, {\theta_i}, \tilde \theta_{i1}, \dots, \tilde \theta_{iM})_{i=1}^S$.
%To formulates the calibration diagnostics into a classification problem, the first module is to translate this label-free simulation table into a classifier-readable format $(t,  \phi)$ where $t$ is an  integer-valued  class label and $\phi$ is the feature to be designed. Before the abstract definition of this label/feature generating mapping, we list four reasonable choices. The classification error trained on the generated labels and features leads to a meaningful divergence between $p(\theta|y)$ and $q(\theta|y)$.  
%Suppose that one repeatedly samples\footnote{For brevity, we drop the subscript $1 \leq i \leq S$ in this section, such that $\theta, y, \tilde \theta$ are understood as random variables. 
%In the illustrative table on the right, we add back the subscript $i$ to reflect the practical implementation.} 

\begin{wrapfigure}{r}{2.7cm} 
 \footnotesize
 \vspace{-1em}
\textbf{Input}: \\one simulation run:\\
$(\textcolor{brown}{y}, \textcolor{red}{\theta}, \textcolor{blue}{\tilde \theta_{1}, \dots, \tilde \theta_{M})}$; \\
\textbf{Output}: \\$M+1$ examples:\\
\begin{tabular}{c|c}
label $t$ & features $\phi$ \\
\hline
0 & $(\textcolor{red}{\theta}, \textcolor{brown}{y})$ \\
1 & $(\textcolor{blue}{\tilde{\theta}_{1}}, \textcolor{brown}{y})$ \\
1 & $\cdots$  \\
1 & $(\textcolor{blue}{\tilde{\theta}_{M}}, \textcolor{brown}{y})$ \\
\hline
\end{tabular}
 \vspace{-1.5em}
\end{wrapfigure}
As with traditional SBC, we generate a simulation table by repeatedly sampling 
parameters $\theta$ and synthetic data $y$ from the target model $p$, i.e. draws $(\textcolor{red}\theta,y) \sim p(\theta,y)$. Then, for each $(\textcolor{red}\theta, y)$, we run the inference routine $q$ to obtain a set of $M$ IID approximate posterior samples ${\textcolor{blue}{ \tilde{\theta}_1, \cdots, \tilde{\theta}_M} } \sim q(\theta \vert y)$. We wish to assess how close, on average (over different $y$), the inference procedure $q(\theta \vert y)$ is to the true posterior $p(\theta \vert y)$. Here, we observe that classification example-sets can be created in several ways and these produce different divergences.  
 We generalize the four examples and the claims  on divergence in Sec.~\ref{sec:theory_div}.

\textbf{Example 1: Binary classification with full features.}\phantomsection \label{example1}
An intuitive way to estimate this closeness between  $q(\theta \vert y)$ and  $p(\theta \vert y)$  is  to train a binary classifier. Imagine creating a binary classification dataset of $(t,\phi)$ pairs, where $t$ is a binary label, and $\phi$ are features. For each $(\theta,y)$ simulated from $p$, $M+1$ pairs of examples are created. In the first, $t=\textcolor{red}{0}$ and $\phi=(\textcolor{red}\theta,y)$. In the others, $t=\textcolor{blue}{1}$, and $\phi=(\textcolor{blue}{\tilde{\theta}_m},y)$, $1\leq m\leq M$. Collecting all data across $1\leq  i \leq S$, we obtain $S(M+1)$ pairs of $(t, \phi)$ classification examples.  A binary classifier is then trained to maximize the conditional log probability of $t$ given $\phi$.
If inference $q$ were exact, no useful classification would be possible. In that case, the expected test log predictive density could be no higher than the negative binary entropy $h(w)\coloneqq w \log w + (1-w)\log(1-w)$ of a Bernoulli distribution with parameter $w\coloneqq 1/(M+1)$.

Now imagine drawing a validation set in the same way, and evaluating the log predictive density of the learned classifier. We will show below (Thm.~\ref{thm:divergece}) that the expected log predicted density \citep{gelman2014understanding} $\mathrm{ELPD}=\E \log \Pr(t\vert \phi)$  of the classifier on validation data is a lower bound to a divergence $D_1$ between $p(\theta|y)$ and $q(\theta|y)$ up to the known constant $h(w)$,  
\begin{equation}\label{eq_D1}
\mathrm{ELPD}_1 - h(w) \leq D_1(p,q) \coloneqq w \mathrm{KL}\left(p(\theta \vert y)\ \Vert\ r(\theta \vert y) \right) +  (1-w)\mathrm{KL}\left(q(\theta \vert y)\ \Vert\ r(\theta \vert y) \right),      
\end{equation}
where $w=1/(M+1)$ and $r (\theta|y) = w p(\theta|y) + (1-w) q(\theta|y)$ is a mixture of posterior density. If the classifier $c$ is optimal ($c(t|\phi)=\Pr(t|\phi)$ in the distribution) then the bound in the above equation is tight $\max_{\mathrm{classifiers}}\mathrm{ELPD}_1 - h(w) = D_1(p,q)$.  Here $\KL(p(\theta \vert y) \Vert q(\theta \vert y))$ denotes a standard conditional Kullback–Leibler divergence\footnote{Standard notation for conditional divergences \cite{cover1991elements} is that $\KL(p(\theta \vert y) \Vert q(\theta \vert y))  \coloneqq  \E_{p(y,\theta)} \log \frac{p(\theta \vert y)}{q(\theta \vert y)}$. Conditional divergence is \emph{not} the divergence of conditional distributions. 
We interpret $p(y)=q(y)$.}. 
%Thus, the binary classification ELPD (negative corss-entropy loss) can be interpreted as a lower bound to a divergence $D_1$. When the classification is optimal in  out-of-sample log predictive density, this bound is tight: $\max_{\mathrm{classifiers}}\mathrm{ELPD}_1 - h(w) = D_1(p,q)$.  
By optimizing the classifier, $\max\mathrm{ELPD}_1 - h(w) $ becomes a commutable divergence, and its approaching zero is a necessary and sufficient condition for perfect calibration since $D_1(p,q)=0$ if and only if $p(\theta|y) = q(\theta|y)$ almost everywhere.  
 
\begin{wrapfigure}{r}{1.3cm} 
 \footnotesize
 \vspace{-1.4em}
\begin{tabular}{c|c}
\hline
$t$ & $\phi$ \\
\hline
0 & $\textcolor{red}{\theta}$ \\
1 & $ \textcolor{blue}{\tilde{\theta}_{1}}$ \\
1 & $\textcolor{blue}{\cdot}$  \\
1 & $\textcolor{blue}{\tilde{\theta}_{M}}$ \\
\hline
\end{tabular}
 \vspace{-1em}
\end{wrapfigure}
\textbf{Example 2: Binary classification without $y$.}  \phantomsection \label{example2}
Similar to Example 1, from each simulation draw  we generate $M+1$ pairs of $(t, \phi)$ examples, except that the feature $\phi$ only contains the parameters $\theta$, not $y$.  
%In the first, $t=\textcolor{red}{0}$ and $\phi=\textcolor{red}\theta$. In the others, $t=\textcolor{blue}{1}$, and $\phi=\textcolor{blue}{\tilde{\theta}_m}$, $1\leq m\leq M$. 
A binary classifier is then trained to predict $t$ given $\phi$.
The ELPD of this classifier on validation data is a lower bound to a generalized divergence $D_2$ between the prior $p(\theta)$ and $q(\theta)$, up to a known constant $h(w)$
\begin{equation}\label{eq_D2}
\mathrm{ELPD}_2 - h(w) \leq D_2(p,q) \coloneqq w\mathrm{KL}\left(p(\theta)\ \Vert\ r(\theta) \right) + (1-w)\mathrm{KL}\left(q(\theta)\ \Vert\ r(\theta) \right), 
\end{equation}
where $w=(M+1)^{-1}$, $r(\theta) = w p(\theta) + (1-w) q(\theta)$ is the prior mixture, and the bound is tight when the classifier is optimal.
A large ELPD  reveals the difference between the inference $q(\theta|y)$ and $p(\theta|y)$. But $D_2$ is only a generalized divergence: $D_2=0$ is necessary not sufficient for $q(\theta|y) = p(\theta|y)$.

\begin{wrapfigure}{r}{3.8cm} 
\footnotesize
 \vspace{-1.2em}
\textbf{Output}: $M+1$ examples:
\begin{tabular}{c|c}
\hline
$t$ & $\phi$ \\
\hline
0 & $\sum_{m=1}^M \mathbbm{1}(\textcolor{red}{\theta} \leq  \textcolor{blue}{ \tilde \theta_{m}})$
 \\
1 & $\sum_{m^\prime =1}^M \mathbbm{1}( \textcolor{blue}{\tilde \theta_{1}} \leq   \textcolor{blue}{\tilde \theta_{m^\prime}})$ \\
1 & $\cdots$  \\
1 & $\sum_{m^\prime =1}^M \mathbbm{1}( \textcolor{blue}{\tilde \theta_{M} }\leq   \textcolor{blue}{\tilde \theta_{m^\prime}} )$ \\
\hline
\end{tabular}
 \vspace{-1.2em}
\end{wrapfigure}
\textbf{Example 3: Binary classification with ranks (where the classical SBC is a special case).}  \phantomsection \label{example3}
Instead of classification using full $(\theta, y)$, we construct a \emph{feature}: the rank statistics. 
From each simulation draw we generate $M+1$ pairs of $(t, \phi)$ examples. 
The first pair is $t=\textcolor{red}{0}$ and $\phi= \sum_{m=1}^M \mathbbm{1}(\textcolor{red}{\theta} < \textcolor{blue}{ \tilde \theta_{m}})$, the rank statistics of the prior draw. In the others, $t=1$ and $\phi=\sum_{m^\prime =1}^M \mathbbm{1}( \textcolor{blue} {\tilde \theta_{m} }<  \textcolor{blue} { \tilde \theta_{ m^\prime}})+  \mathbbm{1}( \textcolor{blue} {\tilde \theta_{m} } < \textcolor{red}{\theta}) $, $1\leq m' \leq M$ are the rank statistics of the inferred samples. A binary classifier is then trained to predict $t$ given $\phi$.
The ELPD of this classifier is a lower bound to a generalized divergence $D_3$ between $p(\theta|y)$ and $q(\theta|y)$ up to a known constant  $h(w)$
 \begin{equation}\label{eq_elpd3}
      \mathrm{ELPD}_3 - h(w) \leq  D_3(p,q) \coloneqq  D_2(Z(p, q) \ \Vert\ \mathrm{Uniform} (0,1)), ~~  w=1/(M+1),
 \end{equation}
and again the bound is tight if the classifier is optimal. Here $Z(p,q)$ is a random variable defined by $Z=Q(\theta \vert y), (\theta,y)\sim p(\theta,y)$, where $Q$ is the cumulative distribution function of $q(\theta \vert y)$.

Training a  ``generative'' classifier on this rank-based label-generating map is similar to testing for uniformity in rank statistics, as done in traditional SBC which estimates the distribution of  $r|t=0$ by histograms (See Appendix \ref{naiveB} for  precise correspondence between SBC and the naive Bayes classifier).  The success of SBC suggests the usefulness of ranks, which motivates us to include ranks or more generally feature engineering in the classifier.
%If we (i) choose this rank-based label-generating map and (ii) choose  naive Bayes to predict $r \to  t$, then we  recover the classical SBC as a special case---The conditional distribution of $r|k=0$ is effectively estimated by the histogram of ranks in SBC, and to test whether $r|k=0$ is uniformly distributed is equivalent to testing whether the learned naive Bayes classifier equals the prior. However, neither of the two design choices is necessarily optimal, which motivates us to generalize SBC. On the other hand, the success of classical SBC hints the usefulness of the rank statistic, which motivates to include ranks or more generally feature engineering in the classification.
However,  $D_3$ is only a generalized divergence: $D_3=0$ is necessary but not sufficient for $p(\theta|y)= q(\theta|y)$. If inference always returns the prior, $q(\theta|y)= p(\theta)$, then $D_3(p,q)= 0$, a known counterexample  of when rank-based SBC fails \citep{prangle2014diagnostic}. 

\begin{wrapfigure}{r}{4.3cm} 
\footnotesize
\vspace{-1.3em}
\textbf{Output}: $M+1$ examples:
\begin{tabular}{c|c}
\hline
$t$ & $\phi$ \\
\hline
0 & $ (\textcolor{red}{\theta}, \textcolor{blue}{ \tilde \theta_{1}, \tilde \theta_{2},  \cdots, \theta_{M}},   \textcolor{brown}{y})$
 \\
1 & $(  \textcolor{blue}{\tilde \theta_{1}}, \textcolor{red}{\theta},  \textcolor{blue}{\tilde \theta_{2} \cdots, \tilde \theta_{M}}, \textcolor{brown}{y_i})$ \\
$\cdot$ & $\cdots$  \\
{$M$} & $( \textcolor{blue}{\tilde \theta_{1},  \tilde \theta_{2} \cdots, \tilde \theta_{M}},\textcolor{red}{\theta}, \textcolor{brown}{y})$ \\
\hline
\end{tabular}
\vspace{-2em}
\end{wrapfigure}
\textbf{Example 4: Multi-class classification.} \phantomsection \label{example4}
We go beyond binary labeling. Given a simulation run  $(y, \textcolor{red}\theta,   \textcolor{blue}{\tilde{\theta}_1, \cdots, \tilde{\theta}_M} )$,  we now create an ($M$+1)-class classification dataset with  $M+1$ pairs of $(t,\phi)$ examples. In each one, the features are $\phi=(y,\theta^*)$ where $\theta^*$ is  a permutation of $(\textcolor{red}\theta, \textcolor{blue}{\tilde{\theta}_1, \cdots, \tilde{\theta}_M})$ 
that moves $\textcolor{red}\theta$ into a given location and $t \in {0,\cdots,M}$ indicates the location of $\textcolor{red}\theta$ in the permutation (See the  table on the right). We  train a  ($M$+1)-class classifier to predict $t$ from $\phi$. The ELPD on a validation set is a lower bound to the following divergence $D_4$ between $p(\theta|y)$ and $q(\theta|y)$ up to a known constant:
\begin{equation}\label{eq_D4}
\mathrm{ELPD}_4 + \log (M+1) \leq D_4(p,q) \coloneqq 
\KL\left( p(\theta_0) \prod_{k=1}^M q(\theta_k),~~ \frac{1}{M+1} \sum_{m=0}^M p(\theta_m) \prod_{k\neq m} q(\theta_k)  \right).
%\sum_{k=0}^M \mathrm{KL} \left( \pi_k(\theta_{0:{M}} \vert y) \ \Vert \ r(\theta_{0:M} \vert y) \right).
\end{equation}
%\mathrm{KL} \left( p(\theta_1 \vert y) \prod_{m > 1} q(\theta_m \vert y) \ \Vert \ \frac{1}{K} \sum_{k=1}^K p(\theta_k \vert y) \prod_{m \not= k} q(\theta_m \vert y) \right).
Again the bound is tight if the classifier is optimal. The 
divergence $0\leq D_4\leq \log(M+1)$, and $D_4=0$ if and only if $p(\theta|y) \stackrel{\text{a.e.}}{=} q(\theta|y)$, necessary and sufficient for calibration.
In Theorem \ref{thm_asymptotic} we shows that as $M \rightarrow \infty$, $D_4(p,q)$ converges to $\KL(p(\theta|y), q(\theta|y))$ at an $O(1/M)$ convergence rate.

\section{Theory on calibration divergence}\label{sec:theory_div} 
To generalize the previous examples, we define a ``\textbf{label mapping}'' 
$\Phi: (y, \theta, \tilde \theta_{1}, \dots, \tilde \theta_{M})  \mapsto  \{ (t_1, \phi_1), \dots,  (t_L, \phi_L) \}.    
$ 
that maps one simulation run $(y, \theta, \tilde \theta_{1}, \dots, \tilde \theta_{M})$ into a $K$-class classification example-set containing $L$ pairs of labels $t$ and features $\phi$.
The label $t_l\in\{0, 1,\dots, K-1\}$  is deterministic and only depends on $l$. The features $\phi_l$ can depend on $l$ and $(y, \theta, \tilde \theta_{1}, \dots, \tilde \theta_{M})$. We only consider $\Phi$ satisfying that, when $p(\theta|y)=q(\theta|y)$,
$\phi$ given $y$ is conditionally independent of $t$ (equivalently, 
$\phi|(y, t)$ has the same distribution for any $t$). Let $\mathbb{F}$ be the set of these mappings $\Phi$.

%For the purpose of calibration, the label mapping $\Phi(\cdot)$ needs to satisfy the next two proprieties, and 

%these proprieties:
%(1)  Because $\tilde \theta_{1}, \dots, \tilde \theta_{M}$ are IID copies from $q$,  if we permute $(\tilde \theta_{1}, \dots, \tilde \theta_{M})$, then the distribution of $\phi | t, y$ should remain invariant, for any $p, q$, and 
%(2)  $\phi | t, y$ is independent of $t$ if $p(\theta|y)=q(\theta|y)$ a.s.  

%In discriminative calibration, we repeat the simulation of  $(y, \theta, \tilde \theta_{1}, \dots, \tilde \theta_{M})$ for $S$ times, then through a $\phi\in \mathbbm{F}$ we obtain $SL$ pairs of $(t, \phi)$ classification examples.

%then the prediction ability of the classifier is a meaningful dissimilarity metric between $p(\theta|y)$ and $q(\theta|y)$.

 We train a classifier on the classification examples created by $\Phi$ collected across repeated sampling.
The classifier performance is measured by its \underline expected \underline log \underline predictive \underline density (ELPD), or equivalently the negative cross-entropy loss. Given $p$, $q$, and the mapping $\Phi$, let $c(\phi)$ be any probabilistic  classifier that uses $\phi$ to predict the label $t$, and  $\Pr (k |\phi, c)$ is the predicted $k$-th class probability. Taking expectations over features and labels with $(\theta,y)\sim p$ and $\tilde \theta_m \sim q(\theta \vert y)$  reaches the ELPD of the classifier, 
%The expected log predictive density of the classifier $c$ on the example-set is  We define ${U}$ to be the logarithm score of a prediction $U(k,c) = \log \Pr (k |\phi, c)$, then the expected log predictive density of the classifier $c$ on the example-set is 
\begin{equation}\label{eq_elpd}
\hspace{-1em}
\mathrm{ELPD}(\Phi, c)  \coloneqq  \E_{t, \phi} \log \Pr (t=k |\phi, c), \quad (t,\phi) = \Phi(y, \theta, \tilde \theta_{1 }, \dots, \tilde \theta_{M} )
\end{equation}
%where $\mathcal{C}$ is the set of all probabilistic classifiers that inputs features $\phi$ and outputs the class-probability $\Pr (t | \phi, c)$,  the utility is the log score $U(k,c) = \log \Pr (k |\phi, c)$, and the expectation is taken over the distribution of $(t, \phi)$, which is generated from the mapping the simulation tables, $\Phi: (y, \theta, \tilde \theta_{1}, \dots, \tilde \theta_{M}).$ 
%$w_k = \sum_{l=1}^L \mathbbm{1}(t_l  = k)/L$ be the proportion of the $k$-th class examples.  
We then define the \textbf{prediction ability}  $D(p, q, \Phi, c)$ to be the ELPD plus the entropy of a categorical distribution, i.e.  
\begin{equation}\label{eq:D}
D(p, q, \Phi, c) =  \mathrm{ELPD}(\Phi, c) - \sum_{k=0}^{K-1} w_k \log w_k, \quad  \mathrm{where~} w_k = \frac{1}{L} \sum_{l=1}^L \mathbbm{1}(t_l  = k).
\end{equation}
The optimal classifier is the $c$ that achieves the highest prediction ability in the population:
\begin{equation}\label{eq:div}
D^{\mathrm{opt}}(p, q, \Phi) \coloneqq \max_{c\in \mathcal{C}} D(p, q, \Phi, c), ~~\mathrm{where~} \mathcal{C} ~\mathrm{is~the~set~of~all~probabilistic~classifiers}.
\end{equation}
%where $\mathcal{C}$ is the set of all probabilistic classifiers: functions that map the feature $\phi$ to length-$K$ simplex.

The next theorem is the basic theory  of our method: as long as we pass the simulation draws to a label mapping $\Phi \in \mathbbm{F}$, and train a classifier on the classification examples, then  $D^{\mathrm{opt}}(p, q, \Phi)$ is a generalized divergence between $p$ and $q$.
\begin{theorem}[Prediction ability yields divergence]\label{thm:divergece}
    Given any $p, q$, and feature mapping $\Phi \in \mathbbm{F}$, the optimal prediction ability $D^{\mathrm{opt}}(p, q, \Phi)$ is a generalized divergence from $p$ to $q$ in the sense that $D^{\mathrm{opt}}(p, q, \Phi)\geq 0$, and $p(\theta|y)= q(\theta|y)$ almost everywhere implies $D^{\mathrm{opt}}(p, q, \Phi)=0$.
    This generalized divergence is reparametrization invariant and uniformly bounded. For any classifier $c$,
    \begin{align}
          0\leq D(p,q, \Phi, c) \leq D^{\mathrm{opt}}(p, q, \Phi) \leq -\sum_{k=0}^{K-1} w_k \log w_k;\label{eq_bound}\\
     p(\theta|y)\stackrel{\text{a.e.}}{=}q(\theta|y) \Rightarrow     D^{\mathrm{opt}}(p, q, \Phi) =0.\notag   
    \end{align} 

    %$D^{\mathrm{opt}}$ achieves zero if any only if all $\pi_k (\phi)$ are the same, i.e., $ \Pr (X^k \in A) =  \Pr (X^j \in A)$ for any measurable set $A$. The upper bound is achieved is $p_k (\phi)$ are disjoint, i.e., $ \Pr (X^k \in A)  \Pr (X^j \in A)=0$ for any measurable set $A$.
\end{theorem}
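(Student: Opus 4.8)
The plan is to show that the optimal prediction ability equals a mutual information between the label and the feature, after which every assertion reduces to a standard property of mutual information. Throughout, I picture one classification example as being produced by first running the simulation $(y,\theta,\tilde\theta_{1},\dots,\tilde\theta_{M})\sim p(\theta,y)\prod_m q(\tilde\theta_m\mid y)$, then drawing a position index $l$ uniformly on $\{1,\dots,L\}$ and reading off $(t,\phi)=(t_l,\phi_l)$. This makes $(t,\phi)$ a genuine random pair with a well-defined joint law, and the weights $w_k=\frac1L\sum_l \mathbbm 1(t_l=k)$ are exactly the marginal label probabilities $\Pr(t=k)$, so that $-\sum_k w_k\log w_k = H(t)$, the entropy of the label.

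First I would pin down the optimal classifier. For each fixed $\phi$, the inner expectation $\E_{t\mid\phi}\log c(t\mid\phi)$ is maximized over probability vectors $c(\cdot\mid\phi)$ by the true conditional $c(\cdot\mid\phi)=\Pr(\cdot\mid\phi)$, because $\E_{t\mid\phi}\log c(t\mid\phi)-\E_{t\mid\phi}\log\Pr(t\mid\phi)=-\KL(\Pr(\cdot\mid\phi)\,\Vert\,c(\cdot\mid\phi))\le 0$ by Gibbs' inequality. Taking expectations over $\phi$ shows the ELPD is maximized at the true posterior label probabilities, that this maximizer lies in $\mathcal C$ so the maximum in \eqref{eq:div} is attained, and that $\mathrm{ELPD}^{\mathrm{opt}}=-H(t\mid\phi)$, the negative conditional entropy. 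Combining with the previous paragraph yields the crux identity
$$D^{\mathrm{opt}}(p,q,\Phi)=\mathrm{ELPD}^{\mathrm{opt}}-\textstyle\sum_k w_k\log w_k = H(t)-H(t\mid\phi)=I(t;\phi).$$

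Every bound now follows from properties of $I(t;\phi)$. Non-negativity $D^{\mathrm{opt}}\ge 0$ is $I(t;\phi)\ge 0$; the upper bound is $I(t;\phi)=H(t)-H(t\mid\phi)\le H(t)=-\sum_k w_k\log w_k$, using $H(t\mid\phi)\ge 0$. The middle inequality $D(p,q,\Phi,c)\le D^{\mathrm{opt}}$ holds for every $c$ by definition of the maximum, and the constant classifier $c(\cdot\mid\phi)\equiv w$ attains $D=0$, pinning the lower end of the achievable range. Reparametrization invariance follows because applying an invertible map to $\theta$ (hence to $\phi$) leaves the label $t$ untouched and transforms $\phi$ bijectively, and mutual information is invariant under bijections of either argument.

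The one step that needs care is $p(\theta\mid y)\stackrel{\text{a.e.}}{=}q(\theta\mid y)\Rightarrow D^{\mathrm{opt}}=0$. By the defining property of $\mathbb F$, equality of the posteriors makes $\phi$ conditionally independent of $t$ given $y$; but this alone does \emph{not} give $I(t;\phi)=0$, since $\phi$ generally carries information about $y$. The extra ingredient is that $t_l$ depends only on the position index $l$, which is drawn independently of the simulation, so $t\perp y$ and $\Pr(t\mid y)=\Pr(t)$. I would then compute $\Pr(\phi,t)=\int \Pr(\phi\mid y)\Pr(t\mid y)\Pr(y)\,dy = \Pr(t)\int\Pr(\phi\mid y)\Pr(y)\,dy=\Pr(t)\Pr(\phi)$, establishing $\phi\perp t$ and hence $I(t;\phi)=0$. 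I expect this coupling of the $\mathbb F$-condition with the position-independence of the labels to be the main obstacle, since it is precisely where the definition of admissible label mappings does its work.
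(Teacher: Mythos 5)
Your proof is correct and takes essentially the same route as the paper's: identify the Bayes-optimal classifier via Gibbs' inequality, recognize $D^{\mathrm{opt}}$ as the mutual information $I(t;\phi)$ --- which is exactly the weighted Jensen--Shannon divergence $\sum_k w_k \KL\bigl(\pi_k(\phi)\,\Vert\,\sum_j w_j\pi_j(\phi)\bigr)$ of Theorem~\ref{thm:divergece2} --- and combine the $\mathbb{F}$-condition with the marginal independence of $t$ and $y$ to dispose of the null case. Your explicit handling of that last coupling (conditional independence given $y$ plus $t\perp y$ implies marginal independence) is, if anything, spelled out more carefully than the paper's parenthetical remark.
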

That is, any label mapping $\Phi$ produces a generalized divergence $D^{\mathrm{opt}}(p, q, \Phi)$,  the prediction ability of the optimal classifier.
The prediction ability $D(p,q, \Phi, c)$  of any classifier $c$ estimated on validation data is a lower bound to  $D^{\mathrm{opt}}(p, q, \Phi)$.
%Under the null  $p(\theta|y)= q(\theta|y)$, then $D(p, q, \Phi, c) = D^{\mathrm{opt}}(p, q, \Phi)=0$ for any $c$.
Further, this generalized divergence $D^{\mathrm{opt}}(p, q, \Phi)$  
is always a proper Jensen–Shannon divergence in the projected feature-label space (Theorem \ref{thm:divergece} in the Appendix).

%\subsection{Choosing the label mapping}
%The label mapping needs to specify the number of classes $K$, the number of examples $L$, and the mapping of label and the feature. 

%it is sensible to reflect the difference between $p(\theta|y)$ and $q(\theta|y)$ into the label-feature relation $\phi | t$ as much as possible---A larger divergence is stronger and more likely to reveal the miscalibration.

When $p(\theta|y)\neq  q(\theta|y)$, to increase the statistical power, we wish that the generalized divergence can be as ``strong'' as possible such that we can detect the miscalibration.
In the four examples in Section \ref{sec:exampleMapping}, we  have used $D_1, D_2, D_3, D_4$ to denote the (generalized) divergence they yield. The next theorem shows there is a deterministic domination order among these four metrics. Moreover,  $D_4$ is the largest possible classification divergence  from any given simulation table.
\begin{theorem}[Strongest divergence]\label{thm:opt}
For any given $p, q$, and any $\Phi \in \mathbb{F}$,  
(1) $ D_4 \geq  D_1 \geq D_3 \geq D_2.$ ~~
(2) $D_4$ and $D_1$ are proper divergences. They attain 0 if and only if $p(\theta|y)= q(\theta|y)$ almost everywhere. They attain the corresponding upper bound in \eqref{eq_bound} if and only $p(\theta|y)$ are $q(\theta|y)$ are disjoint, i.e.,  $\int_{A} p(\theta|y) q(\theta|y)d\theta=0$ for any measurable set A and almost surely $y$.~~
(3) For any $p, q$ and $\Phi \in \mathbb{F}$ ($\Phi$ can have an arbitrary example size $L$ and class size $K$),  
$ D_4 (p,q) \geq  D^{\mathrm{opt}}(p, q,  \Phi).$
\end{theorem}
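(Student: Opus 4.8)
The device behind every part is the Jensen–Shannon characterization of prediction ability (the appendix strengthening of Thm.~\ref{thm:divergece}): for any $\Phi\in\mathbb{F}$, writing $f_k$ for the conditional law of the feature $\phi$ given the label $t=k$ and $\bar f=\sum_k w_k f_k$ for the label-weighted mixture, one has $D^{\mathrm{opt}}(p,q,\Phi)=\sum_{k=0}^{K-1}w_k\,\KL(f_k\Vert\bar f)=I(t;\phi)$, the mutual information of the single-example pair $(t,\phi)$. The plan is to read each of $D_1,\dots,D_4$ as such a mutual information and compare them by the data-processing inequality (DPI): if $(t,\phi')$ is produced from $(t,\phi)$ by a Markov kernel acting on the feature alone (the same kernel for every value of $t$), then $I(t;\phi')\le I(t;\phi)$. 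I would establish part~(3) first, since it contains $D_4\ge D_1$ and exhibits $D_4$ as maximal. The key observation is that $D_4$ in \eqref{eq_D4} is itself a mutual information $I(t_\star;\Psi\mid y)$, where $\Psi=(\psi_0,\dots,\psi_M)$ is the ordered tuple of all $M+1$ draws and the hidden uniform label $t_\star$ marks the single slot drawn from $p$; symmetrizing over slots turns $\tfrac1{M+1}\sum_m\KL(\,\cdot\,\Vert Q_4)$ into this mutual information. Thus $D_4$ is the ``which slot is the $p$-draw, given everything'' experiment. For an arbitrary $\Phi\in\mathbb{F}$ I would show the single example $(t,\phi)$ is a randomized, label-respecting function of $(t_\star,\Psi,y)$: the feature $\phi$ is computed from $(\Psi,y)$ and auxiliary independent randomness, while the constraint $\Phi\in\mathbb{F}$ (that $\phi\perp t\mid y$ when $p=q$) forces the $\Phi$-label $t$ to be a coarsening of what $t_\star$ reveals about $\Psi$. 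DPI then gives $I(t;\phi)\le I(t_\star;\Psi\mid y)=D_4$, i.e. $D^{\mathrm{opt}}(p,q,\Phi)\le D_4(p,q)$ uniformly in $K$ and $L$; specializing $\Phi$ to Examples~1--3 recovers $D_4\ge D_1,D_3,D_2$.

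\textbf{The remaining order (part 1).} With $D_4$ on top it remains to order $D_1\ge D_3\ge D_2$ and $D_1\ge D_2$. The pair $D_1\ge D_2$ is immediate DPI: $D_2$'s feature $\theta$ is the deterministic projection of $D_1$'s feature $(\theta,y)$ under the same binary label, so $I(t;\theta)\le I(t;(\theta,y))$. For $D_3\ge D_2$ and $D_1\ge D_3$ I would use that the (leave-one-out) rank is, conditionally on $y$, a comparison of one value against an i.i.d.\ $q(\cdot\mid y)$ pool: because the comparison is performed \emph{within} each $y$, it retains conditional discrepancies that the marginal feature of $D_2$ washes out (giving $D_3\ge D_2$), while it is a strictly coarser summary than the full $(\theta,y)$ carried through the exchangeable tuple (giving $D_1\ge D_3$).

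\textbf{Properness and extremes (part 2).} Both $D_1$ and $D_4$ are genuine, not merely generalized, divergences because each is a bona fide KL/JS between two laws that coincide exactly when $p(\theta\mid y)=q(\theta\mid y)$: $D_1$ is the binary Jensen–Shannon divergence between $p(\theta,y)$ and $q(\theta\mid y)p(y)$, and $D_4=\E_y\KL(P_4\Vert Q_4)$ with $P_4=p(\theta_0\mid y)\prod_{k\ge1}q(\theta_k\mid y)$ and $Q_4$ the slot-symmetrized mixture of \eqref{eq_D4}; by exchangeability of the slots $P_4=Q_4$ a.e.\ iff $p(\cdot\mid y)\stackrel{\text{a.e.}}{=}q(\cdot\mid y)$. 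The upper bounds in \eqref{eq_bound}, namely $-h(w)$ for $D_1$ and $\log(M+1)$ for $D_4$, are the maximal values these mutual informations can take, attained exactly when the label becomes a deterministic function of the feature, i.e.\ when the class-conditional laws have disjoint supports --- precisely mutual singularity of $p(\cdot\mid y)$ and $q(\cdot\mid y)$.

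\textbf{Main obstacle.} The delicate step is part~(3) for a completely arbitrary $\Phi$: the label and feature may have any cardinalities $K,L$ and $\phi$ may entangle several draws at once, so the reduction to ``coarsening of $(t_\star,\Psi)$'' must be made precise --- one must verify that the $\mathbb{F}$-membership constraint genuinely forces the label-respecting kernel structure DPI needs, ruling out that $\Phi$ smuggles label information into $\phi$ through the auxiliary randomness. A secondary subtlety sits inside $D_1\ge D_3$: the rank's comparison pool is not identical under the two binary labels (under $t=1$ it contains the $p$-draw $\theta$), so the map from $(\theta,y)$ to the rank is not literally label-independent, and the coarsening must be routed through the full exchangeable tuple rather than through $(\theta,y)$ directly.
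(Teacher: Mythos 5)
Your high-level framing---reading every $D^{\mathrm{opt}}(p,q,\Phi)$ as a mutual information between label and feature via the Jensen--Shannon representation (Theorem \ref{thm:divergece2}) and comparing divergences by data processing---is sound, and where it applies cleanly it reproduces what the paper gets from Theorem \ref{thm_gen}: $D_4\ge D_1$ by collapsing the permuted tuple to a single slot (the paper's ``joint KL $\ge$ marginal KL'' step), $D_1\ge D_2$ by dropping $y$, and the part-(2) claims (properness, and attainment of the upper bound exactly when the class-conditional laws are mutually singular) follow from the same representation. So parts (1) and (2) of your sketch are essentially viable, modulo the $D_1\ge D_3\ge D_2$ links, which you only gesture at; note that for $D_1\ge D_3$ the cleanest route is that $D_3$ as defined in \eqref{eq_elpd3} is the pushforward of $D_1$'s example under the label-independent map $(\theta,y)\mapsto Q(\theta|y)$, so plain data processing applies and the finite-$M$ rank-pool asymmetry you worry about can be sidestepped.

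The genuine gap is in part (3), which you correctly identify as the ``main obstacle'' and then do not resolve. Your argument needs the claim that for an \emph{arbitrary} $\Phi\in\mathbb{F}$ (arbitrary $K$, $L$, and feature maps entangling all $M+1$ draws), the single example $(t,\phi)$ is a label-respecting coarsening of the multiclass permutation example $(t_\star,\Psi,y)$; you assert that $\mathbb{F}$-membership ``forces'' this, but that is precisely the theorem's hard content, not a consequence of the definition. The paper's proof spends its entire technical effort here: Lemma \ref{lm_com} is a combinatorial result (built from mixtures of atoms with doubly-exponentially decaying weights) showing that if two functions of $L$ exchangeable inputs are equidistributed for \emph{every} input law---which is what the null constraint in $\mathbb{F}$ demands---then one must be a permutation of the other's arguments. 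Only after that, plus three further reductions (augmenting every feature with $y$, replacing each $\phi_k$ by the raw permuted tuple since post-processing cannot increase divergence, and counting that at most $L!/(L-1)!=M+1$ permutations are distributionally distinct), does the comparison against $D_4$ close. Without an analogue of Lemma \ref{lm_com}, your DPI step has nothing to process through: a priori a feature map could satisfy the null constraint while extracting information from the joint configuration of the draws in a way that is not a coarsening of $(t_\star,\Psi,y)$, and ruling that out is exactly what is being asked. As written, the proposal assumes the answer to the question it is supposed to prove.
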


The following result shows that the divergence $D_4$ in Eq.~\eqref{eq_D4} approaches the ``mode-spanning'' KL divergence in the limit that the number of posterior draws $M \to \infty$. This is appealing because for many inference routes, increasing the number of posterior draws is cheap. Thus, $D_4$ provides an accessible approximation to the KL divergence that is otherwise  difficult to compute from samples.
\begin{theorem} [Big $M$ limit and  rate] \label{thm_asymptotic} 
For any $p, q$,  generate the simulation  $\{(y, \theta, \tilde \theta_{i}, \dots, \tilde \theta_{M})\}$ and train the multiclass-classier, then 
$D_4(p, q)-  \mathrm{KL}(p(\theta|y) ~||~ q(\theta|y)) \to 0,$
as $M\to \infty$. \\
If further $p(\theta|y)$ and $q(\theta|y)$ have the same support, and if $\E_{p(\theta|y)}\left[ \frac{p(\theta|y)}{q(\theta|y)} \right]^2< \infty$ for a.s. $y$, then
\begin{equation*}
D_4(p, q) = \mathrm{KL}(p(\theta|y) ~|| ~q(\theta|y)) - \frac{1}{2M} \chi^2(q(\theta|y)~ ||~ p(\theta|y)) + o(M^{-1}).
\end{equation*}
where $\chi^2(\cdot||\cdot)$ is the conditional chi-squared divergence.
\end{theorem}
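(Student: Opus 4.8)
The plan is to collapse the $(M{+}1)$-dimensional KL divergence $D_4$ in \eqref{eq_D4} into a single scalar functional of an average of likelihood ratios, and then treat the limit by the law of large numbers and the rate by a second-order delta-method expansion. Write $\rho(\theta)\coloneqq p(\theta\vert y)/q(\theta\vert y)$ for the conditional likelihood ratio and, for one draw of the $M{+}1$ variables, set $\bar\rho_M\coloneqq \frac{1}{M+1}\sum_{m=0}^M \rho(\theta_m)$. Dividing both arguments of the KL in \eqref{eq_D4} by the common factor $\prod_{k=0}^M q(\theta_k)$, the first collapses to $\rho(\theta_0)$ and the mixture to $\bar\rho_M$, so the Radon--Nikodym derivative is $\rho(\theta_0)/\bar\rho_M$ and
\begin{equation*}
D_4(p,q)=\E_P\!\left[\log\frac{\rho(\theta_0)}{\bar\rho_M}\right]=\KL\big(p(\theta\vert y)\,\Vert\,q(\theta\vert y)\big)-\E_P\big[\log\bar\rho_M\big],
\end{equation*}
where under $P$ we have $\theta_0\sim p(\cdot\vert y)$, $\theta_1,\dots,\theta_M\stackrel{\text{iid}}{\sim} q(\cdot\vert y)$, and the outer expectation includes $y\sim p(y)$. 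Everything now reduces to the single term $\E_P[\log\bar\rho_M]$.

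For the first claim I would fix $y$ and observe that $\rho(\theta_1),\dots,\rho(\theta_M)$ are i.i.d.\ with $\E_{q}\rho=1$ while $\rho(\theta_0)/(M+1)\to 0$, so $\bar\rho_M\to 1$ almost surely by the strong law, giving $\log\bar\rho_M\to 0$ pointwise. Jensen already yields the easy direction $\E_P[\log\bar\rho_M]\le\log\E_P[\bar\rho_M]\to 0$; the matching lower bound follows from establishing uniform integrability of the negative parts $(\log\bar\rho_M)^-$ and applying dominated convergence, whence $D_4\to\KL$. The degenerate case $\KL=\infty$ is handled separately by a truncation/monotone-convergence argument (the bound $D_4\le\log(M+1)$ is consistent with $D_4\to\infty$ since that bound itself diverges).

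For the rate, set $\delta_M\coloneqq\bar\rho_M-1$ and expand $-\log\bar\rho_M=-\delta_M+\tfrac12\delta_M^2+R_M$. Writing $\chi^2\coloneqq\E_{q(\theta\vert y)}\big[(p/q-1)^2\big]$ for the conditional chi-squared divergence $\chi^2(q(\theta\vert y)\Vert p(\theta\vert y))$ (equivalently $\int p^2/q-1=\E_p[\rho]-1=\Var_q(\rho)$), using independence of $\theta_0$ from $\theta_{1:M}$ and $\E_q\rho=1$ gives
\begin{align*}
\E_P[\delta_M] &= \frac{\E_p[\rho]-1}{M+1}=\frac{\chi^2}{M+1}, \\
\E_P[\delta_M^2] &= \frac{M\,\Var_q(\rho)}{(M+1)^2}+O(M^{-2})=\frac{\chi^2}{M}+O(M^{-2}),
\end{align*}
so that $-\E_P[\delta_M]+\tfrac12\E_P[\delta_M^2]=-\tfrac{\chi^2}{M+1}+\tfrac{\chi^2}{2M}+O(M^{-2})=-\tfrac{1}{2M}\chi^2+o(M^{-1})$. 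Combined with the reduction, this delivers the claimed expansion $D_4=\KL-\tfrac{1}{2M}\chi^2(q\Vert p)+o(M^{-1})$; note the same scalar $\chi^2$ enters through both the first moment (a $p$-expectation) and the second moment (a $q$-variance).

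The \textbf{main obstacle} is making the remainder rigorous, i.e.\ showing $\E_P[R_M]=o(M^{-1})$: the logarithm is singular as $\bar\rho_M\downarrow 0$ and $\delta_M$ is an average of possibly-heavy-tailed ratios, so term-by-term integration of the Taylor series is not automatic. I would split the expectation on $\{|\delta_M|\le\tfrac12\}$, where the Lagrange remainder is $O(|\delta_M|^3)$ with $\E_P|\delta_M|^3=O(M^{-2})$, and on the rare event $\{|\delta_M|>\tfrac12\}$, whose probability is $O(M^{-1})$ by Chebyshev under the hypothesis $\E_{p(\theta\vert y)}[(p/q)^2]<\infty$; on the latter I would bound $-\log\bar\rho_M$ via $-\log x\le x^{-1}-1$ together with the same-support assumption to prevent $\bar\rho_M$ from collapsing, showing this piece is $o(M^{-1})$. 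That moment condition is the natural one, since by Cauchy--Schwarz $\int p^2/q\le(\int p^3/q^2)^{1/2}$ it guarantees $\Var_q(\rho)=\chi^2<\infty$ and simultaneously controls $\rho(\theta_0)$. A final technical point is interchanging the pointwise-in-$y$ expansion with the outer $\E_{p(y)}$, which the ``for a.s.\ $y$'' hypotheses are meant to license through dominated convergence.
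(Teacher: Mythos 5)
Your proposal is correct and follows essentially the same route as the paper: the identity $D_4 = \KL\left(p(\theta|y)\,\Vert\, q(\theta|y)\right) - \E\left[\log\bar\rho_M\right]$ with $\bar\rho_M$ the average likelihood ratio over the $M+1$ draws, the law of large numbers for the limit, and a second-order expansion of the logarithm for the $-\chi^2/(2M)$ rate. If anything your moment bookkeeping is the more careful of the two: you track the $O(1/M)$ bias $\E[\delta_M]=\chi^2/(M+1)$ contributed by the single $p$-draw and show how it combines with $\tfrac12\E[\delta_M^2]=\chi^2/(2M)+O(M^{-2})$ to give the stated coefficient, whereas the paper applies the CLT and delta method to the $q$-average $\Delta_2$ alone and is looser about how that bias and the $\rho(\theta_0)/(M+1)$ term recombine to yield the same answer.
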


\section{Practical implementation} \label{sec:divergence}
\begin{algorithm}
\footnotesize
\caption{Proposed method: Discriminative calibration}\label{alg:DC}
    \SetKwInOut{Input}{input}
    \SetKwInOut{Output}{output}
    \SetKwFor{For}{for (}{)  }{}
\Input{ The ability to sample from $p(\theta, y)$ and  $q(\theta | y)$,   and a label mapping $\Phi.$}   
\Output{(i) estimate of a divergence between $p(\theta|y)$ and $q(\theta|y)$; (ii) $p$-value for testing $p(\theta|y) = q(\theta|y)$.}   
\For{$i=1:S$}
{
   Sample $(\theta_i, y_i) \sim p(\theta, y)$, and sample $\tilde \theta_{i1}, \tilde \theta_{i2} \dots, \tilde \theta_{iM} \sim q(\theta | y_i)$; \tikzmark{right} \textcolor{gray}{\Comment{simulation table}}  \\
   Generate a batch of $L$ examples of $(t, \phi) = \Phi(y_i, \theta_i,   \tilde \theta_{i1},\dots,  \tilde \theta_{iM})$,  $0 \leq  t \leq K-1$; 
   \tikzmark{right} \textcolor{gray}{\Comment{label mapping}}  \\
 }
%Collect $LS$ pairs of classification examples $(t, \phi)$, randomly split to training and validation sets\;
Randomly split the $LS$ classification examples $(t,\phi)$ into training and validation sets (all $L$ examples for a given $i$ go to either training or validation)\;
Train a $K$-class classifier to predict label $t$ on the training examples, incorporating  useful features\;
Compute the validation  log predictive density  $\mathrm{LPD}_{\mathrm{val}}$ in \eqref{eq_lpd},  obtain an estimate of the divergence  \eqref{eq:div}  and its bootstrap confidence intervals;
\tikzmark{right} \textcolor{red}{\Comment{divergence}}  \\
%Compute the importance weights of $q$ samples by .. . \tikzmark{right} \textcolor{red}{\Comment{{calibrate}} } \\
\For{$b=1:B$}{
Randomly permute the label $t$ in the validation set within each batch\;    
Compute  $\mathrm{LPD}_{\mathrm{val}}^b$ on the permutated validation set\;
} 
Compute the calibration $p$-value $p= 1/B \sum_{b=1}^B \mathbbm  1(\mathrm{LPD}^{\mathrm{val}}_b \geq \mathrm{LPD}^{\mathrm{val}})$.  \tikzmark{right} \textcolor{red}{\Comment{frequentist test}} \\
\end{algorithm}

\textbf{Workflow for divergence estimate.} 
We repeat the simulation of  $(y, \theta, \tilde \theta_{1}, \dots, \tilde \theta_{M})$ for $S$ times. Each time we sample $(y, \theta) \sim p(\theta, y)$ and $\tilde \theta_{1:M} \sim q(\theta|y)$ and  generate a \emph{batch} of $L$ examples through a label mapping $\Phi\in \mathbbm{F}$. In total,  we obtain $SL$ pairs of $(t, \phi)$ classification examples.  We recommend using the binary and multiclass labeling schemes (Examples \hyperref[example1]{1} and \hyperref[example4]{4}, where $L=M+1$ and $K=2$ or $M+1$) such that we can obtain a proper divergence estimate.   We split the classification example-set $\{(t,\phi)\}$ into the training and validation set (do not split batches) and train a $K$-class classifier $c$ on  the training set to minimize cross-entropy.  Denote $\mathcal{I}_\mathrm{val}$ to be the validation index, $\Pr (t= t_j |\phi_j,  c ) $ to be the learned class probability for any validation example $(t_j, \phi_j)$,  we compute the ELPD \eqref{eq_elpd} by the  validation set \underbar log \underline predictive \underline density: 
\begin{equation}\label{eq_lpd}
 \mathrm{ELPD}(\Phi, c) \approx   \mathrm{LPD}^{\mathrm{val}} (\Phi, c) \coloneqq  |\mathcal{I}_\mathrm{val}|^{-1} {\sum_{j: \in  \mathcal{I}_\mathrm{val}}  \log  \Pr(t=t_j | \phi_j,  c) }.  
\end{equation}
For any $c$,  $\mathrm{LPD}^{\mathrm{val}} (\Phi, c) - \sum_{k=0}^{K-1} w_k \log w_k $  becomes a lower bound estimate of the (generalized) divergence $D^{\mathrm{opt}}(p, q, \Phi)$ in Thm.~\ref{thm:divergece}, and an estimate of $D^{\mathrm{opt}}(p, q, \Phi)$ itself when the classier $c$ is good enough. In addition to the point estimate of the divergence, we can compute the confidence interval.
It is straightforward to obtain the standard error of the sample mean \eqref{eq_lpd}. To take into account the potentially heavy tail of the log predictive densities, we can also use Bayesian bootstrap \citep{rubin1981bayesian} that reweights the sum in $\eqref{eq_lpd}$ by a uniform Dirichlet weight.

%At time $b$, sample a simplex   $\xi_b$ that has the same dimension as the validation set from Dirichlet (1, \dots, 1), recompute the validation set lpd:$\mathrm{LPD}^{\mathrm{val, BB}}_b \coloneqq \sum_{j: (t_j, \phi_j)\in \mathrm{val}}  \xi_b \log  \Pr(t=t_j | \phi_j, c)$. Repeating this bootstrap sampling for $B$ times, then for any confidence level $0\leq \alpha \leq 1$, the $\alpha/2$ and $1-\alpha/2$ sample quantile of  $\{ \mathrm{LPD}^{\mathrm{val, BB}}_b: 1\leq b\leq B\}$ forms a bootstrapped $1-\alpha$ confidence interval.

\textbf{Hypothesis testing.} 
Our approach  facilitates rigorous frequentist hypothesis testing.  The null hypothesis is that the approximate inference matches the exact  posterior, i.e., $p(\theta|y)= q(\theta|y)$ almost everywhere.
We adopt the permutation test: We train the classifier $c$ once on the training set and keep it fixed, and evaluate the validation set log predictive density  $\mathrm{LPD}^{\mathrm{val}} (\Phi, c)$ in \eqref{eq_lpd}. Next, permutate the validation set $B$ times: at time $b$, keep the features unchanged and randomly permutate the validation labels $t$ within each batch of examples ($\Phi$ generates a batch of $L$ examples each time), and reevaluate the validation set log predictive density  \eqref{eq_lpd} on  permutated labels, call it $\mathrm{LPD}^{\mathrm{val}}_b$. 
%$t_{j} \leftarrow t_{\sigma(j)}$ where $\sigma(j)$ is an index  permutation.  
Then we compute the one-sided permutation $p$-value as $p=  \sum_{b=1}^B \mathbbm  1(\mathrm{LPD}^{\mathrm{val}}_b \geq  \mathrm{LPD}^{\mathrm{val}})/B$.
Given a significance level, say  0.05, we will reject the null if $p<0.05$ and conclude a miscalibration. 

\begin{theorem} [Finite sample frequentist test] \label{thm_test} 
For any finite simulation size $S$ and  posterior draw size $M$, and any classifier $c$,  under the null hypothesis $p(\theta|y)= q(\theta|y)$ almost everywhere, the permutation 
test is valid as the $p$-value computed above is uniformly distributed on $[0, 1]$.
\end{theorem}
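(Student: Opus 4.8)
The plan is to recognize the procedure as a standard randomization (permutation) test and to establish its validity through a group-invariance argument, so that the observed statistic is exchangeable with its permuted copies under the null. First I would condition on the trained classifier $c$. Because the algorithm sends all $L$ examples of a given simulation $i$ either entirely to the training set or entirely to the validation set, and the simulations are drawn independently, the validation examples are independent of the training examples; hence $c$ is, conditionally, a fixed measurable function and $\mathrm{LPD}^{\mathrm{val}}(\Phi,c)$ is a deterministic functional of the validation features and their labels.

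The crux is a within-batch exchangeability lemma. Let $G$ be the group of relabelings that permute the labels $t_1,\dots,t_L$ within each validation batch independently, which is exactly the group the algorithm samples from. I would show that under $H_0$ the joint law of the labeled validation set is invariant under every $g\in G$. Fix one batch generated from $(\theta,y,\tilde\theta_1,\dots,\tilde\theta_M)$ with $\phi_l = f_l(\theta,y,\tilde\theta_{1},\dots,\tilde\theta_M)$ and deterministic labels $t_l$. When $p(\theta|y)=q(\theta|y)$, the draws $\theta,\tilde\theta_1,\dots,\tilde\theta_M$ are i.i.d.\ from the common conditional given $y$, so the feature vector $(\phi_1,\dots,\phi_L)$ is exchangeable given $y$; equivalently this is the defining property of $\Phi\in\mathbb{F}$, that $\phi$ is conditionally independent of $t$ given $y$. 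Reindexing shows that permuting the labels within a batch is distributionally equivalent to permuting the feature coordinates, so relabeling leaves the batch law unchanged; independence across batches then upgrades this to invariance of the whole validation set under $G$.

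Given $G$-invariance, I would invoke the classical randomization-test lemma. Conditioning on the orbit of the data under $G$, the family $\{\mathrm{LPD}^{\mathrm{val}}_g : g\in G\}$ is exchangeable, and the observed $\mathrm{LPD}^{\mathrm{val}}$ equals the value at the identity element, which is uniformly distributed among the family. The rank of the observed statistic is therefore uniform, and the one-sided randomization $p$-value $\frac{1}{|G|}\sum_{g\in G}\mathbbm{1}(\mathrm{LPD}^{\mathrm{val}}_g \ge \mathrm{LPD}^{\mathrm{val}})$ is uniform on the grid $\{1/|G|,\dots,1\}$ when the statistic takes distinct values almost surely (guaranteed when the feature distribution is continuous so that ties occur with probability zero), and is otherwise super-uniform, i.e.\ $\Pr(p\le\alpha)\le\alpha$. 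For the Monte Carlo version that draws $B$ permutations, I would append the identity/observed statistic to the $B$ resampled ones and run the same rank argument on the resulting exchangeable collection of size $B+1$, which yields exact uniformity on $\{1/(B+1),\dots,1\}$ (super-uniformity under ties); the continuous idealization recovers the claimed uniform-on-$[0,1]$ statement.

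The main obstacle is the exchangeability lemma: I must turn the single-pair conditional-independence property defining $\mathbb{F}$ into genuine invariance of the entire labeled batch under the label-permutation group, verifying that permuting labels is the same in law as permuting the exchangeable feature coordinates, and checking this for heterogeneous label multisets such as the one-zero/many-ones pattern of Example~\hyperref[example1]{1} as well as the all-distinct pattern of Example~\hyperref[example4]{4}. The remaining care is bookkeeping around ties and the discreteness of the Monte Carlo estimate---these only weaken exact uniformity to super-uniformity, which still certifies a valid level-$\alpha$ test, and exact uniformity holds in the tie-free continuous regime asserted by the theorem.
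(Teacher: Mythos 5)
Your proposal is correct, and it rests on the same null-hypothesis fact as the paper's proof --- under $H_0$ the draws $\theta,\tilde\theta_1,\dots,\tilde\theta_M$ are i.i.d.\ given $y$, so labels carry no information about features --- but the two arguments are organized differently. The paper derives \emph{marginal} independence of $t$ and $\phi$ (hence of $t$ and $c(\phi)$) from the factorization $\pi(y,t,\phi)=\pi_t(t)\bigl(\pi_Y(y)\pi(\phi|y)\bigr)$, asserts from this that $\mathrm{LPD}^{\mathrm{val}}_b \stackrel{d}{=} \mathrm{LPD}^{\mathrm{val}}$, and then obtains the discrete-uniform law of the $p$-value on $\{0,1/B,\dots,1\}$ via the Beta--binomial identity $\Pr(p=m/B)=\int_0^1 \binom{B}{m}u^m(1-u)^{B-m}\,du = 1/(B+1)$. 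You instead run the textbook group-invariance randomization argument: joint invariance of the labeled validation set under the within-batch relabeling group $G$, exchangeability of $\{\mathrm{LPD}^{\mathrm{val}}_g : g\in G\}$ conditional on the orbit, and uniformity of the observed rank. Your route buys something real: because examples within a batch are dependent (they share $y$ and the $\tilde\theta$'s), marginal independence of a single $(t,\phi)$ pair does not by itself give equality in law of the \emph{summed} LPD under relabeling --- one needs the joint invariance, which is exactly your exchangeability lemma, and which also explains cleanly why the within-batch permutation is valid while the naive across-batch permutation is not (a failure the paper only demonstrates empirically in its appendix figure). You are also more careful about ties (super-uniformity as a fallback) and about the $B+1$ augmentation for the Monte Carlo version, and you justify conditioning on $c$ via the no-batch-splitting of train and validation. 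The paper's version is shorter and gives the exact finite-$B$ distribution in one line; yours is the more self-contained and rigorous derivation of the same conclusion.
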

Our test is exact when the simulation size  $S$ and $M$ is finite, while the original SBC  \cite{cook2006validation} relied on asymptotic approximation. 
Further, we \emph{learn} the test statistic via the classifier, and our test is valid regardless of the dimension of $\theta$ and there is no need to worry about post-learning testing \citep{berk2013valid}, 
while the usual SBC rank test will suffer from low power due to multiple testing. 

Our test is always valid even if the classifier $c$ is not optimal. Why does our \emph{learning} step help?  For notional brevity, here we only reason for the binary classification.  For any $p, q$,  we apply the binary classification as described in Example \hyperref[example1]{1}, $t=0 ~\mathrm{or} ~1$ and $\phi=(\theta, y)$. 
\begin{theorem} [Sufficiency] \label{thm_sufficient} 
Let $\hat c(\theta, y) = \Pr(t=1|\theta, y)$ be the probability of label 1 in the optimal classifier as per \eqref{eq:div},  and let $\pi_c^p$ and $\pi_c^q$ be the one-dimensional distributions of this $ \hat c(\theta, y)$ when $(\theta, y)$ is sampled from $p(\theta, y)$ or  from $p(y)q(\theta|y)$ respectively, then 
 (i) Conditional  on the summary  statistic $\hat c$, the label $t$ is independent of features $\phi=(\theta, y)$. 
 (ii) Under regularity conditions, there is no loss of information in divergence as the joint divergence is the same as the projected divergence in the one-dimensional $\hat c$-space
 $D_1(p, q) = D_1(\pi_c^p, \pi_c^q)$.
 %(iii) when $p \neq  q$, if there is another one-dimensional test statistic $v(\theta, y)$  satisfy the same condition $D_2(p, q) = D_2(\pi_v^p, \pi_v^q)$, then $v$ is equivalent to  $\hat c$ up to a bjective mapping.
\end{theorem}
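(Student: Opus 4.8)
The plan is to recognize both statements as classical facts about the Bayes-optimal classifier and its likelihood-ratio statistic, recast in the mutual-information language that makes $D_1$ transparent. First I would pin down the joint law of $(t,\phi)$ generated by Example~1: the label satisfies $\Pr(t=0)=w$ and $\Pr(t=1)=1-w$ with $w=1/(M+1)$, and conditionally $\phi=(\theta,y)\sim p(y)p(\theta|y)$ when $t=0$ and $\phi\sim p(y)q(\theta|y)$ when $t=1$, so the feature marginal is $p(y)r(\theta|y)$. Since the cross-entropy-optimal classifier equals the true conditional probability (as noted after \eqref{eq_D1}), I have the closed form
\[
\hat c(\theta,y)=\Pr(t=1\mid\theta,y)=\frac{(1-w)\,q(\theta|y)}{w\,p(\theta|y)+(1-w)\,q(\theta|y)},
\]
which depends on $(\theta,y)$ only through the likelihood ratio $q(\theta|y)/p(\theta|y)$ and is therefore a deterministic, $\sigma(\phi)$-measurable function of the features.

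For part (i), the key observation is the self-consistency (calibration) of this optimal score. Because $\hat c$ is a function of $\phi$, conditioning on the pair $(\phi,\hat c)$ is the same as conditioning on $\phi$, so $\Pr(t=1\mid\phi,\hat c)=\Pr(t=1\mid\phi)=\hat c$. On the other hand the tower property gives $\Pr(t=1\mid\hat c)=\E[\Pr(t=1\mid\phi)\mid\hat c]=\E[\hat c\mid\hat c]=\hat c$. Hence $\Pr(t\mid\phi,\hat c)=\Pr(t\mid\hat c)$, which is exactly the assertion that $t$ is conditionally independent of $\phi=(\theta,y)$ given $\hat c$. This is the information-losslessness of the likelihood-ratio statistic, the analogue of the Neyman--Fisher factorization for this two-distribution problem.

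For part (ii), I would translate both sides into mutual information. Writing out the conditional-KL definition shows that $D_1(p,q)=I(t;\phi)$ under the Example~1 joint law: indeed $w\KL(p(y)p(\theta|y)\Vert p(y)r(\theta|y))+(1-w)\KL(p(y)q(\theta|y)\Vert p(y)r(\theta|y))$ collapses to the conditional mixture divergence in \eqref{eq_D1}. Likewise, since $\pi_c^p$ and $\pi_c^q$ are exactly the laws of $\hat c$ under $t=0$ and $t=1$, the projected divergence is $D_1(\pi_c^p,\pi_c^q)=I(t;\hat c)$. The chain rule for mutual information, using that $\hat c$ is a function of $\phi$, gives $I(t;\phi)=I(t;\hat c)+I(t;\phi\mid\hat c)$, and part (i) forces the last term to vanish, yielding $D_1(p,q)=D_1(\pi_c^p,\pi_c^q)$. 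Equivalently, this is the equality case of the data-processing inequality, attained precisely because $\hat c$ is sufficient.

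The main obstacle is not the algebra but the measure-theoretic bookkeeping hidden in the ``regularity conditions.'' I need regular conditional distributions for $\Pr(\cdot\mid\hat c)$ to be well defined, which is where the self-consistency $\E[\hat c\mid\hat c]=\hat c$ must be justified when $\hat c$ has a continuous (or mixed) law; I would also require $p$ and $q$ to be mutually absolutely continuous where relevant so the ratio defining $\hat c$ is finite a.e., and the mutual informations to be finite so the chain-rule decomposition is valid and no $\infty-\infty$ cancellation occurs. Handling the atoms of $\hat c$ (for instance on the set where $p$ and $q$ have disjoint support, where $\hat c\in\{0,1\}$) cleanly is the one place that needs care; everything else follows from sufficiency and the chain rule.
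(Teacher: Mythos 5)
Your proposal is correct and follows essentially the same route as the paper: part (i) is the same observation that $\hat c$ is $\sigma(\phi)$-measurable and self-consistent, and part (ii) is the same chain-rule-plus-conditional-independence argument, merely packaged as the mutual-information identity $I(t;\phi)=I(t;\hat c)+I(t;\phi\mid\hat c)$ instead of applying the KL chain rule to each of the two terms of $D_1$ separately. Your explicit attention to the regularity conditions (finiteness of the divergences and existence of regular conditional distributions) is a welcome addition that the paper leaves implicit.
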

%$\hat c\in [0, 1]$ is the ``best'' one-dimensional summary of the high dimensional $\theta\times y$ space, 
That is, the best prediction $\hat c$ entails the best one-dimensional summary statistics of the high dimensional $\theta\times y$ space. 
%It in the sense that  all divergence between $p(\theta, y)$ and $q(\theta, y)$ is completely captured by the divergence on $\hat c$. 
The enhancement of the test power from using the sufficient statistics is then assured by the Neyman-Pearson lemma \citep{neyman1936sufficient}.

%\textbf{Importance reweighing.}

\subsection{Feature engineering:  use rank,  log density and  symmetry}\label{sec:featureEng}
Reassuringly, whatever the classifier $c$, the prediction ability $D(p, q, \Phi, c)$  is always a lower bound to the corresponding divergence $D^{\mathrm{opt}}(p, q, \Phi)$, and the permutation test is always exact. But we still wish to train a ``good'' classifier in terms of its out-of-sample performance, for a tighter bound in divergence, and a higher power in testing.  We will typically use a flexible parametric  family such as a  multilayer perceptron (MLP) network to train the classifier.
 % Yet we can do better than black-box training by incorporating the underlying structure of the label-generating process and designing useful features: use a function of $\phi$ to predict $t$, which we will discuss in this subsection.  

The oracle optimal probabilistic classifier is the true label probability, and in the binary and multiclass classification (Example  \hyperref[example1]{1} and  \hyperref[example4]{4}),  the oracle  has closed-form expressions, although we cannot evaluate:  
\begin{equation}\label{eq_opt_cla}
\Pr_{\mathrm{binary}}( t=0 | \theta, y   ) =  \frac{p(\theta | y)}{p(\theta | y) + q(\theta|y)}, \quad
\Pr_{\mathrm{multi}}( t | \theta_0, \dots, \theta_M, y   ) =  \frac{p(\theta_t, y) / q(\theta_t | y)  }{\sum_{k=0}^M  p(\theta_k, y) / q(\theta_k | y)  }.    
\end{equation}

\paragraph{Statistically-meaningful feature.}
Depending on the inference task, we have more information than just the sample points and should use them in the classifier.
In light of the shape and component of the optimal classifiers \eqref{eq_opt_cla}, the following statistically-meaningful  features are useful whenever available: (i) The log target density $\log p(\theta| y)$. As proxies, the log joint density $\log p(\theta, y)$ and the log likelihood $\log p(y|\theta)$ are often known to traditional Bayesian models.
(ii) The log approximate density $\log q(\theta| y)$, known to volitional and normalizing flows. 
(iii) When the log approximate density is unknown, a transformation is to integrate the density and obtain the posterior CDF, $Q(\theta| y)= \E_{x\sim q(x|y)} \mathbbm{1}(x > \theta)$, and this CDF  can be approximated by the rank statistics among the approximate draws up to a rescaling $r(\theta, y)\coloneqq \sum_{m=1, i=1}^{M S}  \mathbbm{1}(y_i=y)  \mathbbm{1} (\theta < \tilde \theta_{im} )$.  See Appendix Table \ref{table1} for the usage and extension of these features in binary and multiclass classifiers. 
%As we have discussed in Example 3, the classical SBC can be viewed as naive Bayes classifier on the rank feature, and therefore a special case of our framework.  
%Lastly, if the inference $q$ has learned a feature mapping of $y$, it can be  plugged in here as a classifier feature.

\paragraph{Linear features.} 
We call the log likelihood, the log approximate density, or log prior (whenever available) linear features, and denote them to be $l(\theta, y)$. For example if both likelihood and $q$ density is known, then $l(\theta, y)=(\log p(y|\theta), \log q(\theta|y))$.  
Because they appear in the oracle \ref{eq_opt_cla}, we will keep linear features in the last layer of the network (followed by a softmax). 
We recommend parameterizing the binary classification in  the following form:
\begin{equation}\label{eq:binary_class}
\Pr(t=1|(\theta, y) )=\mathrm{inv\_logit}[\mathrm{MLP}(\theta, y)+ w^T l(\theta, y) ].     
\end{equation}

%Theorem \ref{thm:divergece} states the multiclass label produces the largest possible divergence, but 
\paragraph{Symmetry in multiclass classifier.}
The multi-class classifier is harder to train.  Luckily, we can use the symmetry of the oracle classifier \eqref{eq_opt_cla}: the  probability of class $k$ is proportional to a function of $(\theta_k, y)$ only, hence we recommend parameterizing the multiclass probability as 
\begin{equation}\label{eq_exchange}
\Pr(t=k|(\theta_0, \theta_1, \dots, \theta_M, y))=\frac{\exp (g(\theta_k, y))}{\sum_{k^\prime=0}^M \exp ( g(\theta_{k^\prime}, y))},  ~g(\theta, y) = \mathrm{MLP}(\theta, y)  + w^T l(\theta, y), 
\end{equation}
%where $\beta$ is neural network parameters, and 
where $l(\theta, y)$ is available linear features. We only need to learn a reduced function from $\theta \times y$ to $\R$, instead of from $\theta^{M+1}\times y$ to $\R$, reducing the complexity while still keeping  the oracle  \eqref{eq_opt_cla} attainable.

\paragraph{Zero waste calibration for MCMC.}  
If $\tilde \theta_{1}, \dots, \tilde \theta_{M}\sim q(\theta|y)$ are produced by MCMC sampler, typically  $\tilde \theta$ has autocorrelations.  Although classical SBC originated from MCMC application, the rank test  requires independent samples, so SBC can only handle autocorrelation by thinning: to subsample $\tilde \theta_{m}$ and hope the thinned $\tilde \theta$ draws are independent. Thinning is a waste of draws, inefficient when the simulations are expensive, and the thinned samples are never truly independent.
With MCMC draws $\tilde \theta$, our method could adopt thinning as our Thm.~\ref{thm:divergece} and \ref{thm_test} are valid even when $M=1$.

Yet we can do better by using all draws. The ``separable'' network architecture \eqref{eq_exchange} is ready to use for MCMC samples.
For example, we sample $(\theta, y)\sim p(\theta, y)$,  and sample  $(\tilde \theta_{1}, \dots, \tilde \theta_{M})$  from a MCMC sampler whose stationary distribution we believe is $q(\theta| y)$, and generate examples from the multiclass permutation (Example \hyperref[example4]{4}).   Then we run a separable classifier\eqref{eq_exchange} to predict $t$. Intuitively, the separable network design \eqref{eq_exchange} avoids the interaction between $\tilde \theta_{m}$ with $\tilde \theta_{m^\prime}$, and disallows the network to predict $t$ based on the autocorrelation or clustering of $\tilde \theta$. The next theorem states the validity of our method in MCMC settings without thinning.
%Denote $D_4^{\mathrm{sep}}(p, q)$ to be the predictive ability of the optimal classifier among all separable classifiers. We can show that $D_4^{\mathrm{sep}}(p, q) = D_4(p, q)$, the desired divergence between exact posterior $p(\theta|y)$ and the  marginal distribution $q(\theta|y)$.

\begin{theorem}[MCMC]  \label{thm_MCMC}
Suppose  we sample $(\theta, y)\sim p(\theta, y)$,  and sample  $(\tilde \theta_{1}, \dots, \tilde \theta_{M})$  from a MCMC sampler whose stationary distribution we believe is $q(\theta| y)$ (i.e., marginally $\tilde \theta_{i}$ is from $q(\theta| y)$), and generate examples $((t_1, \phi_1), \dots,  (t_{M+1}, \phi_{M+1}))$ from the multiclass permutation,
such that $\phi= (\theta_0, \theta_1, \dots, \theta_M).$
Then we run an exchangeable classifier \eqref{eq_exchange} in which $g$ is any $\Theta \times \mathcal{Y} \to \R$ mapping.
%Intuitively, this separable network design avoids the interaction between $\tilde \theta_{m}$ with $\tilde \theta_{m^\prime}$, and disallows the network to predict $t$ based on the autocorrelation or clustering of $\tilde \theta$. 
Denote $D_4^{\mathrm{MCMC, sep}}(p, q)$ to be the predictive ability of the optimal classifier among all separable classifiers \eqref{eq_exchange}, then $D_4^{\mathrm{MCMC, sep}}(p, q) = D_4(p, q).$
\end{theorem}

\paragraph{Dimension reduction and nuisance parameter.}  Sometimes we only care about the sampling quality of one or a few key dimensions of the parameter space, then we only need to restrict the classifier to use these targeted dimensions, as a result of Theorem \ref{thm:divergece}. For example, in binary classification, if we reduce the feature $\phi=(\theta, y)$ to $\phi=(h(\theta), y)$ in the classifier, where $h(\theta)$ can be a subset of $\theta$ dimensions,
%full $\theta$ by a function $h(\theta)$, e.g., a subset of dimensions, in the classifier and reduce feature to be $(h(\theta), y)$  in binary or  $(h(\theta_0), \dots, h(\theta_M),y)$ in multiclass).
then the resulting classification divergence becomes projected divergence between $h(\theta) |y, \theta \sim p(\theta|y)$ and $h(\theta) |y,  \theta \sim q(\theta|y)$,
%or $D_4 ( p(h(\theta) |y) , q(h(\theta) |y))$, 
and other nuisance parameters do not impact the diagnostics.

\paragraph{Weighing for imbalanced binary classification.} When $M$ is big, the  binary classification (Example \hyperref[example1]{1}) can suffer from imbalanced labels, and the divergence   in \eqref{eq_D1} degenerates: $D_1(p, q) \to 0$ as $M \to \infty$.  One solution is to use the  multiclass classification which has a meaningful limit (Thm.~\ref{thm_asymptotic}). Another solution is to reweigh the loss function or log predictive density by the label $t$. If the weights of class 1 and class 0 examples are $\frac{M+1}{2M}$ and  $\frac{M+1}{2}$, used in both training and validation log prediction density, then regardless of $M$, the weighted classification is  equivalent to balanced classification, and the resulting divergence is the symmetric Jensen-Shannon (JS) divergence $\frac{1}{2} \KL[p(\theta|y) || r(\theta|y)] +\frac{1}{2}  \KL[q(\theta|y) || r(\theta|y)]$, where $r(\theta|y)= \frac{1}{2} [p(\theta|y)+q(\theta|y)]$. See Appendix \ref{sec:addition} for proofs.

%Given binary label $t$, feature $\phi$, a classifier $c$,  $\Pr(t|\phi, c)$ is the class probability of class $t$ under this classier, then the weighted utility of this classifier is $u(k, c) =  \frac{2M}{M+1} \mathbbm{1} (k=0) \log \Pr(t=k|\phi, c)   +    \frac{2}{M+1}   \mathbbm{1} (k=0) \log \Pr(t=k|\phi, c) .$  Use this weighted utility to replace the generic log score  in \eqref{eq_elpd}, and use it in both training and validation,  then the weighted classification is  equivalent to balanced class  classification, and the resulting divergence is the symmetric Jensen-Shannon distance $D_2(p, q) = \KL(p || r) + \KL(q || r)$ , where $r(\theta|y)= (p(\theta|y)+q(\theta|y)) /2$, regardless of $M$. %(Theorem \ref{} in Appendix.)

\section{Experiments}\label{sec:exp}

\begin{figure}
\begin{minipage}{.44\textwidth}
    \includegraphics[width=2.25in ]{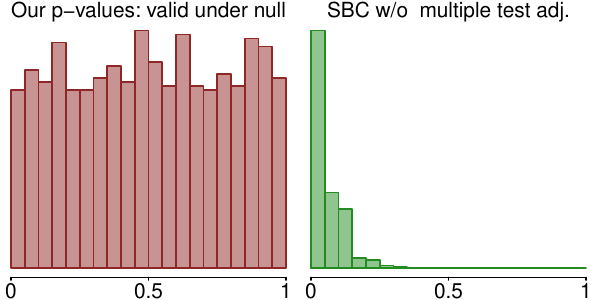}
    \caption{\em \small Our test is valid as it yields uniformly distributed $p$-values under the null $p(\theta|y)=q(\theta|y)$. We check all dimensions at once.}
    \label{fig:pnull}
\end{minipage}
~~~~~
\begin{minipage}{.54\textwidth}
\centering
    \includegraphics[width=2.6in ]{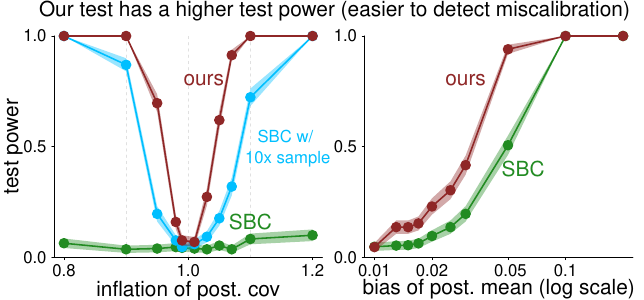}
    \caption{\em \small Our test  has a uniformly higher power than SBC rank test. We simulate wrong inference $q$ by multiplying the posterior  covariance or adding biases to the posterior mean. }
    \label{fig:power}
\end{minipage}
\end{figure}

\textbf{Closed-form example.} Consider a  multivariate normal  parameter prior $\theta \in \R^d \sim \mathrm{MVN}(\mathbf{0}, \mathrm{Id}_{d})$ and a normal data  model $y | \theta \sim \mathrm{MVN}(\theta, \Sigma)$, so the exact  posterior $p(\theta|y)$ is explicit. In the experiments, we use neural nets to train binary \eqref{eq:binary_class} or symmetric multiclass classifiers \eqref{eq_exchange}, which we generally recommend. We assume the inference posterior $\log q(\theta|y)$ is known and added to the classifier. 

\emph{Legitimacy of testing}. 
First, to validate our hypothesis test under the null, we use true inference $q(\theta|y) = p(\theta|y)$. With $d=16$, we simulate $S=500$ draws of $(\theta, y) \sim p(\theta, y)$, then generate true posterior $\tilde \theta_m \sim p(\theta|y)$, and run our binary-classifier and obtain a permutation $p$-value. We repeat this testing procedure 1000 times to obtain the distribution of the $p$-values under the null, which is uniform as shown in Fig.~\ref{fig:pnull}, in agreement with Thm.~\ref{thm_test}.  Because $\theta$ has $d$ dimensions, a SBC rank test on each margin separately is invalid without adjustment, and would require Bonferroni corrections. \vspace{0.4em}\\    
\emph{Power}. 
We consider two sampling corruptions: (i) bias, where we add a scalar noise (0.01 to 0.2) to each dimension of the true posterior mean, (ii) variance, where we inflate the true posterior covariance matrix by multiplying a scalar factor  (0.8 to 1.2). 
%(iii) shape, where $q(\theta|y)$ is a mixture of true posterior and a-$\delta$ component of student-$t$ distribution. 
We compare our discriminative tests to a   Bonferroni-corrected SBC chi-squared rank-test.
In both settings, we fix a  5\% type-I error and compute the power from 1000 repeated tests.  Fig.~\ref{fig:power} shows that our method has uniformly higher power than SBC, sometimes as good as SBC with a 10 times bigger simulation sample size.

\begin{figure}
\begin{minipage}{.46\textwidth}
\centering
    \includegraphics[width=\linewidth]{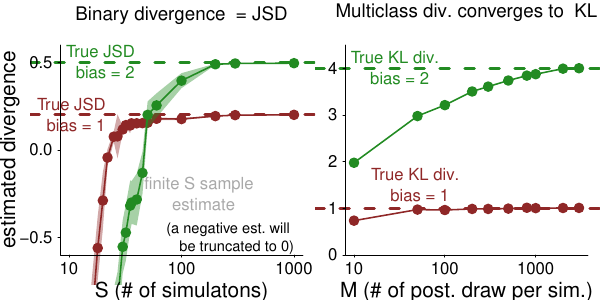}
    \vspace{-0.8em}
    \caption{\em \small In the Gaussian example with bias = {\color{red}{1}} or {\color{green}{2}}, with a moderately big simulation sample size $S$, the divergence estimated from a binary (left panel) or multiclass (right panel) classifier matches its theory quantity (dashed): the Jensen-Shannon  or Kullback–Leibler divergence in the big-$M$ limit.}
    \label{fig:div}
\end{minipage}
~~~~
\begin{minipage}{.51\textwidth}
\centering
    \includegraphics[width=\linewidth]{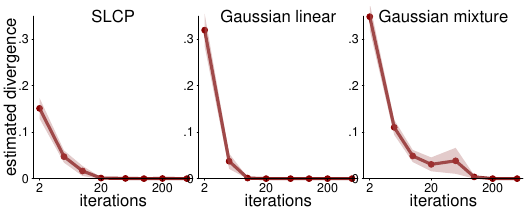}
   \caption{\em \small We apply binary classifier calibration to posterior inferences on three models from the SBI benchmark, and check the sampling 
 quality  after some iterations. The $x$-axis  is the number of MCMC iterations. The  $y$-axis is the estimated JS divergence ($\pm$ standard error) between the true target and sampled distribution at a given number of iterations, indicating gradual convergence.
  } \label{fig:slcp}
\end{minipage}
\end{figure}

\emph{Divergence estimate}. 
The left panel of Fig.~\ref{fig:div} validates  Thm.~\ref{thm:divergece}: We run a weighted binary classifier on the Gaussian simulations with a scalar bias 1 or 2 added to the posterior mean in $q$. As the number of simulations $S$ grows, the learned classifier quickly approaches the optimal, and the prediction ability matches the theory truth (dashed line): the Jensen-Shannon divergence between $p(\theta|y)$ and $q(\theta|y)$. The right panel validates Thm.~\ref{thm_asymptotic}: With a fixed simulation size $S$ , we increase $M$, the number of posterior draws from $q$, and run a multiclass classification. When $M$ is big, the estimated divergence converges from below to the dashed line, which is the theory limit  KL$(p(\theta|y), q(\theta|y))$ in Thm.~\ref{thm_asymptotic}.

\paragraph{Benchmark examples.}
Next, we apply our calibration to three models from the SBI benchmark \citep{lueckmann2021benchmarking}: the simple likelihood complex posterior (SLCP), the Gaussian linear, and the Gaussian mixture model. In each dataset, we run adaptive No-U-Turn sampler (NUTS) and check the quality of the sampled distribution after a fixed number of iterations, varying from 2 to 2000 (we use  equal number of iterations for warm-up and for sampling, and the warm-up samples were thrown away).  At each given MCMC iterations, we run our classifier calibration, and estimate the JS divergence, as reported in Fig. \ref{fig:slcp}. In all three panels, we are able to detect the inference flaws at early iterations and observe a gradual convergence to zero.

\textbf{Visual check.} Our diagnostic outputs rigorous numerical estimates, 
but it also facilities visual checks. We make a scatter plot of the binary classifier prediction $\Pr(t=1|\phi)$, a proxy of $p(\theta|y)/q(\theta|y)$, against any one-dimensional parameter we need to check: If that parameter is under- or over-confident, this scatter plot will display a U- or inverted-U-shaped trend.  Compared with SBC rank  histograms, our visualization can further check the magnitude of mismatch (how far away $\Pr(t=1|\phi)$ is from 0.5), tail behavior (small $q(\theta|y)$), and several dimensions jointly. Fig~\ref{fig:pvis} is  a visual check in the Gaussian example when we multiply the posterior covariance in inference $q$ by 0.8, 1, or 1.2.

\begin{figure}
\begin{minipage}{.49\textwidth}
    \includegraphics[width=\linewidth ]{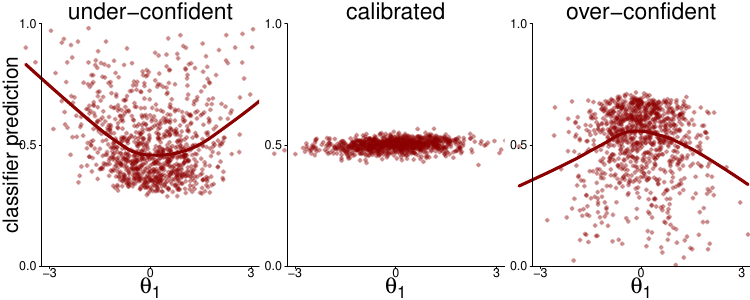}
     \captionsetup{width=.98\linewidth}
    \caption{\em \small Scatter plots of the classifier prediction v.s. a one-dimensional parameter we need to check can reveal marginal over- or under-confidence in inference $q(\theta|y)$ from a U or inverted-U shape.}
    \label{fig:pvis}
\end{minipage}
\begin{minipage}{.5\textwidth}
    \includegraphics[width=\linewidth ]{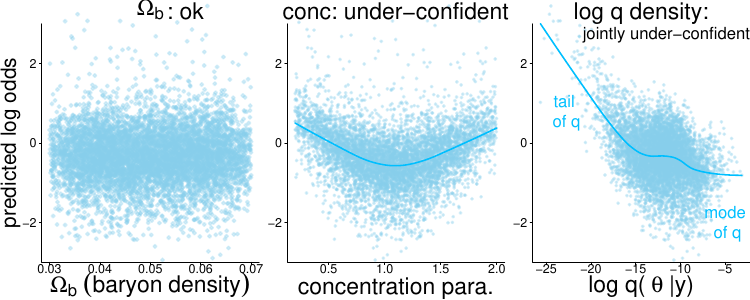}
    \caption{\em \small  In the cosmology example we visually check the classifier predicted log odds v.s. cosmology parameter $\Omega_{b}$ and \texttt{conc}, and $\log q(\theta|y)$.  Underconfidence is detected in \texttt{conc} marginally, and more so in the joint.} 
    \label{fig:pvis2}
\end{minipage}
\end{figure}

\begin{wrapfigure}{r}{9cm} 
\footnotesize
\vspace{-1.6em}
\begin{tabular}{c|c|c|c}
\small
         &  model 1, $d_y$=38  & model 2, $d_y$=151   & model 3, $d_y$=1031  \\
\hline
1-flow  &  0.015 (0.004) & 0.076 (0.007) & 0.068 (0.007) \\
5-ens.&   0.013 (0.004) &   0.044 (0.006)       & 0.035 (0.006) 
\end{tabular} 
\vspace{-1.8em}
\end{wrapfigure}
\textbf{Galaxy clustering.}
We would like to model galaxy spatial clustering observations in a Lambda cold dark matter framework \citep{regaldo2023simbig}, where observations $y$ correspond to statistical descriptors of galaxy clustering, and a 14-dimensional parameter $\theta$ encodes key cosmological information about the Universe. The forward simulation $y |\theta$ involves expensive $N$-body simulations (each single $y$ simulation takes around 5000 cpu hours). We consider three different physical models $p$: they contain the same cosmology parameter $\theta$ but  three types of  observations whose  dimension $d_y$ varies from 38 to 1031, reflecting three power-spectrum designs. We  apply simulation based inference to sample from each of the models, where we try various normalizing flow architectures and return either the best architecture or the ensemble of five architectures. We now apply our discriminative calibration to these six approximate inferences using a weighted binary classifier trained from $S=2000, M=500$ simulations for each inference (that is 1 million examples). We have added the log densities $\log q(\theta|y)$ as extra features in the classifier, as they are known in the  normalizing-flows.
The table above shows the estimated Jensen–Shannon distances and standard deviation. Compared this table with the estimates from blackbox MLP classifier (Appendix Fig.~\ref{table:MLP}), using statistical features greatly improves the label prediction and tightens the divergence estimate. From the table above, all inferences $q$ have a significantly non-zero but small divergence, among which the 5-ensemble  
that uses a mixture of 5 flows always has a lower divergence so is more trustworthy. Further visualization of the classification log odds  (Fig.~\ref{fig:pvis2}) reveals that $q$ is under-confident in the  parameter that encodes the galaxy concentration rate. To interpret the tail behavior, the predicted log odds are negatively correlated with log joint normalizing-flow density  $q(\theta|y)$, suggesting $q (\theta|y) > p (\theta|y)$ in the tail  of $q (\theta|y)$, another evidence of  $q$ underconfidence.   

We share Jax implementation of our binary and multiclass classifier calibration in \href{https://github.com/yao-yl/DiscCalibration}{Github}\footnote{ 
\texttt{https://github.com/yao-yl/DiscCalibration}}.

\section{Discussions}\label{sec:discussion}
This paper develops a classifier-based diagnostic tool for Bayesian computation, applicable to MCMC, variational  and simulation based inference, or even their ensembles.  We learn  test statistics from data using classification. Through a flexible label mapping, our method yields a (family of) computable divergence metric and an always valid testing $p$-value, and the statistical power is typically higher than the traditional rank based SBC. Various statically-meaningful features are available depending on the task and we include them in the classifier. Visual exploration is also supported. 

\textbf{Related diagnostics.}  
The early idea of simulation based calibration dates back to \cite{geweke2004getting} who  compared the prior and the data-averaged posterior, i.e., to check $p(\theta)= \int\! q(\theta|y)p(y)dy$ using simulations. \cite{yu2021assessment} further developed techniques for the first and second moment estimate of the data-averaged posterior using the law of total variance.  Our method includes such moment comparison as special cases: using the no-$y$ binary labeling (Example \ref{example2}), then comparing the empirical moments of the  $q$ sample the $p$ sample can be achieved through a naive Bayes classifier or a linear discriminant analysis. 

The rank-based SBC \cite{cook2006validation, talts2018validating} can be recovered by ours using binary labels and taking ranks to be features (Example \hyperref[example3]{3}). The rank statistic is central to SBC, which follows the classical frequentist approach---using the tail probability under the posited model quantifies the extremeness of the realized value\citep{gelman1996posterior} is only a convenient way to locate the observations in the reference distribution, especially in the past when high dimensional data analysis was not developed---it is the use of modern learning tools that sharpens our diagnostic. 
Recently, SBC has developed various heuristics for designing (one-dimensional) test statistics $\phi(\theta, y)$. Such rank tests are recovered by our method by including the rank of $\phi$ in our features (Sec. \ref{sec:featureEng}). For example,  \citep{modrak2022simulation} proposed to test the rank of the likelihood $\phi=p(\theta|y)$ in  MCMC, \cite{lemos2023sampling} looked at the rank of proposal density $q(\theta|y)$, \cite{linhart2022validation} used the $q$-probability integral transformation in normalizing flows. In light of our Theorem \ref{thm_sufficient}, the optimal test statistic is related to the density ratio and hence problem-specific, which is why our method includes all known useful features in the classifier and learns the test statistic from data.

\textbf{Classifier two-sample test.}
Using classification to compare two-sample closeness is not a new idea. The classifier two-sample test (C2ST)
has inspired  the generative adversarial network (GAN) \citep{goodfellow2020generative} and  conditional GAN \citep{mirza2014conditional}. \cite{ramesh2022gatsbi} has developed GAN-typed inference tools for SBI. 
In the same vein, \cite{lambert2022r} used classifiers to diagnose multiple-run MCMC, and \cite{masserano2022simulation} used classifiers to learn the likelihood ratio for frequentist calibration, which is further related to using regression to learn the propensity score \citep{rosenbaum1983central} in observational studies. 
The theory correspondence between binary classification loss and  distribution divergence has been studied in \cite{liese2006divergences, nguyen2009surrogate, reid2011information}. 
This present paper not only applies this classifier-for-two-sample-test idea to amortized Bayesian inference to obtain a rigorous test,  but also advances the classifier framework by developing the theory of the ``label mapping'', while the traditional GAN-type approach falls in the one-class-per-group category and deploy binary classifiers as in Example \hyperref[example1]{1}.  Our extension is particularly useful when samples are overlapped (multiple $\theta$ per $y$), autocorrelated, and imbalanced.

\textbf{KL divergence estimate from two samples.}
As a byproduct, our proposed multiclass-classifier provides a consistent KL divergence estimate (Thm.~\ref{thm_asymptotic}) from two samples. Compared with existing two-sample KL estimate tools from the $f$-divergence representation \citep{nguyen2010estimating} or the Donsker-Varadhan representation \citep{belghazi2018mine}, our multiclass-classifier estimate appears versatile for it applies to samples with between-sample dependence or within-sample
auto-correlation. It is plausible to apply our  multiclass-classifier approach to other two-sample divergence estimation tasks such as the $f$-GAN \citep{nowozin2016f}, which we leave for  future investigation.

\textbf{Limitations and future directions.}  
%Same as SBC, our method assesses the global closeness between $p(\theta|y)$ and $q(\theta|y)$ for all $y$, a global task relevant to developing computational algorithms and statistical software. When having a particular observation $y=y^{\mathrm{obs}}$, the global diagnostic could either be too harsh for its need of all $y$ or untargeted for its lack of $y^{\mathrm{obs}}$. There has been parallel development on  local diagnostics for MCMC \citep{geyer1992practical,gelman1992inference, vehtari2021rank, gorham2017measuring}, variational inference \citep{yao2018yes, domke2021easy} and simulation-based inference \citep{yu2021assessment, linhart2022validation}. 
%As for future investigation, we can extend our classifier approach to local diagnostics by either including $y^{\mathrm{obs}}$ as  feature functions or only use a slice of training data where $y$ is close to $y^{\mathrm{obs}}$. Besides,  our approach
%diagnoses computation, but does not diagnose how well the statistical model $p$ fits the observations. It is another future direction to extend our approach to the posterior predictive check \citep{gelman1996posterior, gabry2019visualization} and learn the optimal test statistics for assessing model misspecification.
Like traditional SBC, our method assesses the difference between $p(\theta|y)$ and $q(\theta|y)$, averaged over $y$. This is a ``global'' measure relevant to developing algorithms and statistical software. But sometimes the concern is how close $p$ and $q$ are for some particular observation $y=y^{\mathrm{obs}}$. ``Local'' diagnostics have been developed for MCMC \citep{geyer1992practical,gelman1992inference, vehtari2021rank, gorham2017measuring}, variational inference \citep{yao2018yes, domke2021easy} and simulation-based inference \citep{yu2021assessment, linhart2022validation} that try to assess $q(\theta|y^{\mathrm{obs}})$ only. It would be valuable to extend our approach to address these cases. Another future direction would be to extend our approach to posterior predictive checks \citep{gelman1996posterior, gabry2019visualization, vandegar2021neural} that diagnose how well the statistical model $p$ fits the observed data.

\vspace{1.5em}

\section*{Acknowledgement}
The authors thank Bruno Regaldo-Saint Blancard for his help on galaxy clustering data and simulations. The authors  thank Andreas Buja, Andrew Gelman, and Bertrand Clarke for discussions.

\small
\bibliographystyle{apalike}
\bibliography{sbc}	

\begin{thebibliography}{}

\bibitem[Agrawal et~al., 2020]{agrawal2020advances}
Agrawal, A., Sheldon, D.~R., and Domke, J. (2020).
\newblock Advances in black-box {VI}: Normalizing flows, importance weighting,
  and optimization.
\newblock In {\em Advances in Neural Information Processing Systems}.

\bibitem[Belghazi et~al., 2018]{belghazi2018mine}
Belghazi, M.~I., Baratin, A., Rajeswar, S., Ozair, S., Bengio, Y., Courville,
  A., and Hjelm, R.~D. (2018).
\newblock {MINE}: mutual information neural estimation.
\newblock In {\em International Conference on Machine Learning}.

\bibitem[Berk et~al., 2013]{berk2013valid}
Berk, R., Brown, L., Buja, A., Zhang, K., and Zhao, L. (2013).
\newblock Valid post-selection inference.
\newblock {\em Annals of Statistics}, 41(2):802--837.

\bibitem[Cook et~al., 2006]{cook2006validation}
Cook, S.~R., Gelman, A., and Rubin, D.~B. (2006).
\newblock Validation of software for {B}ayesian models using posterior
  quantiles.
\newblock {\em Journal of Computational and Graphical Statistics},
  15(3):675--692.

\bibitem[Cover and Thomas, 1991]{cover1991elements}
Cover, T.~M. and Thomas, J.~A. (1991).
\newblock {\em Elements of Information Theory}.
\newblock John Wiley \& Sons, 2rd edition.

\bibitem[Cranmer et~al., 2020]{cranmer2020frontier}
Cranmer, K., Brehmer, J., and Louppe, G. (2020).
\newblock The frontier of simulation-based inference.
\newblock {\em Proceedings of the National Academy of Sciences},
  117(48):30055--30062.

\bibitem[Dax et~al., 2021]{dax2021real}
Dax, M., Green, S.~R., Gair, J., Macke, J.~H., Buonanno, A., and Sch{\"o}lkopf,
  B. (2021).
\newblock Real-time gravitational wave science with neural posterior
  estimation.
\newblock {\em Physical Review Letters}, 127(24):241103.

\bibitem[Domke, 2021]{domke2021easy}
Domke, J. (2021).
\newblock An easy to interpret diagnostic for approximate inference:
  {S}ymmetric divergence over simulations.
\newblock {\em arXiv:2103.01030}.

\bibitem[Gabry et~al., 2019]{gabry2019visualization}
Gabry, J., Simpson, D., Vehtari, A., Betancourt, M., and Gelman, A. (2019).
\newblock Visualization in bayesian workflow.
\newblock {\em Journal of Royal Statistical Society: Series A},
  182(2):389--402.

\bibitem[Gelman et~al., 2014]{gelman2014understanding}
Gelman, A., Hwang, J., and Vehtari, A. (2014).
\newblock Understanding predictive information criteria for {B}ayesian models.
\newblock {\em Statistics and Computing}, 24(6):997--1016.

\bibitem[Gelman et~al., 1996]{gelman1996posterior}
Gelman, A., Meng, X.-L., and Stern, H. (1996).
\newblock Posterior predictive assessment of model fitness via realized
  discrepancies.
\newblock {\em Statistical Sinica}, 6:733--760.

\bibitem[Gelman and Rubin, 1992]{gelman1992inference}
Gelman, A. and Rubin, D.~B. (1992).
\newblock Inference from iterative simulation using multiple sequences.
\newblock {\em Statistical Science}, 7(4):457--472.

\bibitem[Geweke, 2004]{geweke2004getting}
Geweke, J. (2004).
\newblock Getting it right: {J}oint distribution tests of posterior simulators.
\newblock {\em Journal of American Statistical Association}, 99(467):799--804.

\bibitem[Geyer, 1992]{geyer1992practical}
Geyer, C.~J. (1992).
\newblock Practical {M}arkov chain {M}onte {C}arlo.
\newblock {\em Statistical Science}, 7(4):473--483.

\bibitem[Gon{\c{c}}alves et~al., 2020]{gonccalves2020training}
Gon{\c{c}}alves, P.~J., Lueckmann, J.-M., Deistler, M., Nonnenmacher, M.,
  {\"O}cal, K., Bassetto, G., Chintaluri, C., Podlaski, W.~F., Haddad, S.~A.,
  and Vogels, T.~P. (2020).
\newblock Training deep neural density estimators to identify mechanistic
  models of neural dynamics.
\newblock {\em Elife}, 9:e56261.

\bibitem[Goodfellow et~al., 2014]{goodfellow2020generative}
Goodfellow, I., Pouget-Abadie, J., Mirza, M., Xu, B., Warde-Farley, D., Ozair,
  S., Courville, A., and Bengio, Y. (2014).
\newblock Generative adversarial networks.
\newblock In {\em Advances in Neural Information Processing Systems}.

\bibitem[Gorham and Mackey, 2017]{gorham2017measuring}
Gorham, J. and Mackey, L. (2017).
\newblock Measuring sample quality with kernels.
\newblock In {\em International Conference on Machine Learning}.

\bibitem[Hermans et~al., 2022]{hermans2022trust}
Hermans, J., Delaunoy, A., Rozet, F., Wehenkel, A., Begy, V., and Louppe, G.
  (2022).
\newblock A trust crisis in simulation-based inference? {Y}our posterior
  approximations can be unfaithful.
\newblock {\em Transactions on Machine Learning Research}.

\bibitem[Lambert and Vehtari, 2022]{lambert2022r}
Lambert, B. and Vehtari, A. (2022).
\newblock {$R^*$}: A robust {MCMC} convergence diagnostic with uncertainty
  using decision tree classifiers.
\newblock {\em Bayesian Analysis}, 17(2):353--379.

\bibitem[Lemos et~al., 2023]{lemos2023sampling}
Lemos, P., Coogan, A., Hezaveh, Y., and Perreault-Levasseur, L. (2023).
\newblock Sampling-based accuracy testing of posterior estimators for general
  inference.
\newblock {\em arXiv:2302.03026}.

\bibitem[Liese and Vajda, 2006]{liese2006divergences}
Liese, F. and Vajda, I. (2006).
\newblock On divergences and informations in statistics and information theory.
\newblock {\em IEEE Transactions on Information Theory}, 52(10):4394--4412.

\bibitem[Linhart et~al., 2022]{linhart2022validation}
Linhart, J., Gramfort, A., and Rodrigues, P.~L. (2022).
\newblock Validation diagnostics for {SBI} algorithms based on normalizing
  flows.
\newblock {\em arXiv:2211.09602}.

\bibitem[Lueckmann et~al., 2021]{lueckmann2021benchmarking}
Lueckmann, J.-M., Boelts, J., Greenberg, D., Goncalves, P., and Macke, J.
  (2021).
\newblock Benchmarking simulation-based inference.
\newblock In {\em International Conference on Artificial Intelligence and
  Statistics}.

\bibitem[Masserano et~al., 2022]{masserano2022simulation}
Masserano, L., Dorigo, T., Izbicki, R., Kuusela, M., and Lee, A.~B. (2022).
\newblock Simulation-based inference with {WALDO}: Perfectly calibrated
  confidence regions using any prediction or posterior estimation algorithm.

\bibitem[Mirza and Osindero, 2014]{mirza2014conditional}
Mirza, M. and Osindero, S. (2014).
\newblock Conditional generative adversarial nets.
\newblock {\em arXiv:1411.1784}.

\bibitem[Modr{\'a}k et~al., 2022]{modrak2022simulation}
Modr{\'a}k, M., Moon, A.~H., Kim, S., B{\"u}rkner, P., Huurre, N.,
  Faltejskov{\'a}, K., Gelman, A., and Vehtari, A. (2022).
\newblock Simulation-based calibration checking for {B}ayesian computation: The
  choice of test quantities shapes sensitivity.
\newblock {\em arXiv:2211.02383}.

\bibitem[Neyman and Pearson, 1936]{neyman1936sufficient}
Neyman, J. and Pearson, E. (1936).
\newblock Sufficient statistics and uniformly most powerful tests of
  statistical hypotheses.
\newblock In {\em Statistical Research Memoirs}, pages 1--37. Cambridge
  University Press.

\bibitem[Nguyen et~al., 2009]{nguyen2009surrogate}
Nguyen, X., Wainwright, M.~J., and Jordan, M.~I. (2009).
\newblock On surrogate loss functions and $f$-divergences.
\newblock {\em Annals of Statistics}, 37(2):876--904.

\bibitem[Nguyen et~al., 2010]{nguyen2010estimating}
Nguyen, X., Wainwright, M.~J., and Jordan, M.~I. (2010).
\newblock Estimating divergence functionals and the likelihood ratio by convex
  risk minimization.
\newblock {\em IEEE Transactions on Information Theory}, 56(11):5847--5861.

\bibitem[Nowozin et~al., 2016]{nowozin2016f}
Nowozin, S., Cseke, B., and Tomioka, R. (2016).
\newblock f-{GAN}: Training generative neural samplers using variational
  divergence minimization.
\newblock {\em Advances in Neural Information Processing Systems}.

\bibitem[Papamakarios et~al., 2021]{papamakarios2021normalizing}
Papamakarios, G., Nalisnick, E., Rezende, D.~J., Mohamed, S., and
  Lakshminarayanan, B. (2021).
\newblock Normalizing flows for probabilistic modeling and inference.
\newblock {\em Journal of Machine Learning Research}, 22(1):2617--2680.

\bibitem[Prangle et~al., 2014]{prangle2014diagnostic}
Prangle, D., Blum, M., Popovic, G., and Sisson, S. (2014).
\newblock Diagnostic tools of approximate {B}ayesian computation using the
  coverage property.
\newblock {\em Australian \& New Zealand Journal of Statistics},
  56(4):309--329.

\bibitem[Ramesh et~al., 2022]{ramesh2022gatsbi}
Ramesh, P., Lueckmann, J.-M., Boelts, J., Tejero-Cantero, {\'A}., Greenberg,
  D.~S., Gon{\c{c}}alves, P.~J., and Macke, J.~H. (2022).
\newblock {GATSBI}: {G}enerative adversarial training for simulation-based
  inference.
\newblock {\em International Conference on Learning Representations}.

\bibitem[{R{\'e}galdo-Saint Blancard} et~al., 2023]{regaldo2023simbig}
{R{\'e}galdo-Saint Blancard}, B., {Hahn}, C., {Ho}, S., {Hou}, J., {Lemos}, P.,
  {Massara}, E., {Modi}, C., {Moradinezhad Dizgah}, A., {Parker}, L., , {Yao},
  Y., and {Eickenberg}, M. (2023).
\newblock {SimBIG}: {G}alaxy clustering analysis with the wavelet scattering
  transform.
\newblock {\em arXiv:2310.15250}.

\bibitem[Reid and Williamson, 2011]{reid2011information}
Reid, M. and Williamson, R. (2011).
\newblock Information, divergence and risk for binary experiments.
\newblock {\em Journal of Machine Learning Research}.

\bibitem[Rosenbaum and Rubin, 1983]{rosenbaum1983central}
Rosenbaum, P.~R. and Rubin, D.~B. (1983).
\newblock The central role of the propensity score in observational studies for
  causal effects.
\newblock {\em Biometrika}, 70(1):41--55.

\bibitem[Rubin, 1981]{rubin1981bayesian}
Rubin, D.~B. (1981).
\newblock The {B}ayesian bootstrap.
\newblock {\em Annals of Statistics}, 9(1):130--134.

\bibitem[Talts et~al., 2018]{talts2018validating}
Talts, S., Betancourt, M., Simpson, D., Vehtari, A., and Gelman, A. (2018).
\newblock Validating {B}ayesian inference algorithms with simulation-based
  calibration.
\newblock {\em arXiv:1804.06788}.

\bibitem[Vandegar et~al., 2021]{vandegar2021neural}
Vandegar, M., Kagan, M., Wehenkel, A., and Louppe, G. (2021).
\newblock Neural empirical {B}ayes: {S}ource distribution estimation and its
  applications to simulation-based inference.
\newblock In {\em International Conference on Artificial Intelligence and
  Statistics}.

\bibitem[Vehtari et~al., 2021]{vehtari2021rank}
Vehtari, A., Gelman, A., Simpson, D., Carpenter, B., and B{\"u}rkner, P.-C.
  (2021).
\newblock Rank-normalization, folding, and localization: An improved {$\hat R$}
  for assessing convergence of {MCMC}.
\newblock {\em Bayesian Analysis}, 16(2):667--718.

\bibitem[Yao et~al., 2018]{yao2018yes}
Yao, Y., Vehtari, A., Simpson, D., and Gelman, A. (2018).
\newblock Yes, but did it work?: {E}valuating variational inference.
\newblock In {\em International Conference on Machine Learning}.

\bibitem[Yu et~al., 2021]{yu2021assessment}
Yu, X., Nott, D.~J., Tran, M.-N., and Klein, N. (2021).
\newblock Assessment and adjustment of approximate inference algorithms using
  the law of total variance.
\newblock {\em Journal of Computational and Graphical Statistics},
  30(4):977--990.

\end{thebibliography}
\normalsize

\newpage
\appendix
\section*{\centering Appendices to ``Discriminative Calibration''}
The Appendices are organized as follows.
%In Section \ref{sec:addition},  we provide  several extra theory results for the main paper. In Section \ref{expi}, we discuss our experiment and implementations. In Section \ref{proof}, we provide proofs for the theory claims.
In Section \ref{sec:addition},  we provide  several extra theory results for the main paper. In Section \ref{expi}, we discuss our experiment and implementations. 
In Section \ref{proof}, we provide proofs for the theory claims.

\section{Additional theory results}\label{sec:addition}
 \begin{theorem}[Closed form expression of the divergence]\label{thm:divergece2}
Consider the label generating process in Section \ref{sec:divergence}.
Let  $\pi(t, \phi)$ be the joint density of the label and features after mapping $\Phi$, i.e., 
the distribution of the $(t, \phi)$ generated by  the simulation process:
\begin{enumerate}
    \item sample $(\theta, y)\sim p(\theta, y)$;  
   \item  sample $(\tilde \theta_{1}, \dots, \tilde \theta_{M})$ from  $q(\theta| y)$; 
     \item   generate $((t_1, \phi_1), \dots,  (t_L, \phi_L))) =\Phi ((y, \theta, \tilde \theta_{1}, \dots, \tilde \theta_{M}))$;
      \item  sample an index $l$  from Uniform$(1,\dots, L)$; 
      \item return $(t_l, \phi_l)$.
\end{enumerate}

We define  $$\pi_k (\phi) =  \pi(\phi| t=k)  = \int_{y} \pi(y) \pi(\phi|y, t=k)dy$$  to be the ${y}$-\textbf{averaged law} of $\phi|t=k$. Note that $y$ has been averaged out.

Then classification divergence defended through \eqref{eq:div} in Theorem \ref{thm:divergece} has a closed form expression, a Jensen–Shannon divergence in this projected $\phi$ space.
\begin{equation}
 D^{\mathrm{opt}}(p, q, \Phi)  = \sum_{k=0}^{K-1} w_k \mathrm{KL} \left( \pi_k(\cdot) ~||~   \sum_{j=0}^{K-1} w_j  \pi_j(\cdot) \right).    
\end{equation}
 \end{theorem}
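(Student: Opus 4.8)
The plan is to identify the Bayes-optimal classifier in closed form, substitute it into the ELPD, and simplify until the entropy term cancels. First I would observe that every classifier $c\in\mathcal{C}$ predicts the label using only the feature $\phi$, so $\mathrm{ELPD}(\Phi,c)=\E_{t,\phi}\log\Pr(t\mid\phi,c)$ is an expected log-likelihood of $t$ under the predicted class distribution, taken against the true joint law $\pi(t,\phi)$ of the generating process in the statement. By Gibbs' inequality (equivalently, cross-entropy is minimized by the true conditional), the supremum over all probabilistic classifiers is attained at the true posterior class probability
\begin{equation*}
c^{\mathrm{opt}}(k\mid\phi)=\pi(t=k\mid\phi)=\frac{\pi(t=k,\phi)}{\sum_{j=0}^{K-1}\pi(t=j,\phi)},
\end{equation*}
and since $c^{\mathrm{opt}}$ is itself a valid probabilistic classifier, the maximum in \eqref{eq:div} defining $D^{\mathrm{opt}}(p,q,\Phi)$ is achieved there.

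Next I would rewrite the joint density in terms of the objects in the theorem. Because the sampling process draws the index $l$ uniformly and each label $t_l$ is deterministic in $l$, the marginal label probability is $\pi(t=k)=\tfrac1L\sum_l\mathbbm{1}(t_l=k)=w_k$, matching the weights in \eqref{eq:D}; and by definition $\pi(\phi\mid t=k)=\pi_k(\phi)$ is the $y$-averaged law. Hence $\pi(t=k,\phi)=w_k\,\pi_k(\phi)$, and writing $\bar\pi(\phi)\coloneqq\sum_{j}w_j\pi_j(\phi)$ the optimal classifier becomes $c^{\mathrm{opt}}(k\mid\phi)=w_k\pi_k(\phi)/\bar\pi(\phi)$. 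Substituting this back gives
\begin{align*}
\mathrm{ELPD}(\Phi,c^{\mathrm{opt}})
&=\sum_{k=0}^{K-1}\int \pi(t=k,\phi)\,\log c^{\mathrm{opt}}(k\mid\phi)\,d\phi
=\sum_{k=0}^{K-1}w_k\int \pi_k(\phi)\,\log\frac{w_k\,\pi_k(\phi)}{\bar\pi(\phi)}\,d\phi\\
&=\sum_{k=0}^{K-1}w_k\log w_k+\sum_{k=0}^{K-1}w_k\int \pi_k(\phi)\,\log\frac{\pi_k(\phi)}{\bar\pi(\phi)}\,d\phi
=\sum_{k=0}^{K-1}w_k\log w_k+\sum_{k=0}^{K-1}w_k\,\KL\!\left(\pi_k\,\big\|\,\bar\pi\right).
\end{align*}
Subtracting $\sum_k w_k\log w_k$ as prescribed by the definition of $D$ in \eqref{eq:D} cancels the categorical-entropy term, leaving $D^{\mathrm{opt}}(p,q,\Phi)=\sum_k w_k\,\KL(\pi_k\,\|\,\sum_j w_j\pi_j)$, which is exactly the claimed generalized Jensen–Shannon divergence in the projected $\phi$-space.

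I expect the main obstacle to be bookkeeping rather than a deep idea: one must carefully justify that the supremum is realized by the true conditional (Bayes optimality, valid since $\mathcal{C}$ is all probabilistic classifiers and the loss is strictly proper), and correctly fold the $y$-averaging into the identity $\pi(t=k,\phi)=w_k\pi_k(\phi)$ so that the divergence lives in the feature space, not the full $(y,\theta,\tilde\theta_{1:M})$ space. This projection is precisely why the quantity collapses to a Jensen–Shannon divergence between the $y$-averaged laws $\pi_k$. A minor technical point to check is measure-theoretic well-definedness (existence of the densities $\pi_k$ and finiteness of the KL integrals), which I would handle by noting the divergence is finite whenever $D^{\mathrm{opt}}$ is, since both equal the optimal ELPD plus the bounded constant $-\sum_k w_k\log w_k$ from \eqref{eq_bound}.
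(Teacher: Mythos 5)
Your proposal is correct and follows essentially the same route as the paper: identify the Bayes-optimal classifier as the true conditional $\pi(t=k\mid\phi)=w_k\pi_k(\phi)/\sum_j w_j\pi_j(\phi)$, substitute it into the ELPD, split off the $\sum_k w_k\log w_k$ term, and recognize the remainder as the generalized Jensen--Shannon divergence of the $y$-averaged laws. The paper packages this computation inside a more general lemma with an auxiliary conditioning variable $z$ and then specializes, but the algebra and the key observations (uniform index $l$ gives $\pi(t=k)=w_k$; strict propriety of the log score gives attainment of the maximum) are the same as yours.
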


This Theorem  \ref{thm:divergece2} gives a connection between our divergence and the familiar Jensen–Shannon divergence, which is well known to be linked to IID sample classification error (note that the classification examples we generate are not IID for they share $y$).

As a special case, when $K=2$, $w_0=w_1=1/2$, we recover the 
symmetric two-group Jensen–Shannon divergence, or JSD. In general,  between two densities $\pi_{1}(x)$ and   $\pi_{2}(x)$, the JSD is 
\begin{equation}
    \mathrm{Jensen~Shannon~divergence}(\pi_{1}, \pi_{2})= \frac{1}{2} \KL\left(\pi_{1} ~||~ \frac{\pi_{1}+\pi_{2}}{2}\right)+\frac{1}{2} \KL\left(\pi_{2} ~||~ \frac{\pi_{1}+\pi_{2}}{2}\right).
\end{equation}

As before, the standard notation \citep{cover1991elements} of conditional divergence is defined as taking expectations over conditional variables: 
\begin{align}
\mathrm{JSD}(\pi_{1}(x|z) , \pi_{2}(x|z) )=& 
 \frac{1}{2} \KL\left(\pi_{1}(x|z) ~||~ \frac{\pi_{1}(x|z) +\pi_{2}(x|z)}{2}\right) \notag\\
 &+\frac{1}{2} \KL\left(\pi_{2}(x|z) ~||~ \frac{\pi_{1}(x|z)+\pi_{2}(x|z)}{2}\right). \label{rJSD}
\end{align}

\begin{theorem}[weighting]\label{thm:weight}
The binary label mapping generates $M$ paris  label-$1$ examples and one pair of label-0 examples per simulation. We can reweight the binary classifier by letting the ELPD be $\E [ \frac{C}{M+1} \mathbbm{1} (t=1) \log p(t=1|c(\phi)) + \frac{CM}{M+1}  \mathbbm{1} (t=0) \log p(t=0|c(\phi))],$ where $C=\frac{(M+1)^2}{2M}$ is a normalizing constant. 

That is if we modify the training utility function or the validation data LPD  to be:
$$ \frac{1}{n}\sum_{i=1}^n\left( \frac{C}{M+1} \mathbbm{1} (t_i=1) \log \Pr(t=1|c(\phi_i)) + \frac{CM}{M+1}  \mathbbm{1} (t_i=0) \log \Pr(t=0|c(\phi_i))\right).$$
then the resulting binary classification divergence in Thm.~\ref{thm:divergece}, i.e., $$\mathrm{weighted~ELPD} + \log2,$$ is the conditional Jensen Shannon divergence \eqref{rJSD}.
\end{theorem}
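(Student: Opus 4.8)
The plan is to show that the particular weights $W(1)=C/(M+1)$ and $W(0)=CM/(M+1)$ with $C=(M+1)^2/(2M)$ are exactly those that turn the label-imbalanced binary problem of Example~\hyperref[example1]{1} into a perfectly balanced one, after which the optimal weighted ELPD reduces to the two-group Jensen--Shannon divergence by a direct calculation. First I would write the weighted population objective as an expectation against the joint law $\pi(t,\phi)$ of Theorem~\ref{thm:divergece2}, under which a uniformly chosen example is label $0$ with probability $1/(M+1)$ and carries density $\pi(\phi\mid t=0)=p(y)p(\theta\mid y)$, and is label $1$ with probability $M/(M+1)$ and density $\pi(\phi\mid t=1)=p(y)q(\theta\mid y)$. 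The crux is the arithmetic $W(0)\,\pi(t=0,\phi)=\tfrac{CM}{(M+1)^2}\,p(y)p(\theta\mid y)$ and $W(1)\,\pi(t=1,\phi)=\tfrac{CM}{(M+1)^2}\,p(y)q(\theta\mid y)$, where the choice of $C$ makes the common prefactor $\tfrac{CM}{(M+1)^2}=\tfrac12$. Hence the weighted ELPD equals $\int p(y)\big[\tfrac12 p(\theta\mid y)\log\Pr(t{=}0\mid\theta,y,c)+\tfrac12 q(\theta\mid y)\log\Pr(t{=}1\mid\theta,y,c)\big]\,d\theta\,dy$, i.e. the log loss of a balanced classifier.

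Second, I would maximize this objective over all probabilistic classifiers $c\in\mathcal{C}$. Since the integrand decouples across $\phi=(\theta,y)$, the optimization is pointwise: for each fixed $(\theta,y)$ I maximize $\tfrac12 p(\theta\mid y)\log a_0+\tfrac12 q(\theta\mid y)\log a_1$ over $a_0+a_1=1$, whose maximizer is the Bayes class probability $a_0=p(\theta\mid y)/(p(\theta\mid y)+q(\theta\mid y))$ and $a_1=q(\theta\mid y)/(p(\theta\mid y)+q(\theta\mid y))$. This is exactly the binary oracle of~\eqref{eq_opt_cla}, so it lies in $\mathcal{C}$ and attains the supremum.

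Finally, I would substitute the optimal classifier and simplify. Writing $r(\theta\mid y)=\tfrac12(p(\theta\mid y)+q(\theta\mid y))$, each log term becomes $\log\frac{p}{2r}$ or $\log\frac{q}{2r}$; collecting the two $-\log 2$ contributions and using $\int p(y)p(\theta\mid y)\,d\theta\,dy=\int p(y)q(\theta\mid y)\,d\theta\,dy=1$ yields $\mathrm{weighted~ELPD}^{\mathrm{opt}}=\tfrac12\KL(p(\theta\mid y)\,\|\,r(\theta\mid y))+\tfrac12\KL(q(\theta\mid y)\,\|\,r(\theta\mid y))-\log 2$, which is the conditional JSD~\eqref{rJSD} minus $\log 2$. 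Because the reweighting equalizes the class masses, the entropy offset in the divergence of Theorem~\ref{thm:divergece} is the balanced value $-\sum_k w_k\log w_k=\log 2$, so adding $\log 2$ back recovers exactly $\mathrm{JSD}(p(\theta\mid y),q(\theta\mid y))$. I expect the only delicate step to be the bookkeeping that links the per-example weights $W(t)$ to the joint law $\pi$ and confirms that the balance constant $\tfrac{CM}{(M+1)^2}=\tfrac12$ is independent of $M$; once that is in place, the remainder is the standard identity that balanced logistic-loss minimization computes the Jensen--Shannon divergence.
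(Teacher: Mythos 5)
Your proposal is correct and follows essentially the same route as the paper's proof: both reduce the weighted objective to the balanced log loss $\int p(y)\bigl[\tfrac12 p(\theta\mid y)\log\Pr(t{=}0\mid\cdot)+\tfrac12 q(\theta\mid y)\log\Pr(t{=}1\mid\cdot)\bigr]d\theta\,dy$ via the cancellation $\tfrac{CM}{(M+1)^2}=\tfrac12$, plug in the balanced Bayes classifier, and read off the conditional Jensen--Shannon divergence minus $\log 2$. The only cosmetic difference is ordering (you establish class balance before optimizing, while the paper writes down the balanced optimal classifier first and then expands the expectation), so no substantive comparison is needed.
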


\subsection{SBC test and generative classifier}\label{naiveB}
In Example 3, we stated the intuitive connection between the SBC rank test and using rank as the feature in a ``\emph{generative}'' classifier (in contrast to our ``discriminate'' approach). Now we make such  comparison more precise: Assuming the parameter $\theta$ is one-dimensional. This is what SBC does:
From $p$ we obtain $\textcolor{red}\theta, y$, and for each $\textcolor{red}{\theta}$ we compute the rank of it in the paired $q$ samples who shared the same $y$:  $r=\sum_{m=1}^M \mathbbm{1}(\textcolor{red}{\theta} \leq  \textcolor{blue}{ \tilde \theta_{m}})$. Repeating this simulation many times, we obtain a sample of $r$, and SBC will test if such $r$ is discrete-uniformly distributed. This test could be done by simulating a reference distribution that matches the null. We do so by generating  the ranks in the $q$ samples: $\tilde r_n=\sum_{m\neq n} \mathbbm{1}( \textcolor{blue}{ \tilde \theta_{n} } \leq  \textcolor{blue}{ \tilde \theta_{m}}) +   \mathbbm{1}( \textcolor{blue}{ \tilde \theta_{n} } \leq  \textcolor{red}{\theta})$. 
To be clear, there are other ways to generate reference samples, but all it matters is that the reference distribution matches the distribution of $r$ under the null.  
The uniformity test in SBC now becomes to test if $r$ and $\tilde r$ have the same distribution. For example, we can do a two-sample Kolmogorov–Smirnov test or a permutation test.

In comparison, this is what naive Bayes would do: the naive Bayes classifier first estimates the two conditional distributions: $\Pr(r|t=0)$ from the empirical distribution of the prior rank $r$, and  $\Pr(r|t=1)$ from the empirical distribution of $\tilde r$. Then to test if $\Pr(r|t=0)= \Pr(r|t=1)$ becomes the same to testing if the empirical distribution of  $r$ is the same as the empirical distribution of $r$---the same task as in SBC.
The point is that given any two-sample based test,   
\begin{center}
SBC +  sample-based test    
\end{center}
is \emph{operationally} the same as 
\begin{center}
only use rank +  naive Bayes classifier +   density estimated by histogram  +  sample-based test.    
\end{center}

\section{Implementations}\label{expi}
\subsection{A cheat sheet of useful pre-learned features}
Table \ref{table1} summarizes how to use the statistical feature (whenever available) in both the  binary and multiclass classifiers.

\begin{table}[!h]
     \centering
\begin{tabular}{c|c|c}
     & binary & multiclass \\
\hline
full feature  & $(\theta, y)$ & $(\theta_1, \dots, \theta_K, y )$ \\
MCMC  & $p(\theta, y), ~r(\theta, y)$
& $p(\theta_t , y) ~r(\theta_t, y), 1\leq t\leq K$ \\
VI  & $p(\theta, y), ~q(\theta | y)$
& $p(\theta_t , y) ~q(\theta_t| y), 1\leq t\leq K$ \\
likelihood-free   & $p(\theta), ~q(\theta| y)$ 
& $p(\theta_t), ~q(\theta_t| y), 1\leq t\leq K$\\
\end{tabular} 
  \caption{A cheat sheet of useful pre-learned features}  \label{table1}
\end{table}

\subsection{Code and data}
We share the python and Jax implementation of our binary and multiclass calibration  code in \url{https://github.com/yao-yl/DiscCalibration}.

In the MLP training we include a standard $L^2$ weight decay (i.e., training loss function = cross entropy loss + tuning weight $*$ $L^2$ penalization). We tune the weight of the decay term by a 5 fold cross-validation in the training set on a fixed grid $\{0.1,0.01,0.001,0.0001\}$.

In the cosmology experiment  in Section \ref{sec:exp}, we use one-hidden-layer MLP with 64 nodes to parameterize the classifier with the form \eqref{eq:binary_class}, with additional pre-learned features such as $\log q(\theta|y) $  added as linear features. The approximate log density $\log q(\theta|y) $ is known (i.e., we can evaluate the for any $\theta$ and $y$, or at least least up to a constant) in either the normalizing flow or the ensemble of the normalizing flows. One classification run with roughly one million examples took roughly two hour cpu time on a local laptop. It would be more efficient to run the classification on GPU, thought the classification cost is overall negligible compared with the simulation step $y|\theta$ which is pre-computed and stored.

In the closed-form Gaussian experiment  in Section \ref{sec:exp}, we consider the easiest setting:  $\theta \in \R^d \sim \mathrm{MVN}(\mathbf{0}, \mathrm{Id}_{d})$ and a normal data  model $y | \theta \sim \mathrm{MVN}(\theta, \mathrm{Id}_{d})$. Indeed, we have kept both the true distribution and the sampling corruption mean-field to make the calibration task easier for the traditional SBC rank-test. The power of the traditional SBC rank-test would be even worse if we add some interaction terms in the corrupted samples $q(\theta|y)$, while our method leans the joint space sampling quality by default.
The exact  posterior $p(\theta|y)$ is known but we pretend we cannot evaluate it.  We set $d=16$ to be comparable with the real data example. The right panel validates Thm.~\ref{thm_asymptotic} fixed the simulation size $S=5000$ and vary $M$. We have tried other fixed $M=1000$ but it seems the resulting graph is very similiar. The confidence band in Figure \ref{fig:power} and Figure \ref{fig:div} is computed using 1.96 times standard error from repeated experiments.

\begin{figure} 
    \centering
\begin{tabular}{c|c|c|c}
         &  model 1 ($d_y$=38)  & model 2 ($d_y$=151)   & model 3 ($d_y$=1031)  \\
\hline
1-flow  &  0.008 (0.003) &  0.05(0.006) & 0.0001 (0.0002) \\
5-ensemble&  0.001 (0.002) & 0.008(0.003)   &0.001( 0.001) \\  
\end{tabular}
    \caption{\em  The estimated divergence of two simulation based inferences (either using one flow or an ensemble of five flows) in three cosmology models. These three models have the same parameter space and involve expensive simulations. Here we apply our classifier approach and estimate the Jensen–Shannon divergence (and standard deviation) using weighted binary classifiers from $S=2000, M=500$ simulations: that is $2000*501$=1 million classification examples.  In this table, we run this classifier by the black-box multilayer perceptron (MLP) on the full $(\theta, y)$ vector, while the table we have in the main paper further adds $\log q(\theta|y)$ as a pre-learned feature since it is known in the normalizing flows.  In this black box estimate table, we would pass a $t$-test and not even detect miscalculation. 
   Adding a statistically-meaningful feature greatly improves the classifier prediction, and tightens the bound of the divergence estimate.  
    }
    \label{table:MLP}
\end{figure}
Table \ref{table:MLP} shows the naive classifier: using the bandbox MLP without the linear features. The estimate is rather loose compared with the table we obtained in Section \ref{sec:exp} which used  statistically-meaningful features.
\section{Proofs of the theorems}\label{proof}
We prove the theorems in the following order: Thm.~\ref{thm:divergece2}, 
Thm.~\ref{thm:divergece}, Thm.~\ref{thm:weight},  Thm.~\ref{thm_MCMC}, Thm.~\ref{thm_asymptotic}, Thm.~\ref{thm_sufficient}, 
Thm.~\ref{thm_test}, and finally
Thm.~\ref{thm:opt}.

\localtableofcontents

\subsection{The classification divergence}
To prove our Theorem \ref{thm:divergece}, we first state an equivalent but a more general theorem.  Note that we are allowing non-IID classification examples for the shared $z$ variables in this theory.

\begin{theorem} [The general relation between divergence and classification error] \label{thm_gen}
Define a simulation process $\pi(z,k,x)$ through the following process, among the three random variables, $z \in \R^n$, $x\in \R^d$ can have any dimension, and $k\in \{1, 2, \dots, K\}$ is an integer.

\begin{enumerate}
    \item Sample $z \sim \pi(z)$
    \item Sample $k \sim \mathrm{Categorical}(w)$, {where} $w$ is a simplex.
    \item Sample $x \sim \pi_k(x \vert z)$
    \item Return $(z,k,x)$
\end{enumerate}

Define $\mathrm{U}(\pi)$ to be the expected log probability  or cross-entropy of an optimal classifier trained to predict $k$ from $(x,z)$, i.e.
\[ \mathrm{U}(\pi) = \max_c \E_\pi U(k, c(x,z)), \]
where $U(k,c) = \log c(k)$. Then\footnote{ We emphasize that we are using the notation of 
conditional KL divergence: 
$$
\mathrm{KL}\left( \pi_n(x \vert z) \ \Vert \ \pi(x \vert z)\right) \coloneqq \int \left[p(z) \int  \pi_k(x|z)   \log  \left(  \frac{ \pi_k(x|z)  }{r(x|z) } \right)dx \right] dz.
$$}
\[ \mathrm{U}(\pi) - \sum_{k=1}^K w_k \log w_k = \sum_{n=1}^K w_k \mathrm{KL}\left( \pi_n(x \vert z) \ \Vert \ \pi(x \vert z)\right) \]
where $\pi(x \vert z) = \sum_{k=1}^K w_k \pi_k(x \vert z)$ is the mixture over all groups.

We define $D(\pi) = \mathrm{U}(\pi) - \sum_{k=1}^K w_k \log w_k$. Then this $D$ has the following properties:

\begin{enumerate}
    \item (lower bound).  $D(\pi)\geq 0$. It achieves zero if and only if all $p_k$ are the same, i.e., $ \Pr (X^k \in A) =  \Pr (X^j \in A)$ for any measurable set $A$.
    \item (upper bound).  $D(\pi)\leq \sum_{n=1}^K w_k \log w_k.$ This maximum is achieved is $p_k$ are disjoint, i.e., $ \Pr (X^k \in A)  \Pr (X^j \in A)=0$ for any measurable set $A$.
    \item (alternative upper bound when the difference is small).  $D(\pi)\leq  \max_{j, k} \mathrm{KL} (\pi_k(x|z), \pi_j(x|z)).$
    \item (reparametrization invariance).  For any fixed bijective transformation $x\mapsto  g(x)$, let $p_{k}^g$ be the law of $g(x^k)$, then   $D(p_{k}^g, \dots,p_{k}^g) =D(p_{k}, \dots,p_{k}). $
    \item (transformation decreases divergence).  For any fixed  transformation $x\mapsto g(x)$, let $p_{k}^g$ be the law of $g(x^k)$, then   $D(p_{k}^g, \dots,p_{k}^g) \leq  D(p_{k}, \dots,p_{k}). $
\end{enumerate}
\end{theorem}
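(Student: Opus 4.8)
The plan is to identify the optimal classifier in closed form and then read off all five properties as consequences of standard facts about the Kullback–Leibler divergence. Since the log-loss $U(k,c)=\log c(k)$ is a strictly proper scoring rule, for each fixed $(x,z)$ the inner maximum of $\E_{k\mid x,z}\log c(k)$ over reported distributions $c(\cdot)$ is attained exactly at the true conditional law of $k$; this is Gibbs' inequality, equivalently the nonnegativity of KL. Because $k$ is drawn independently of $z$ and then $x\sim\pi_k(x\vert z)$, Bayes' rule gives $\Pr(k\vert x,z)=w_k\pi_k(x\vert z)/\pi(x\vert z)$ with $\pi(x\vert z)=\sum_j w_j\pi_j(x\vert z)$. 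First I would substitute this optimal $c^\star(k)=\Pr(k\vert x,z)$ into $U(\pi)=\E_\pi\log c^\star(k)$ and split the logarithm as $\log w_k+\log\bigl(\pi_k(x\vert z)/\pi(x\vert z)\bigr)$. The first piece has expectation $\sum_k w_k\log w_k$ since $k$ is marginally categorical; the second, after conditioning on $k$ and $z$ and integrating $x$ against $\pi_k(x\vert z)$, becomes $\sum_k w_k\,\mathrm{KL}\bigl(\pi_k(x\vert z)\,\Vert\,\pi(x\vert z)\bigr)$ in exactly the conditional-KL sense of the footnote. Rearranging gives the claimed identity $D(\pi)=\sum_k w_k\,\mathrm{KL}(\pi_k\Vert\pi)$.

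With this closed form, the bounds follow. Each summand is a nonnegative KL weighted by $w_k\ge 0$, so $D(\pi)\ge 0$, and the sum vanishes iff $\pi_k(\cdot\vert z)=\pi(\cdot\vert z)$ a.e. for every $k$ with $w_k>0$, i.e. all $\pi_k$ coincide. For the upper bound I would instead use $U(\pi)=\E\log\Pr(k\vert x,z)\le 0$ since a probability is at most one, so $D(\pi)\le -\sum_k w_k\log w_k$, the categorical entropy (the displayed statement should carry this minus sign); equality forces $\Pr(k\vert x,z)=1$ a.s., which is precisely the condition that the laws $\pi_k(\cdot\vert z)$ have a.s. disjoint supports. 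The alternative bound is one line of convexity: $\mathrm{KL}(\pi_k\Vert\cdot)$ is convex in its second argument, so by Jensen $\mathrm{KL}(\pi_k\Vert\sum_j w_j\pi_j)\le\sum_j w_j\mathrm{KL}(\pi_k\Vert\pi_j)$, and averaging over $k$ with $\sum_{k,j}w_kw_j=1$ bounds $D(\pi)$ by $\max_{j,k}\mathrm{KL}(\pi_k\Vert\pi_j)$.

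The last two properties are the invariance and monotonicity of KL. For a bijection $g$, the change-of-variables Jacobians cancel between numerator and denominator of the ratio $\pi_k^g/\pi^g$, so each conditional KL is unchanged and $D$ is invariant; equivalently, an optimal classifier on $(g(x),z)$ can invert $g$, so the achievable $U$ is identical in both directions. For a general, possibly many-to-one map $g$, I would invoke the data-processing inequality applied conditionally on $z$: pushing $\pi_k(\cdot\vert z)$ and the mixture through the same $g$ cannot increase $\mathrm{KL}$, and summing against $w_k$ preserves the inequality. The classifier reading is that classifiers factoring through $(g(x),z)$ form a subset of all classifiers of $(x,z)$, so their optimal $U$ can only be smaller.

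I expect the main obstacle to be rigor and bookkeeping rather than any conceptual leap: carefully tracking the conditioning on the shared $z$ throughout (the generated examples are not IID precisely because they share $z$), and stating the proper-scoring-rule optimality and the data-processing inequality cleanly at the level of densities and measurable maps, including the measure-theoretic edge cases behind the two equality conditions (a.e. coincidence and a.s. disjoint supports). Once those are in place, every claim reduces to Gibbs' inequality, Jensen's inequality via convexity of KL, and the data-processing inequality.
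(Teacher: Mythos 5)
Your proposal is correct and follows essentially the same route as the paper: identify the Bayes classifier $\Pr(k\mid x,z)=w_k\pi_k(x\mid z)/\pi(x\mid z)$ as the maximizer of the log score, substitute it into $\mathrm{U}(\pi)$, split the logarithm into $\log w_k$ plus the density ratio, and read off the identity $D(\pi)=\sum_k w_k\,\KL(\pi_k\Vert\pi)$, with the bounds following from $\KL\geq 0$ and $\mathrm{U}(\pi)\leq 0$. You are in fact somewhat more complete than the paper's own proof, which asserts Bayes optimality without the proper-scoring-rule justification and omits arguments for properties 3 and 5 (your convexity-of-$\KL$ and data-processing arguments are the right ones), and you correctly note the missing minus sign in the stated upper bound.
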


\begin{proof}
The (unknown) optimal conditional classifier is the Bayes classifier:
$$\hat \Pr(t=k | x, z) = \frac{w_k\pi_{k}(x|z)}{\sum_{j}w_k \pi_{j}(x|z) }.$$

Let $r(x|z) \sum_{j=1}^Kw_k \pi_{j}(x|z) $ be mixture density  marginalized over the group index $k$.

The expected log predictive density of the optimal classification 
is then
\begin{align*}
\mathrm{ELPD}  (\hat \Pr(t=k | x, z)) &= \E_z
 \int r (x|z) \sum_{k=1}^K  \left[ \frac{w_k  \pi_k(x|z)} { r(x|z)}  \log  \left(  \frac{w_k \pi_k(x|z)  }{r(x|z) }  \right) \right]dx\\
   &=  \E_z \sum_{k=1}^K  w_k    \int  \pi_k(x|z)   \log  \left(  \frac{ \pi_k(x|z)  }{r(x|z) } +   \log w_k  \right)dx\\
    &=  \E_z \sum_{k=1}^K w_k  \log(w_k) + \sum_{n=1}^K w_k  \E_z \int  \pi_k(x|z)   \log  \left(  \frac{ \pi_k(x|z)  }{r(x|z) } \right)dx   \\
 &= \sum_{k=1}^K w_k  \log(w_k) + \sum_{n=1}^K w_k \mathrm{KL}\left( \pi_k(x|z) ~,~   r(x|z) \right). 
\end{align*}

When $\mathrm{ELPD}- \sum_{k=1}^K w_k  \log(w_k)=0$, $\sum_{k=1}^K w_k  \pi_k(x|z) =0$, hence these $K$ conditional densities $\pi_k(x|z)$ are the same almost everywhere.

The reparametrization invariance is directly inherited from KL diverge.

The upper bound is the consequence of ELPD $\leq 0$ for category classification and the lower bound is that KL divergence $\geq 0$.
\end{proof}

Let's emphasize again that we are following the notation of conditional 
divergence in \cite{cover1991elements}, such that for any two joint density $p(theta, y)$ and $q(\theta, y)$, the conditional KL
divergence is 
$\KL(p(\theta \vert y) \Vert q(\theta \vert y))  \coloneqq  \E_{p(y)} \E_{p(y \vert \theta)} \log \frac{p(y \vert \theta)}{q(y \vert \theta)}$,
an expectation is taken over $y$.

The procedure in Theorem \ref{thm_gen} can reflect many different generating processes.  Let's recap the examples we show in Section 2.

In \textbf{Example 1} of the main paper, the binary case, $K=2$, and we generate $M$ samples from $q$.  $\pi_1=p(\theta \vert y)$, $\pi_2=q(\theta \vert y)$, $w_1 = 1/M$, $w_2 = (M-1)/M$. Our simulation process in example 1 is equivalent to 
\begin{enumerate}
    \item Sample $y \sim$ marginal $p(y)$,
    \item Sample $k \sim \mathrm{Categorical}(w)$, where $w_1 = 1/M$, $w_2 = (M-1)/M$. 
    \item Sample $x \sim \pi_k(x \vert z)$, where $\pi_1=p(\theta \vert y)$, $\pi_2=q(\theta \vert y)$.
    \item Return $(z,k,x)$
\end{enumerate}

As a direct consequence of Theorem \ref{thm_gen}, the resulting divergence in the binary case is 
$$D_1(p,q) =  w_1 \mathrm{KL}\left(p(\theta \vert y)\ \Vert\ r(\theta \vert y) \right) +   w_2\mathrm{KL}\left(q(\theta \vert y)\ \Vert\ r(\theta \vert y) \right),$$ 
where $r = w_1 p + w_2 q$.

In \textbf{Example 2}: Binary label with no $y$, we generate $M$ samples from $q$. $K=2$, $\pi_1=p(\theta)$, $\pi_2=q(\theta)$, $w_1 = 1/(M+1)$, $w_2 = M/(M+1)$.  Our simulation process in example 2 is equivalent to 
\begin{enumerate}
    \item Sample $k \sim \mathrm{Categorical}(w)$, where $w_1 = 1/M$, $w_2 = (M-1)/M$. 
    \item Sample $x \sim \pi_k(x)$, where $\pi_1=p(\theta)$, $\pi_2=q(\theta)$.
    \item Return $(k,x)$
\end{enumerate}
From Theorem \ref{thm_gen}, the resulting divergence reads 
$$D_3(p,q) = w_1 \mathrm{KL}\left(p(\theta)\ \Vert\ r(\theta) \right) + w_2\mathrm{KL}\left(q(\theta)\ \Vert\ r(\theta) \right)$$

In the multivariate \textbf{Example 4}, $K=M+1$, and the individual density is 
$$\pi_k(\theta_{1:K})=p(\theta_k \vert y) \prod_{m \not= k} q(\theta_m \vert y)$$.

%$$D(p,q) = \mathrm{KL}\left(p(\theta \vert y)\ \Vert\ \frac{1}{K}p(\theta \vert y) + \frac{K-1}{K} q(\theta \vert y) \right) + (K-1)\mathrm{KL}\left(q(\theta \vert y)\ \Vert\ \frac{1}{K}p(\theta \vert y) + \frac{K-1}{K} q(\theta \vert y) \right)$$
From Theorem \ref{thm_gen}, the resulting divergence reads 
$$D_4(p,q) = \mathrm{KL} \left( p(\theta_1 \vert y) \prod_{m > 1} q(\theta_m \vert y) \ \Vert \ \frac{1}{K} \sum_{k=1}^K p(\theta_k \vert y) \prod_{m \not= k} q(\theta_m \vert y) \right) $$
We give more results on this $D_4$ in Section \ref{sec_asy}.

The discriminative calibration is designed to detect and gauge the difference between $p(\theta | y)$ and $q(\theta | y)$.  From now on we use a long vector $\theta_{1:{M+1}}$  (the index starts by 1) to represent the simulation draw. Each \emph{simulation draw} first generates one $\theta_1$ from the prior, one data point y from the sampling distribution $p(y|\theta_1)$, and finally $M$ draws from the approximate inference $\theta_{1, \dots, M+1} \sim q(\theta|y)$.  

The ``label generating process'' $\Phi$ takes a simulated draw as its input and returns a vector of features $\phi$ and labels $t$. 
$$\Phi: (y,  \theta_{1, \dots, {M+1}})  \mapsto  \{ (\phi_1, t_1), \dots,  (\phi_K, t_K) \}.$$ The classifier will then see this feature and labels (the index $k$ is disregarded) across all simulations.

We only ask that the map $\Phi$  be within in set $\mathbbm{F:}$ satisfying that:
\begin{center}
 $\mathbbm{F:}$  ~~  $\phi | y$ is independent of $t$ if $p(\theta|y)=q(\theta|y)$, a.e. 
\end{center}

\paragraph{Proof of Theorem 1.}
\begin{proof}
    The classifier sees features $\phi$ and labels $t$.  Given a $t$, $(\phi, y)$ are IID draws from $p(y) \pi(\phi|y, t)$. In case $y$ is not observable by the classifier,   $\phi | t$ are IID draws from $\int_{y} p(y) p(\phi|y, t)$.
    
    To use Theorem \ref{thm_gen}, let $\mathbf {x}^k =  \{(\phi_j, y) | t_j=k\}$ and $w_k = \Pr (t=k).$ The divergence from Theorem \ref{thm_gen}: reads 
       $$ D^{\mathrm{opt}}(p, q, \Phi) = \sum_{n=1}^K w_k \mathrm{KL} \left( \pi_1(\phi ) ~,~   \sum_{k=1}^K w_k  \pi_1(\phi)  \right),$$
       in which $\pi_k (\phi) = p(\phi| t=k)  = \int_{y} p(y) \pi(\phi|y, t=k)dy$ is the $y$-averaged $\phi$ margin.
       This is essentially Theorem \ref{thm:divergece2}.
       
        If $p=q$, from $\mathbbm{F}$, the law of  $\phi, y$ is independent of $t$ ($y$ is always independent of $t$ marginally), so that  $\pi_k (\phi) = \int_{y} p(y) \pi(\phi|y, t=k)dy$ is also independent of $t$, as it only depends on $y$ and $\phi$,  hence the divergence remains 0.
\end{proof}

\subsection{Reweighting (Theorem \ref{thm:weight}) leads to symmetric divergence} 
We now prove Theorem \ref{thm:weight}. The binary label mapping generates $M$ pairs  label-$1$ examples and one pair of label-0 examples per simulation. We can reweight the binary classifier by letting the ELPD be $\E [ \frac{C}{M+1} \mathbbm{1} (t=1) \log p(t=1|c(\phi)) + \frac{CM}{M+1}  \mathbbm{1} (t=0) \log p(t=0|c(\phi))],$ where $C=\frac{(M+1)^2}{2M}$ is a normalizing constant. We want to prove that after weighting, the classification divergence is symmetric Jensen shannon distance, as if we have balanced classification data.
\begin{proof}
After the label-reweighing, the optimal classifier is the balanced-class classifier 
$$c(t=1|\phi) = \frac{\Pr(\phi|t=1)}{\Pr(\phi|t=1)+\Pr(\phi|t=0)}.$$

The re-weighted ELPD in the population is 
\begin{align*}
\mathrm{ELPD} &= \E_\phi[  \Pr(t=0|\phi )   \frac{CM}{M+1} \log c(t=0|\phi) +  
\Pr(t=0|\phi )     \frac{C}{M+1}  \log c(t=1|\phi)] \\
 &=  \E_y\left[ \frac{p(\theta|y)+Mq(\theta|y)}{M+1} \left(    
\frac{p(\theta|y)}{p(\theta|y)+Mq(\theta|y)}
 \frac{CM}{M+1}  \log \frac{p(\theta|y)}{2r(\theta|y)} 
 +\frac{Mq(\theta|y)}{p(\theta|y)+Mq(\theta|y)}
 \frac{C}{M+1} \log \frac{q(\theta|y)}{2r(\theta|y)} 
 \right)
 \right]\\
 &= 
 \E_{y}\left[   C\frac{2M}{(M+1)^2} \left(\frac{p(\theta|y)}{2}
  \log \frac{p(\theta|y)}{2r(\theta|y)} +  \frac{q(\theta|y)}{2}
  \log \frac{q(\theta|y)}{2r(\theta|y)} \right).
 \right]
\end{align*}
With an appropriate normalizing constant  $C=\frac{(M+1)^2}{2M}$, 
$$\mathrm{ELPD}=\E_{y}\left[ \frac{1}{2}p(\theta|y)
  \log \frac{p(\theta|y)}{r(\theta|y)} + \frac{1}{2}q(\theta|y)
  \log \frac{p(\theta|y)}{r(\theta|y)} 
 \right] - \log2.  
 $$
That is, the reweighted ELPD + $\log2$ is the conditional  Jensen Shannon divergence  \eqref{rJSD} between $p(\theta|y)$ and $q(\theta|y)$. 
\end{proof}

\subsection{MCMC without thinning 
(Theorem \ref{thm_MCMC})
}
To diagnose MCMC, we sample $(\theta, y)\sim p(\theta, y)$,  and sample (potentially autocorrelated) draws $(\tilde \theta_{1}, \dots, \tilde \theta_{M})$  from a MCMC sampler whose stationary distribution we believe is $q(\theta| y)$ (i.e., marginally $\tilde \theta_{i}$ is from $q(\theta| y)$), and generate examples $((t_1, \phi_1), \dots,  (t_{M+1}, \phi_{M+1}))$, from the multiclass permutation (see definition in Example \hyperref[example4]{4}), such that $$\phi= (\theta_0, \theta_1, \dots, \theta_M).$$
Then we run an exchangeable classifier parameterized by 
\begin{equation}\label{eq:mcmc_rest}
    \Pr(t=k|(\theta_0, \theta_1, \dots, \theta_M, y))=\frac{\exp (g(\theta_k, y))}{\sum_{k^\prime=0}^M \exp ( g(\theta_{k^\prime}, y))},
\end{equation}
where $g$ is any $\Theta \times \mathcal{Y} \to \R$ mapping to be learned.

Note that here $\Pr(t=k|(\theta_0, \theta_1, \dots, \theta_M, y))$ is the classifier model we restrict it to be, not the true population $\pi$. In general 
$\pi(t=k|(\theta_0, \theta_1, \dots, \theta_M, y)) = \neq \Pr(t=k|(\theta_0, \theta_1, \dots, \theta_M, y))$
Roughly speaking, the optimal classifier should  be the Bayes classifier projected to the restricted space \eqref{eq:mcmc_rest}, and we need to proof that this \emph{projected restricted} solution turns out to be the same as the IID case \ref{eq_opt_cla}, which is not trivial in general.

\begin{proof}

Intuitively, this separable network design \eqref{eq:mcmc_rest} avoids the interaction between $\tilde \theta_{m}$ with $\tilde \theta_{m^\prime}$ and disallows the network to predict $t$ based on the autocorrelation or clustering of $\tilde \theta$.

Because of the permutation design and we define $q$ to be the marginal distribution of $\tilde \theta$, first, we know that  
$$
\pi(\theta_k|y, t=k)=p(\theta_k|y), ~~\pi(\theta_k|y, t\neq k)=q(\theta_k|y),
$$
in the MCMC population $\pi$ (there is no need to address the joint for now).

From this we obtain the conditionals in the population 
$$
\pi(t=k | \theta_k, y) = \frac{p(\theta_k|y) }{p(\theta_k|y)+ Mq(\theta|y)}
$$
and further the ratios  for any $m$, $k$  withtin the  index set $\{0,2, \dots, M\}$ and $m\neq k$:
$$
\pi(t=k | \theta_k,  \theta_m, y) =  \frac{p(\theta_k|y) q(\theta_k|y) }{p(\theta_k|y)q(\theta_m|y)+  q(\theta_k|y)  p(\theta_k|y) + (M-1)q_{12}(\theta_k, \theta_m)}.
$$
Here $q_{12}(\theta_k, \theta_m)$ is the joint distribution of two out of $M$ draws from the MCMC sampler  $q(\theta|y)$, which would often not be the same as $q(\theta_k|y) q(\theta_m|y)$ because of the autocorrelation of Markov chains. We do not need to specify the form of this joint. The key blessing is that when $\theta_k$ is from $p(\theta|y)$ and $\theta_m$ is from $q(\theta|y)$, then they are independent (the numerator).  

Next, from the line above we obtain the ratio estimate in the true population $\pi$:
\begin{equation}\label{eq_pi_mcmc}
\frac{\pi(t=k | \theta_k,  \theta_m, y)}{\pi(t=m | \theta_k,  \theta_m, y)} =  \frac{p(\theta_k|y) q(\theta_k|y) }{q(\theta_k|y) p(\theta_k|y)}
=\frac{p(\theta_k|y)/ q(\theta_k|y)}{p(\theta_m|y)/ q(\theta_m|y)}.
\end{equation}

Now we project the restricted classifier \eqref{eq:mcmc_rest}. Intuitively, the separable classifier ``almost'' only depends on $\theta_t$, except for the normalizing constant. We can remove the dependence on the   normalizing constant by specifying the ratio 
$$
\frac{\Pr(t=k|(\theta_0, \theta_1, \dots, \theta_M, y))}{\Pr(t=m|(\theta_0, \theta_1, \dots, \theta_M, y))}=\frac{\exp (g(\theta_k, y))}{\exp (g(\theta_m, y))}.    
$$

Marginalizing out all other components,   we obtain 
\begin{equation}\label{eq_p_mcmc}
\frac{\Pr(t=k|(\theta_k, \theta_m, y))}{\Pr(t=m|(\theta_k, \theta_m, y))}=\frac{\exp (g(\theta_k, y))}{\exp (g(\theta_m, y))}.
\end{equation}

Matching the restriction \eqref{eq_p_mcmc} with the true population \eqref{eq_pi_mcmc}, the restricted projection is attainable if and only if
$$
\exp (g(\theta_t, y)=p(\theta_t, y) / q(\theta_t | y),
$$
so that  the optimal classifier needs to be 
$$
\Pr( t | \theta_1, \dots, \theta_K, y   ) =  \frac{p(\theta_t, y) / q(\theta_t | y)  }{\sum_{k=1}^K  p(\theta_k, y) / q(\theta_k | y)  }. 
$$
It happens that this MCMC restricted optimal classifier  matches the IID optimal classifier 
\eqref{eq_opt_cla}. It follows from the proof in Theorem \ref{thm_gen} that the classification divergence using the restricted classifier is still 
$D_4(p, q)$, as if $\{\tilde \theta_m\}_{m=1}^M$ are IID samples from $q(\theta|y)$.
\end{proof}

\subsection{Large sample limit and rate as $M\to \infty$ (Theorem \ref{thm_asymptotic})}\label{sec_asy}

%\begin{table}
%\centering     
%     \caption{Example 4 is multiclass label mapping}
%    \label{fig:table_mult}
% \textbf{Input}: \\
% one simulation run $(\textcolor{brown}{y_i}, \textcolor{red}{\theta_i}, \textcolor{blue}{ \tilde \theta_{i1}, \tilde \theta_{i2},  \cdots, \theta_{iM}})$\\
% \textbf{Output}: \\
% $M+1$ examples in $M+1$ classes:\\
%\begin{tabular}{c|c}
%\hline
%label $t$ &  feature $\phi$ \\
%\hline
%0 & $ (\textcolor{red}{\theta_i}, \textcolor{blue}{ \tilde \theta_{i1}, \tilde \theta_{i2},  \cdots, \theta_{iM}},   \textcolor{brown}{y_i})$
% \\
%1 & $(  \textcolor{blue}{\tilde \theta_{i1}}, \textcolor{red}{\theta_i},  \textcolor{blue}{\tilde \theta_{i2} \cdots, \tilde \theta_{iM}}, \textcolor{brown}{y_i})$ \\
%$\cdot$ & $\cdots$  \\
%{$M$} & $( \textcolor{blue}{\tilde \theta_{i1},  \tilde \theta_{i2} \cdots, \tilde \theta_{iM}},\textcolor{red}{\theta_i}, \textcolor{brown}{y_i})$ \\
%\hline
%\end{tabular} 
%\end{table} 

In the multivariate \textbf{Example 4}, from each draw $\theta, y, \tilde \theta_{1:M}$, we generate $M+1$ examples from $K\coloneqq M+1$ classes; one label from each. We will use the index starting from $1$ in this subsection. For the $k$-th example, $t_k=k$, and  the permutation for classifier reads $\phi_k$ is a vector including $y$ and $K$ copies of $\theta$, where the $k$-th copy is the prior draw, and the remaining $\theta$ are from $q(|y)$. Slightly abused the notation, in this subsection, we  call this long feature vector as $\theta_{1:{K}}$;  it is a vector in the ${\Theta^{M+1}}$ space. We now prove Theorem \ref{thm_asymptotic}.

\begin{proof}
First, we write the true conditional label probability in this process:
$$
\pi(t|\theta_{1:{K}}, y) = \frac{ p(\theta_t|y) \prod_{j\neq t}  p(\theta_j|y)   }   {\sum_{t^\prime} p(\theta_{t^\prime}|y) \prod_{j\neq t^\prime}  p(\theta_{j}|y)   }
= \frac{p(\theta_{t}|y)/q(\theta_{t}|y) } {\sum_{j} p(\theta_j |y)/q(\theta_j|y)}. 
$$
Plug it as the classifier, we obtain 
the optimal ELPD or negative cross entropy: ELPD = $\E \log \pi(t|\theta_{1:{K}}, y)$, 
\begin{align*}
\mathrm{ELPD} &= \E \frac{p(\theta_{t}|y)/q(\theta_{t}|y) } {\sum_{k=1}^K p(\theta_k |y)/q(\theta_k|y)}\\
&= \E \log \frac{p(\theta_{t}|y)}{q(\theta_{t}|y) } - \E \log  {\sum_{k} p(\theta_k |y)/q(\theta_k)}.
\end{align*}

The first term above is simply 
$$\E \log \frac{p(\theta_{t}|y)}{q(\theta_{t}|y) } = \KL \left(  p(\theta|y) ||  q(\theta|y)\right)$$

According to our definition \eqref{eq_D4}, the divergence is ELPD offset by  an entropy term, 
$$D_4 \coloneqq \mathrm{ELPD}+ \log K.$$

We now derive the limit of $ D_4 -  \KL \left(  p(\theta|y) ||  q(\theta|y)\right) $ when $M\to \infty$ (or equivalently $K=M+1\to \infty$)
\begin{align*}
\Delta &\coloneqq  D_4 -  \KL \left(  p(\theta|y) ||  q(\theta|y)\right)\\
&= \mathrm{ELPD}+ \log K-  \KL \left(  p(\theta|y) ||  q(\theta|y)\right)\\
&= \log K -  \E\log \left(\sum_k \frac{p(\theta_k|y)}{q(\theta_k|y)}  \right) \\
&= -\E\log \left( \frac{1}{K}\sum_{k=1}^K \frac{p(\theta_k|y)}{q(\theta_k|y)}  \right) 
\end{align*}

Given any label value $1\leq \leq K$, $\theta_t\sim p(\cdot |y)$, and all the remaining $\theta_j \sim q(\cdot|y)$ for $j\neq t$. 

Let 
$$X_k=\frac{1}{K}\sum_{k=1}^K \frac{p(\theta_k|y)}{q(\theta_k|y)} 
=\frac{1}{K}  \frac{p(\theta_t|y)}{q(\theta_t|y)}    +  \frac{1}{K}\sum_{k\neq t} \frac{p(\theta_t|y)}{q(\theta_t|y)}.
$$

The first term $$\frac{1}{K}\sum_{k=1}^K \frac{p(\theta_t|y)}{q(\theta_t|y)} \to 0.$$

The second term, call it $\Delta_2$   is the mean of IID means as $\theta_j \sim q(\cdot|y)$ for $j\neq t$,  
the law of large number yields 
$$\Delta_2\coloneqq \frac{1}{K}\sum_{k\neq t}^K \frac{p(\theta_t|y)}{q(\theta_t|y)}, \quad \Delta_2  
\to \E_{x\sim q(x|y)}\frac{p(x|y)}{q(x|y)} =1. $$
This proves that $$\Delta \to 0, ~ \mathrm{as}~ K\to \infty.$$ 
Hence,  $$D_4 -  \KL \left(  p(\theta|y) ||  q(\theta|y)\right) \to 0,$$
which finished the first part of the proof.

Now let's derive the rate.  Under regularity conditions, for example,  the variance of density ratio is bounded, i.e., there exists a constant $C<\infty$, such that for all $y$,
$$\mathrm{Var}_{\theta_t\sim q(\theta_t|y)}\left( \frac{p(\theta_t|y)}{q(\theta_t|y)} \right) < C,$$
then CLT holds, such that the second term above has a normal limit, 
$$\sqrt K (\Delta_2-1) \to \n(0, \sigma^2),  ~~\mathrm{in~ distribution},$$
where 
\begin{align*}
\sigma^2 &=  \mathrm{Var}_q( \frac{p(\theta_t|y)}{q(\theta_t|y)} )\\
&=\E_{\theta, y \sim q(\theta, y)} \left( \frac{p(\theta|y)}{q(\theta|y)}-1 \right)^2\\
&=\E_{y} \E_{\theta \sim q(\theta|y)} \left( \frac{p(\theta|y)}{q(\theta|y)}-1 \right)^2\\
&= \chi^2  \left( p(\theta|y) ~|| ~ q(\theta|y) \right),
\end{align*}
which is the definition of the conditional chi-squared divergence.

Consider a Taylor series expansion,  $\log(1+x) = x -\frac{1}{2}x^2 + o(x^2).$  Using the Delta method to express the log function and the Slutsky theorem to ignore the zero term,
we get 
$$K \E \log (\Delta_2) \to -\frac{\sigma^2}{2}.$$
Plug this in the definition of $\Delta$ we obtain the desired convergence rate,
$$D_4 =  \KL \left(  p(\theta|y) ||  q(\theta|y)\right)  -\frac{1}{2M} \chi^2  \left( p(\theta|y) ~|| ~ q(\theta|y) \right) + o(1/M). $$
This proves the claims in Theorem \ref{thm_asymptotic}.
\end{proof}

\subsection{Valid hypothesis testing (Theorem \ref{thm_test})}
It is not trivial to have a valid permutation test since we do not have IID examples. One naive permutation is to permutate all the labels $t$ (across batches $i$ ). The resulting permutation testing is not uniform under the null (Figure \ref{fig:pnull2}).

In contrast, when we permute the label, we only ask for within-batch permeation (permute the index $\{t_1, \dots, t_L\}$ in each batch). See Table \ref{fig:table_perm} for an illustration.
Let's prove this permutation is valid.

\begin{proof}
Under the null, $p(\theta|y)$= $q(\theta|y)$ almost everywhere. According to the label mapping $\Phi\in\mathbbm{F}$, label $t$ is independent of $\phi$ given $y$. We first show that label $t$ is independent of $\phi$.

In general conditional independence does not lead to unconditional independence. But because here we design the generating process by $\pi(y, t, \phi) =  \pi_Y(y) \pi_t(t)  \pi(\phi|y, \theta)$. Under the null we have $\pi(y, t, \phi) =  \pi_t(t)  (  \pi_Y(y) \pi(\phi|y))$. Hence $t$ needs to be independent of $\phi$. 

For any classifier $c$, because $c(\phi)$ is a function of $\phi$, $c(\phi)$ and $t$ are also independent.  Let $\pi_\Phi(\phi)$ be the marginal distribution of $\phi$ in $\pi$. 

Now we are computing the cross entropy  (with respect to the population $\pi$).
$\mathrm{LPD}= \sum_{j=1}^N U(c(\phi_n), t_n)$ where $U$ is the log score $U(P, x)= \log P(x)$.  It is a random variable because we have a finite validation-set. Now we are conducting a permutation of label $t$, the permuted label $\tilde t$ is an independent draw from the same marginal distribution of $t$, $\pi_t(t)$. 
Because of the independence,
$\E_{\phi, t}  U(c(\phi), t) = \E_{t} \E_{\phi} U(c(\phi), t)$, hence the permuted $\mathrm{LPD}_b \stackrel{d}{=} \mathrm{LPD}$, where $\mathrm{LPD}_b$ is the computed LPD in the $b$-th permutation. Therefore $\Pr(\mathrm{LPD} \leq x) = \Pr(\mathrm{LPD_b} \leq x)$. 
%Denote this CDF to be $F(x)$.Let $p^\infty = \Pr(\mathrm{LPD} \geq \Pr(\mathrm{LPD_b}).$ Let Then $\Pr(p^\infty\geq \alpha) =  \Pr(F (\mathrm{LPD}) \geq \alpha).$
In expectation (with an infinitely amount of repeated random permutation), this $p$-value will be uniform on [0,1].

In practice, the computed p-value from finite permutation can only take values on a finite set $\{ 0, 1/B, \dots, (B-1)/B, 1\}$. More precisely, under the null hypothesis, this permutation test p-value is \emph{discreetly-uniform} on this set. 
This is because for any $0\leq m\leq B$ 
$$\Pr(p=m/B) =  \int_{0}^1 {B \choose m} p^m (1-p)^{B-m}dp=1/(B+1), ~~\forall m.$$
Hence, the permutation $p$-value based on $B$ random permutations is uniformly distributed on the set $\{ 0, 1/B, \dots, (B-1)/B, 1\}$. 
\end{proof}

\begin{figure}
    \centering
    \includegraphics[width=3in]{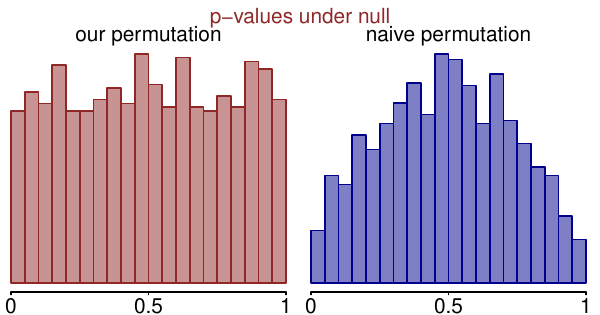}
    \caption{\em \small  Our designed permutation vs the naive permutation.}
    \label{fig:pnull2}
\end{figure}

\begin{table}
     \caption{Batch permutation of labels}
    \label{fig:table_perm}
    \begin{minipage}{.3\textwidth}
\emph{Original batch from one run} \\ 
\begin{tabular}{c|c}
label $t$ & features $\phi$ \\
\hline
$(\textcolor{red}{\theta}, \textcolor{brown}{y})$  & 1\\
$(\textcolor{blue}{\tilde{\theta}_{1}}, \textcolor{brown}{y})$ & 0\\
$(\textcolor{blue}{\tilde{\theta}_{2}}, \textcolor{brown}{y})$ & 0\\
$\cdots$  &$\cdots$ \\
$(\textcolor{blue}{\tilde{\theta}_{M}}, \textcolor{brown}{y})$& 0 \\
\hline
\end{tabular}
\end{minipage}
~
    \begin{minipage}{.25\textwidth}
\emph{\small a batch of permuted labels} \\ 
\begin{tabular}{c}
label $t$  \\
\hline
 0\\
 1\\
 0\\
$\cdots$ \\
 0 \\
\hline
\end{tabular}
\end{minipage}
~
\begin{minipage}{.27\textwidth}
\emph{\small another batch of permutation} \\ 
\begin{tabular}{c}
label $t$  \\
\hline
 0\\
 0\\
 0\\
$\cdots$ \\
 1 \\
\hline
\end{tabular}
\end{minipage}

\end{table}

\subsection{Why does classifier help test? (Theorem \ref{thm_sufficient})}
Let's recap the binary label-generating process:
\begin{enumerate}
    \item Sample $y \sim$ marginal $p(y)$,
    \item Sample $k \sim \mathrm{Bernoulli}(w)$, where $w_1 = 1/M$, $w_2 = (M-1)/M$. 
    \item Sample $\theta \sim \pi_k(\theta \vert z)$, where $\pi_0=p(\theta \vert y)$, $\pi_1=q(\theta \vert y)$.
    \item Return $(z,k,x)$
\end{enumerate}

Theorem \ref{thm_sufficient} state that  the optimal classifier has a sufficiency propriety in that (a) let $\hat c$ be the probability of label 1 in the optimal classifier as per \eqref{eq:div},  and let $\pi_c^p$ and $\pi_c^q$ be the one-dimensional distribution of this $\hat c(\phi)$ when $(\theta, y)$ is sampled from $p(\theta, y)$ or  from $p(y)q(\theta|y)$ respectively, then 
 (i) Conditional  on the summary  statistic $\hat c$, the label $t$ is independent of all features $\phi=(\theta, y)$. 
 (ii) There is no loss of information in divergence as the joint divergence is the same as the projected divergence,
 $D_1(p, q) = D_1(\pi_c^p, \pi_c^q)$.

\begin{proof}
There are three random variables, $\theta$, $y$, and $t$. The optimal classifier $\hat c$ is  the probability in this true joint:
$$
\hat c(\theta, y) = \Pr(t=1 | (\theta, y)).
$$

To show sufficiency or conditional independence, all we need to show is that, conditional on any given value of $\hat c$,  $\Pr (t=1 | (\theta, y), c) $ does not depend on $(\theta, y)$ (in other words, $\Pr (t=1 | (\theta, y), c)$ is a function of $c$ only). This becomes obvious as
$$
\Pr (t=1 | (\theta, y) , c(\theta, y)) 
=\Pr (t=1 | (\theta, y)) 
=\hat c(\theta, y).
$$

Now we prove that there is ``no loss'' in divergence.  
\begin{align*}
D_1(p, q) = w \KL ( p(\theta, y)  || r(\theta, y)) + (1-w)   \KL ( q(\theta, y)  || r(\theta, y))
\end{align*}

We express the first term in $\hat c$
\begin{align*}
  &\KL ( p(\theta, y)  ~||~ wp(\theta, y) +  (1-w)q(\theta, y) ) \\
  &=\KL  ( \pi(\theta, y|t=0)  ~||~  \pi(\theta, y))\\
    &=\KL(  \pi(\hat c |t=0 )   ~||~  \pi(\hat c )  )
- \KL(  \pi(     \theta, y |  \hat c,  t=0 )   ~||~  \pi(    \theta, y| \hat c )  )
\end{align*}
This steps uses the chain rule of KL divergence:  $\text{KL}[p(x, y) \mid q(x, y)]= \text{KL}[p(x) \mid q(x)] + \text{KL}[p(y \mid x) \mid q(y \mid x)]$

Using conditional independence: 
$$ \pi(     \theta, y |  \hat c,  t=0 )   =  \pi(    \theta, y| \hat c, t=1 )$$
Hence $\KL(  \pi(     \theta, y |  \hat c,  t=0 )   ~||~  \pi(    \theta, y| \hat c )  )=0$.
Therefore, 
$$
\KL ( p(\theta, y)  || r(\theta, y)) = \KL(  \pi(\hat c |t=0 )   ~||~  \pi(\hat c )  )
$$
where $\pi(c)= \pi(\hat c |t=0 ) + (1-w) \pi(\hat c |t=0 )$

Similarly,
$$
\KL ( q(\theta, y)  || r(\theta, y)) = \KL(  \pi(\hat c |t=1 )   ~||~  \pi(\hat c )  )
$$
This proves  $D_1(p, q) = D_1(\pi_c^p, \pi_c^q)$.

\end{proof}

\subsection{The maximum discriminative generator (Theorem \ref{thm:opt})}
We save the proof of Theorem \ref{thm:opt} in the end for its length.

The generator $\phi$ contains a few degrees of freedom: the number of classes $K$, the number of examples $L$, and how to design and the label-feature pairs. In the binary labeling:  $t_1=1, \phi_1=(\theta_1, y)$ and the remaining $t_k=0, \phi_k=(\theta_{k}, y)$ for $2 \leq k \leq {M+1}$.
The multi-class $\Phi^*$ assigns labels 1:$K$ as 
\begin{equation}
\Phi^*: t_k=k, \phi_k=( \mathrm {Perm}_{1 \to  k } (\theta_{1}, \dots,  \theta_{M}), y), 1 \leq k \leq K={M+1}.    
\end{equation}

 Before the main proof that the multiclass permutation creates the largest divergence, let's first convince that the multi-class classification produced higher divergence than the binary one.

In the binary classification with $M=1$ (one draw from $p$ and one draw from $q$) 
$$D_{1}(p, q)  = \frac{1}{2} \KL \left(p(\theta|y), \frac{p(\theta|y)+q(\theta|y)}{2} \right) +  \frac{1}{2} \KL \left( q(\theta|y), \frac{p(\theta|y)+q(\theta|y)}{2} \right).$$

In the multi-class classification with $M=1$, 
$$D_{4}(p, q)  =   \KL \left(p(\theta^1|y)q(\theta^2|y), \frac{p(\theta^1|y)q(\theta^2|y) + q(\theta^1|y)p(\theta^2|y)}{2} \right).$$
Use the joint KL $>$ marginal KL, we have 
$$D_{4}(p, q)  \geq    \KL \left(p(\theta^1|y), \frac{p(\theta^1|y) + q(\theta^1|y)}{2} \right).$$
Likewise, 
$$D_{4}(p, q)  \geq    \KL \left(q(\theta^2|y), \frac{p(\theta^2|y) + q(\theta^2|y)}{2} \right).$$
Add these two lines we obtain 
$$D_{4}(p, q)  \geq   D_{1}(p, q).$$

To prove that this multi-class permutation produced the uniformly largest divergence (across $M$, $K$, $p$, $q$) optimal, we organize the proof into lemmas 5 to 9. For notation brevity, we denote $\hat M \coloneqq M+1$ in these lemmas to avoid using index $M+1$.
 
\begin{lemma}\label{lm_com}
For an arbitrary integer ${L}$,  any given output space $\mathcal{Y}$, and any input space $\mathcal{X}$ that has at least ${L}$ elements, if there are two functions mapping $\mathcal{X}^{{L}}$ to $\mathcal{Y}$
$$
f_1, f_2: (x_1, \dots,  , x_{{{L}}}) \mapsto y\in  \mathcal{Y}.
$$
satisfying the following propriety: 
\begin{itemize}
    \item for any probability distribution $\pi$ on $\mathcal{X}$, when $x_1, \dots, x_n$ are ${{L}}$ iid random variables with law $\pi$,  $f_1(x_1, \dots,  , x_{{L}})$ has the same distribution as   $f_2(x_1, \dots,  , x_{{L}})$,
\end{itemize}
then there must exist a permutation of $1:{{L}}$, denoted by $\sigma(1:{{L}})$, such that 
$$f_2 (x_1, \dots,  x_{{L}}) = f_1(x_{ \sigma (1)}, \dots, x_{\sigma ({{L}})}).$$
\end{lemma}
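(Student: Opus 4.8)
The plan is to convert the purely distributional hypothesis into rigid combinatorial identities by probing with finitely supported distributions. Fix any $r \leq L$ distinct points $a_1, \dots, a_r \in \mathcal{X}$ (available since $|\mathcal{X}| \geq L$) and take $\pi = \sum_{i=1}^r p_i \delta_{a_i}$. When $x_1, \dots, x_L \stackrel{\mathrm{iid}}{\sim} \pi$, the mass that $f_1$ places on a value $y \in \mathcal{Y}$ is
\[
P_1(y; p) = \sum_{(i_1, \dots, i_L):\, f_1(a_{i_1}, \dots, a_{i_L}) = y} \prod_{j=1}^L p_{i_j},
\]
a homogeneous degree-$L$ polynomial in $(p_1, \dots, p_r)$, and similarly $P_2(y; p)$ for $f_2$. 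The hypothesis $f_1 \stackrel{d}{=} f_2$ forces $P_1(y; p) = P_2(y; p)$ for every $y$ and every $p$ in the open simplex; since both sides are homogeneous of the same degree and agree on an open subset of the hyperplane $\sum_i p_i = 1$, homogeneity lifts this to agreement as polynomials. Matching the coefficient of each monomial $\prod_i p_i^{m_i}$ then yields, for every occupation type $m = (m_1, \dots, m_r)$ with $\sum_i m_i = L$ and every $y$, the identity $\#\{\text{tuples of type } m \text{ with } f_1 = y\} = \#\{\text{tuples of type } m \text{ with } f_2 = y\}$.

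First I would exploit the all-distinct type $m = (1, \dots, 1)$ on $a_1, \dots, a_L$. There the tuples are exactly the $L!$ permutations $(a_{\sigma(1)}, \dots, a_{\sigma(L)})$, $\sigma \in S_L$, so the identity says the multiset $\{f_1(a_{\sigma(1)}, \dots, a_{\sigma(L)}) : \sigma\}$ equals the multiset $\{f_2(a_{\sigma(1)}, \dots, a_{\sigma(L)}) : \sigma\}$. This pins down $f_2$ on the orbit of $(a_1, \dots, a_L)$ up to a relabeling of that orbit: there is some $\sigma_a \in S_L$ with $f_2(a_1, \dots, a_L) = f_1(a_{\sigma_a(1)}, \dots, a_{\sigma_a(L)})$. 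The crux that remains is to show that this local permutation $\sigma_a$ is independent of the chosen points, so that a single global $\sigma$ works for every input.

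I expect the globalization to be the main obstacle, and it is genuinely delicate: product-measure probing is blind to any rearrangement that permutes equiprobable tuples within one orbit, so the distributional hypothesis alone cannot force the $\sigma_a$ to coincide. The argument must therefore bring in the concrete structure of how the canonical map $f_1 = \Phi^*$ is assembled from coordinate permutations, linking the orbit of $(a_1, \dots, a_L)$ to that of a tuple sharing all but one coordinate and using the mixed occupation types (one repeated value) together with a limiting argument $a_{L+1} \to a_L$ to propagate consistency of $\sigma_a$ across orbits. Tuples with repeated entries would finally be recovered as limits of distinct-entry tuples, on which the established global $\sigma$ must persist by continuity. The polynomial coefficient-matching reduction is routine once set up; essentially all the difficulty is concentrated in this gluing step.
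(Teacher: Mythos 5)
Your coefficient-matching setup is sound and is in fact \emph{more} rigorous than the paper's own first step: the paper probes with a single crafted distribution $p_m \propto 2^{-L^{m-1}}$ so that a tuple's probability encodes its occupation type in the exponent, whereas you vary $p$ over the simplex and match monomial coefficients. Either route yields the per-occupation-type counting identity and, from the all-distinct type, an orbit-local permutation $\sigma_a$ with $f_2(a_1,\dots,a_L)=f_1(a_{\sigma_a(1)},\dots,a_{\sigma_a(L)})$. Your version actually repairs a looseness in the paper's: with a single $p$, the atom of the law at a value $y$ is a \emph{sum} over occupation types, and distinct multiplicity vectors can produce equal sums of the $2^{-e(c)}$ terms, so the paper's assertion that the target probability ``is only attained when'' the type is all-distinct needs exactly the coefficient-matching you supply.

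However, your proof stops at the point where the lemma must be finished, and the obstruction you flag is not merely delicate --- it is fatal to the statement as written. Take $L=2$, $\mathcal{X}=\{a,b,c\}$, $\mathcal{Y}=\mathcal{X}^2$, $f_1=\mathrm{id}$, and let $f_2(x_1,x_2)=(x_1,x_2)$ when both coordinates lie in $\{a,b\}$ and $f_2(x_1,x_2)=(x_2,x_1)$ otherwise. Since $f_2$ acts on each unordered fiber $\{(u,v),(v,u)\}$ by either the identity or the swap, both of which preserve $\pi\otimes\pi$, we get $f_1(X)\stackrel{d}{=}f_2(X)$ for \emph{every} $\pi$, yet no single $\sigma\in S_2$ gives $f_2=f_1\circ\sigma$ (the identity fails at $(a,c)$, the swap fails at $(a,b)$). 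So the globalization cannot be extracted from the distributional hypothesis alone; the paper's own proof has the identical hole, passing from one tuple to all tuples with the phrase ``because the choice of vector is arbitrary.'' Your proposed retreat --- using the concrete permutation structure of $\Phi^*$, for which only orbit-wise information is needed downstream --- is the right repair, but as written neither your argument nor the paper's proves the lemma in the stated generality; one must either add a hypothesis forcing orbit-to-orbit consistency of $\sigma_a$ or weaken the conclusion to the orbit-local statement, which is all that the subsequent count of distributionally distinct permutations actually uses.
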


\begin{proof}
For any ${{L}}$ distinct values $a_1, \dots, a_{{L}}$, $a_i \in \mathcal{X}$, let $\pi$ be a mixture distribution of ${{L}}$ delta functions:
$$\pi = \sum_{m=1}^{{L}} \delta(a_m) p_m,$$
where the $m$-th mixture probability  is 
$$p_m = C \left(\frac{1}{2}\right)^{{{L}}^{m-1}}, ~~ C^{-1} = \sum_{m=1}^{{L}} \left(\frac{1}{2}\right)^{{{L}}^{m-1}}.$$
$C$ is chosen such that $\sum_{m=1}^{{L}} p_m=1.$

Now that $x_1, \dots, x_{{L}}$ are ${{L}}$ IID random variables from this $\pi$,
$f_1 (x_1, \dots,  x_{{L}})$ is also a mixture of delta functions, 
For any sequence of input indices $(u_1, \dots, u_{{L}}) \in \{1:{{L}}\}^{{L}}$, 
\begin{align}
\Pr(f_1 (x_1, \dots,  , x_{{L}}) = f_1 (a_{u_1}, \dots,  , a_{u_{{L}}}))
&=\prod_{m=1}^{{L}} ( {(C/2^{{{L}}^{m-1}})}^{\sum_{j=1}^{{L}} 1(u_j=m)} )  \nonumber \\
&= C^{{L}} (\frac{1}{2})^{\sum_{m=1}^{{L}} \left( {\sum_{j=1}^{{L}} 1(u_j=m)} {{L}}^{m-1}\right) }, \label{eq_prob_comb} 
\end{align}
in which the power index can be written as 
$$ \left(\sum_{j=1}^{{L}} 1(u_j=1), \dots, \sum_{j=1}^{{L}} 1(u_j={{L}} ) \right)_{{L}} \coloneqq \sum_{m=1}^{{L}} \left( {\sum_{j=1}^{{L}} 1(u_j=m)} {{L}}^{m-1}\right)  $$ 
as an ${{L}}$-decimal-integer.

Next, we study the law of $f_2 (x_1, \dots,  , x_{{L}})$:
$$
\Pr(f_2 (x_1, \dots,  , x_{{L}}) = f_2 (a_{1}, \dots,  , a_{{{L}}}))
= C^{{L}} (\frac{1}{2})^{(1,1,\dots, 1)_{{L}}}.
$$

Because $f_2 (x_1, \dots,  , x_{{L}})$ and $f_1 (x_1, \dots,  , x_{{L}})$ have the same distribution, $f_2 (a_{1}, \dots,  , a_{{{L}}}))$ needs to match the value at which $f_1 (x_1, \dots,  , x_{{L}})$ has probability $C^{{L}} (\frac{1}{2})^{(1,1,\dots, 1)_{{L}}}$ to attain. 
Comparing with \eqref{eq_prob_comb}, 
this probability is only attained when $\sum_{j=1}^{{L}} 1(u_j=m) = 1$, $\forall m$. That is, there exists a $\sigma$, a permutation of $1:{{L}}$, such that ${u_1, \dots, u_{{L}}}=  \sigma (1,2, \dots, {{L}})$.

Matching the value of $f_2 (x_1, \dots,  , x_{{L}})$ we obtain 
$$f_2 (a_1, \dots,  a_{{L}}) = f_1(a_{ \sigma (1)}, \dots, a_{\sigma ({{L}})}).$$

Because the choice of vector $(a_1, \dots, a_{{L}})$ is arbitrary, we have 
$$f_2 (x_1, \dots,  x_{{L}}) = f_1(x_{ \sigma (1)}, \dots, x_{\sigma ({{L}})}).$$
\end{proof}

Because augmentation increases divergence, for the purpose of finding the largest divergence, to produce the largest divergence, we only need to consider an augmented generator that includes all $y$
  $$\Phi^{aug}:  (y,  \theta_{1, \dots, {{L}}})  \mapsto  \{ ((\phi_1, y) , t_1), \dots,  ((\phi_K, y), t_K) \}.$$
It is enough to consider the generator of which $\phi_k = \phi_k (\theta_1, \dots, \theta_{{L}})$ are $K$ functions of $(\theta_1, \dots, \theta_{{{L}}})$.

\begin{lemma}
For any augmented generator $\Phi^{arg}$ satisfies $\mathbbm{F}$, the null,  there must exists $K$ permutations $\sigma_1(1:{{L}}), \dots,  \sigma_{K}(1:({{L}}))$, with the convention $\sigma_1(1:{{L}})= 1:{{L}}$, such that 
$$\phi_k(\theta_1, \dots, \theta_{{{L}}})=  \phi_1 ( \theta_{\sigma_k (1:({{L}}))}).$$
\end{lemma}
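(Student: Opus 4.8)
The plan is to reduce this statement entirely to Lemma \ref{lm_com}, the combinatorial lemma just established, which asserts that two functions on $\mathcal{X}^{L}$ inducing the same pushforward law under \emph{every} i.i.d. input distribution must coincide up to a permutation of their arguments. Thus the only real work is to manufacture, for each label $k$, the distributional hypothesis required by that lemma out of the membership constraint $\Phi^{aug}\in\mathbbm{F}$ evaluated under the null, with $f_1=\phi_1$ and $f_2=\phi_k$.

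First I would unpack what $\mathbbm{F}$ together with the null $p(\theta|y)=q(\theta|y)$ means for the feature functions. By the reduction made just above the lemma it suffices to treat each $\phi_k$ as a function $\phi_k(\theta_1,\dots,\theta_{L})$ of the parameter draws alone (the shared $y$ is appended to every example and so cannot break the permutation structure). Fixing $y$ and imposing $p(\cdot|y)=q(\cdot|y)=:\pi$, the $L$ draws $\theta_1,\dots,\theta_{L}$ are i.i.d. from $\pi$, and by the label-generating process the conditional law of $\phi$ given $(y,t=k)$ is exactly the pushforward of $\pi^{\otimes L}$ under $\phi_k$. The defining property of $\mathbbm{F}$ --- that $\phi\mid y$ is independent of $t$, equivalently that $\phi\mid(y,t)$ has the same distribution for every label --- therefore says precisely that $\phi_k(\theta_{1:L})$ and $\phi_1(\theta_{1:L})$ are equal in distribution whenever $\theta_{1:L}$ are i.i.d.\ from $\pi$.

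The key second step is to promote this from one distribution to all of them. Because $\mathbbm{F}$ is a structural requirement on the map $\Phi$ --- it must certify the conditional independence for \emph{every} admissible pair with $p(\theta|y)=q(\theta|y)$ --- and the common posterior $\pi=p(\cdot|y)$ may be an arbitrary distribution on $\Theta$ as the model and $y$ vary, I obtain that for every distribution $\pi$ on $\mathcal{X}=\Theta$, $\phi_k$ and $\phi_1$ share the same pushforward law under $\pi^{\otimes L}$. This is exactly the hypothesis of Lemma \ref{lm_com} (the mild assumption that $\Theta$ contains at least $L$ points holds for any non-degenerate, e.g.\ continuous, parameter space). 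Applying the lemma yields a permutation $\sigma_k$ of $1{:}L$ with $\phi_k(\theta_{1:L})=\phi_1(\theta_{\sigma_k(1:L)})$ as a functional identity, and taking $\sigma_1=\mathrm{id}$ installs the stated convention and closes the argument.

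I expect the main obstacle to be precisely the quantifier in this second step: arguing rigorously that the distributional identity holds for all $\pi$ rather than only for laws realizable as a genuine posterior. The cleanest route is to read $\mathbbm{F}$ as quantifying over all $(p,q)$ with $p=q$, so that one is free to select a model whose posterior at some $y$ equals any desired target; degenerate or finitely-supported $\pi$ --- such as the delta-mixtures used inside the proof of Lemma \ref{lm_com} --- are then reached either directly or by a limiting construction. A secondary point worth stating explicitly is the already-noted reduction that appending the shared $y$ to every $\phi_k$ leaves the permutation structure intact, justifying the reasoning about the $\theta$-functions $\phi_k(\theta_{1:L})$ in isolation.
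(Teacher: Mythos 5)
Your proposal is correct and takes essentially the same route as the paper, whose entire proof of this lemma is the single line ``Use Lemma~\ref{lm_com} for $(K-1)$ times,'' i.e.\ the pairwise application of the combinatorial lemma to $(\phi_1,\phi_k)$ that you describe. Your expansion of why the hypotheses of that lemma are met --- in particular that $\mathbbm{F}$ must certify the conditional independence for every pair with $p=q$, so the equality in law of $\phi_k(\theta_{1:L})$ and $\phi_1(\theta_{1:L})$ holds for every input distribution $\pi$, including the discrete delta-mixtures used inside Lemma~\ref{lm_com} --- supplies detail the paper leaves implicit.
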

\begin{proof}
    Use Lemma (\ref{lm_com}) for $(K-1)$ times.
\end{proof}

\begin{lemma}
For any augmented generator $\Phi^{arg}$ satisfies $\mathbbm{F}$, all feature-label generator can be replaced by a permutation,  $\phi_k(\theta_1, \dots \theta_{{L}}) =  (\theta_{\sigma_k(1)}, \dots \theta_{\sigma_k({{L}})})$, while the divergence does not decrease.
\end{lemma}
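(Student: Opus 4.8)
The plan is to read the replacement as a data-processing step and to invoke the contraction property of the classification divergence, namely item 5 (``transformation decreases divergence'') of Theorem~\ref{thm_gen}. By the preceding lemma, any admissible augmented generator $\Phi^{aug}\in\mathbbm{F}$ can be written, up to a common post-processing map $\phi_1$, as
\[
\phi_k(\theta_1,\dots,\theta_{{L}}) = \phi_1\bigl(\theta_{\sigma_k(1)},\dots,\theta_{\sigma_k({{L}})}\bigr),\qquad 1\le k\le K,
\]
with $\sigma_1=\mathrm{id}$. I would introduce the competing ``pure permutation'' generator $\Psi$ obtained by replacing $\phi_1$ with the identity, so that its class-$k$ feature is $\psi_k=(\theta_{\sigma_k(1)},\dots,\theta_{\sigma_k({{L}})})$. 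The goal is then to prove $D^{\mathrm{opt}}(p,q,\Psi)\ge D^{\mathrm{opt}}(p,q,\Phi^{aug})$, which is exactly the claim that dropping $\phi_1$ does not decrease the divergence.

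First I would record that $\Psi\in\mathbbm{F}$, so that the replacement stays admissible: under the null $p(\theta|y)=q(\theta|y)$ the draws $\theta_1,\dots,\theta_{{L}}$ are i.i.d.\ given $y$, hence every permuted vector $\psi_k$ has the same conditional law given $y$, and therefore $\psi\mid y$ is independent of $t$. Next I would express both divergences through their class-conditional feature laws. Writing $\pi_k^{\Psi}(\cdot\mid y)$ for the law of $\psi_k$ given $y$ and $\pi_k^{\Phi}(\cdot\mid y)$ for the law of $\phi_k$ given $y$, the display above says precisely that $\pi_k^{\Phi}(\cdot\mid y)$ is the pushforward of $\pi_k^{\Psi}(\cdot\mid y)$ under the single map $g\coloneqq\phi_1$, applied identically across all classes $k$.

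The key step is then immediate. Since Theorem~\ref{thm:divergece2} gives the optimal divergence as $\sum_k w_k\,\KL\bigl(\pi_k \,\Vert\, \sum_j w_j\pi_j\bigr)$, and $g$ does not depend on $k$, item 5 of Theorem~\ref{thm_gen} (the conditional data-processing inequality, applied for each fixed $y$ and then averaged over $y$) yields $D^{\mathrm{opt}}(p,q,\Phi^{aug})\le D^{\mathrm{opt}}(p,q,\Psi)$. Heuristically, the permutation generator $\Psi$ hands the optimal classifier the raw permuted draws, and $\Phi^{aug}$ only ever shows it a fixed function of those draws; passing less information cannot help discrimination.

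The main thing to watch is the bookkeeping that makes the contraction applicable. The inequality holds only because the post-processing $g=\phi_1$ is the \emph{same} map for every label $k$: a class-dependent $g_k$ could in principle raise the divergence, and this is exactly where the previous lemma is essential, since it forces the common-$\phi_1$ structure. One must also note that $g$ acts only on the $\theta$-coordinates while leaving $y$ untouched, so that the conditional form of item 5 applies at each fixed $y$ before averaging. Once this common-map structure is isolated, no further computation is required.
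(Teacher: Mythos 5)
Your proposal is correct and follows essentially the same route as the paper: both identify $\phi_k$ as a fixed, class-independent post-processing $\phi_1$ of the raw permuted draws $(\theta_{\sigma_k(1)},\dots,\theta_{\sigma_k(L)})$ and invoke the ``transformation decreases divergence'' property (item 5 of Theorem~\ref{thm_gen}) to conclude that reverting to the raw permuted features cannot decrease the divergence. Your write-up is more careful than the paper's one-line argument --- in particular the checks that the pure-permutation generator remains in $\mathbbm{F}$ and that the contraction requires the post-processing map to be the \emph{same} across classes are worthwhile details the paper leaves implicit.
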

\begin{proof}
    From the previous lemma,
    $$\phi_k(\theta_1, \dots, \theta_{{L}})=  \phi_1 ( \theta_{\sigma_k (1:{{L}})}).$$
 The augmented feature is now $(\phi_1 ( \theta_{\sigma_k (1:{{L}})}), y)$, a transformation of $(\theta_{\sigma_k (1:{{L}})}), y)$. Using the raw feature $(\theta_{\sigma_k (1:{L})}), y)$ keeps the divergence non-decreasing. 
\end{proof}

Now that we only want to consider permutation-based generators: there exists a $K$, and $K$ permutations $\sigma_k (1:{L})$.
$$\Phi^{aug}:  (y,  \theta_{1, \dots, {L}})  \mapsto  \{ ((\phi_1, y) , t_1), \dots,  ((\phi_k, y), t_k) \}.$$
$$\phi_k(\theta_1, \dots, \theta_{L})=  \theta_{\sigma_k (1:{L})}.$$

Given a $p\neq q$, any permutations $\theta_{\sigma_i (1:{L})}$ contains one copy from $p$ and $({L}-1)$ copies from $q$. It suffices to only consider  those permutation  $\theta_{\sigma_i (1:{L})}$ whose distributions are unique.

\begin{lemma}
Given any $p\neq q$ and ${L}$ fixed, assuming all  $\theta_{\sigma_i (1:{L})}$ has different distributions, then $D(q,p, \Phi^{arg})$ is an increasing function on $K$.
\end{lemma}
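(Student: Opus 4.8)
The plan is to reduce the claim to a monotonicity statement about the uniform-weight generalized Jensen–Shannon divergence of the class-conditional laws, and then to settle that monotonicity using only concavity of entropy together with the coordinate-permutation symmetry that the IID $q$-draws provide. First I would invoke the preceding lemmas (which already reduce us to augmented permutation generators) and Theorem~\ref{thm:divergece2} to write the divergence in closed form. After this reduction, class $k$ emits the feature $\phi_k = \theta_{\sigma_k(1:L)}$ together with $y$; since exactly one coordinate carries the prior draw (law $p(\cdot\vert y)$) and the remaining $L-1$ coordinates carry IID draws from $q(\cdot\vert y)$, the $y$-averaged law $\pi_k$ depends only on the position $i_k\in\{1,\dots,L\}$ into which the prior draw is placed. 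Writing $\rho_i$ for the $y$-averaged law with the prior draw in position $i$, the hypothesis that all $\theta_{\sigma_i(1:L)}$ have distinct distributions says precisely that the $K$ classes occupy $K$ distinct positions. With one example per class per run, $w_k=1/K$, so Theorem~\ref{thm:divergece2} gives
\begin{equation*}
D(q,p,\Phi^{aug}) = \frac{1}{K}\sum_{k=1}^{K}\KL\!\left(\rho_{i_k}\ \Big\Vert\ \bar\rho\right),\qquad \bar\rho=\frac{1}{K}\sum_{k=1}^{K}\rho_{i_k},
\end{equation*}
the uniform generalized JSD of the chosen $\rho_{i_k}$.

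The key observation is a symmetry. Any coordinate permutation $\tau$ of $\{1,\dots,L\}$ sends $\rho_i$ to $\rho_{\tau(i)}$, because relabeling coordinates is measure preserving with unit Jacobian and the $q$-coordinates are exchangeable; hence differential entropy is invariant and the family $\{\rho_i\}_{i=1}^{L}$ is permuted transitively. Two consequences follow. First, the value $D(q,p,\Phi^{aug})$ depends only on the number of classes $K$, not on which $K$ positions are used, which makes the statement ``increasing on $K$'' well posed; write it $g(K)$. Second, I would record the entropy decomposition $g(K)=H(\bar\rho)-H(\rho_1)$, where the conditional entropy $H(\Phi\mid T)=\frac{1}{K}\sum_k H(\rho_{i_k})$ collapses to the single value $H(\rho_1)$ since all $\rho_i$ are coordinate-permutations of $\rho_1$. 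Thus $g$ inherits the behavior of the mixture entropy $H(\bar\rho)$, with $H(\rho_1)$ a $K$-independent constant.

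To obtain $g(K+1)\ge g(K)$, I would fix $K+1$ distinct positions with mixture $\bar\rho_{K+1}$ and consider its $K+1$ leave-one-out sub-mixtures $\bar\rho^{(-m)}$, each an average over a $K$-subset of positions, satisfying $\bar\rho_{K+1}=\frac{1}{K+1}\sum_m \bar\rho^{(-m)}$. By the transitive symmetry every $\bar\rho^{(-m)}$ has entropy $H(\bar\rho_K)$, so concavity of differential entropy (Jensen) yields $H(\bar\rho_{K+1})\ge \frac{1}{K+1}\sum_m H(\bar\rho^{(-m)})=H(\bar\rho_K)$, hence $g(K+1)\ge g(K)$. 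Strictness follows from strict concavity, since distinctness of the $\rho_i$ forces the $\bar\rho^{(-m)}$ not all to coincide.

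The main obstacle I anticipate is not conceptual but the regularity needed to manipulate the differential entropies $H(\rho_i)$ and $H(\bar\rho)$, namely enough integrability that the cancellation $g(K)=H(\bar\rho)-H(\rho_1)$ and the Jensen step are legitimate. If integrability of the entropies is in doubt, I would avoid differential entropy entirely and push the same two ingredients---the conditional-entropy collapse by symmetry and the leave-one-out averaging identity---directly through the jointly convex functional $\nu\mapsto\KL(\rho\,\Vert\,\nu)$, at the cost of a little more bookkeeping but with no change to the argument's structure.
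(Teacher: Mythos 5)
Your proposal is correct and reaches the conclusion by a genuinely different and more complete route than the paper. The paper's own proof is a single sentence invoking the fact that joint KL dominates the KL of sub-coordinates (a data-processing/marginalization appeal), with no details on how the $K$-class and $(K{+}1)$-class problems are actually nested --- the two mixtures $\bar\rho_K$ and $\bar\rho_{K+1}$ differ, so the reduction is not a literal coordinate marginalization, and the paper leaves that gap implicit. You instead (i) observe that the coordinate-permutation group acts transitively on the laws $\rho_i$ and on $K$-subsets of positions, which both makes ``a function of $K$'' well posed (a point the paper silently assumes) and collapses the conditional entropy term to a $K$-independent constant; (ii) rewrite the uniform generalized Jensen--Shannon divergence as $H(\bar\rho_K)-H(\rho_1)$; and (iii) obtain monotonicity from the leave-one-out identity $\bar\rho_{K+1}=\frac{1}{K+1}\sum_m\bar\rho^{(-m)}$ plus concavity of entropy, with strictness from distinctness of the $\rho_i$. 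This is a self-contained argument where the paper offers only a gesture, and it buys the well-posedness and strictness statements for free. The one caveat is the one you already flag: the decomposition into differential entropies needs integrability; the cleanest repair is to note that the Jensen gap in step (iii) is itself $\frac{1}{K+1}\sum_m\KL\left(\bar\rho^{(-m)}\,\Vert\,\bar\rho_{K+1}\right)$, so the whole argument can be phrased in terms of KL divergences to mixtures (which are finite here, each component being absolutely continuous with bounded density ratio at most $K{+}1$ relative to the mixture) without ever writing an entropy alone.
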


\begin{proof}
Using that joint KL is always not smaller than the KL divergence of sub-coordinates.
\end{proof}

\begin{lemma}
Among all permutations $\sigma(\theta_1, \dots \theta_{L})$, the maximum number of distinct distributions are $K={L}$.
\end{lemma}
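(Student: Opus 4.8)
The plan is to reduce the question to counting how many positions the single prior draw can occupy after a permutation. First I would fix notation consistent with the appendix: among the coordinates $\theta_1,\dots,\theta_L$ (here $L=\hat M=M+1$), the prior draw $\theta_1$ is distributed as $p(\cdot\,|\,y)$ while the remaining $L-1$ coordinates are i.i.d.\ from $q(\cdot\,|\,y)$. For any permutation $\sigma$ of $1{:}L$, the permuted vector $(\theta_{\sigma(1)},\dots,\theta_{\sigma(L)})$ places its unique $p$-distributed entry at position $k^\star\coloneqq\sigma^{-1}(1)$, so its joint law is exactly
\[
 p(\theta_{k^\star}\,|\,y)\prod_{j\neq k^\star} q(\theta_j\,|\,y).
\]

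For the upper bound, I would simply note that the display above depends on $\sigma$ only through $k^\star=\sigma^{-1}(1)$, which takes values in the finite set $\{1,\dots,L\}$; hence at most $L$ distinct joint distributions can occur as $\sigma$ ranges over all $L!$ permutations, i.e.\ $K\leq L$. To see that this bound is attained whenever $p\neq q$, I would show the $L$ candidate laws indexed by $k^\star=1,\dots,L$ are pairwise distinct: for $m\neq m'$, the marginal of the $m$-th coordinate equals $p(\cdot\,|\,y)$ under the law with $k^\star=m$ but equals $q(\cdot\,|\,y)$ under the law with $k^\star=m'$, and these differ since $p\neq q$ for a.s.\ $y$. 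Any family of permutations sending the prior index to each of the $L$ slots---for instance the maps $\mathrm{Perm}_{1\to k}$ used to define $\Phi^*$---therefore realizes all $L$ distinct distributions, so the maximum is exactly $L=K$.

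I expect the distinctness argument to be the only delicate point, since it is where the hypothesis $p\neq q$ enters and where one must exclude accidental coincidences among the $L$ joint laws; the one-coordinate marginalization resolves it cleanly, but it should be phrased for a.s.\ $y$ to match the conditional-divergence conventions used throughout the paper. The counting half of the statement is immediate once the ``position of the prior coordinate'' reformulation is in hand, and it is this reformulation---rather than any computation---that is the crux of the lemma.
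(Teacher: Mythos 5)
Your proof is correct and follows essentially the same route as the paper's: both arguments rest on the observation that the joint law of the permuted vector depends only on which slot receives the single $p$-distributed coordinate, the paper phrasing this as quotienting the $L!$ permutations by the $(L-1)!$ rearrangements of the i.i.d.\ $q$-coordinates. Your explicit marginalization argument for pairwise distinctness of the $L$ candidate laws when $p\neq q$ is a detail the paper leaves implicit, and is a welcome addition rather than a departure.
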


\begin{proof}
The total number of permutations is $L!$.
Because $\theta_{2:L}$ are IID given $y$, the permutation of index $2:L$ does not change the distribution. 
When $p\neq q$, the total number of distributionally distinct permutations is $L!/(L-1)!=L= M+1$.
\end{proof}

It is clear that the proposed multi-class permutation $\Phi^*$ attains this maximum number of distinct distributions, which proves its optimality among all generators from all previous lemmas, thereby the proof of our Theorem \ref{thm:opt}.

\end{document}